\newtheorem{thm}{Theorem}
\newtheorem{conj}{Conjecture}
\newtheorem{proof}{Proof}
\title{A Theory of Inference Compute Scaling: Reasoning through Directed Stochastic Skill Search}
\author{
\IEEEauthorblockN{Austin R.\ Ellis-Mohr, Anuj K.\ Nayak, and Lav R.\ Varshney}
\IEEEauthorblockA{University of Illinois Urbana-Champaign, Urbana, IL, USA}
\IEEEauthorblockA{\{austine4, anujk4, varshney\}@illinois.edu}
}
\begin{document}
\thispagestyle{plain}
\pagestyle{plain}
\maketitle

\begin{abstract}
Large language models (LLMs) demand considerable computational, energy, and financial resources during both training and deployment. While scaling laws for training have guided much of the field’s recent progress, inference costs now represent a significant and growing component of the overall resource burden, particularly for reasoning-focused models. Existing characterizations of compute-optimality that consider model size, dataset size, and inference tokens in isolation or in fixed combinations risk overlooking more efficient operating points. We introduce directed stochastic skill search (DS3), a general framework that represents inference as stochastic traversal over a learned skill graph. From a simplified yet expressive instantiation, we derive closed-form expressions for task success and compute cost across a wide range of inference strategies---including chain-of-thought (CoT) and tree-of-thought (ToT)---enabling comparative analysis as a function of task difficulty and model capability. To that end, we extend a prior first-principles tripartite graph framework of LLM training to incorporate inference, and separately bridge DS3 with empirical methods that characterize LLM scaling behavior. We theoretically recover empirically observed patterns, including: linear accuracy scaling with logarithmic compute; variation in preferred inference strategies as a function of task difficulty and model capability; emergent behavior elicited by reasoning even when performance plateaus under parameter scaling; and both best-of-$N$ (Bo$N$) and majority voting behavior captured within a unified analytical framework. By explicitly characterizing training-inference interdependencies, our framework deepens theoretical understanding and supports principled algorithmic design and resource allocation.
\end{abstract}

\section{Introduction}
Large language model (LLM) performance on tasks has been shown to improve by scaling the computation spent on learning (i.e., training compute), driven by scaling up both model size and dataset volume \cite{Kaplan2020, Hoffmann2022}. This scaling produces progressively richer and more structured internal representations, enabling models to better capture abstract concepts and generalize to unseen tasks \cite{Finzi2025computeoptimalllmsprovablygeneralize}. As a result, the training compute used for frontier models has grown by an estimated 4--5$\times$~per year \cite{EpochAI2024computegrows}. Concurrently, algorithmic improvements have shifted the compute–performance frontier \cite{Pilz2024}, reducing the physical compute required to reach a given performance level by roughly 2–6$\times$~annually \cite{Hernandez2020, Ho2024}.

Performance has also been shown to scale with computation spent on reasoning (i.e., inference compute, also referred to as test-time compute, search, or thinking). Therefore, reasoning models have been adopted by industry leaders including Anthropic \cite{Anthropic2025ExtendedThinking}, DeepSeek \cite{DeepSeekAI2025}, Google DeepMind \cite{Kavukcuoglu2025Gemini25}, IBM \cite{IBM2025Granite32}, Microsoft \cite{Microsoft2025phi}, OpenAI \cite{OpenAI2024o1, OpenAI2025o3}, and xAI \cite{xAI2025Grok3}. As inference-time compute is scaled, a model's capacity to dynamically traverse and leverage its learned internal structure may be enhanced.

Yet, the performance gains from inference scaling come at a cost. Whereas training involves a substantial but largely one-time expenditure, inference uses energy throughout the model’s operational lifetime. For frontier LLMs, inference compute often substantially outweighs training compute\footnote{Although a 2022 study on Hugging Face's BLOOM 176B parameter LLM has been cited to the contrary \cite{DeVries2023growingenergy}, this interpretation is misleading due to the light usage. The reported emissions were 50.5~tonnes of CO\textsubscript{2}-eq for training (including manufacturing and operational energy use) and at least 368~mg per inference when eliminating the estimated memory overhead~\cite{Luccioni2023bloom}---which implies that 137 million inferences would match total training emissions. At production scale (billions of queries per day), inference rapidly dominates carbon footprint and cost.}. Google reported that 60\% of their total machine learning energy use between 2019 and 2021 was attributable to inference \cite{Patterson2022}, while Meta (formerly Facebook AI) reported a 65\% inference share for their LLM in 2022 \cite{Wu2022sustainable}. Although the cost per unit of inference has fallen---by an estimated 90\% over 18 months---and the cost per task is likewise decreasing \cite{OpenAI2025o3}, with further reductions expected due to algorithmic advances, intensified competition, and the shift from general-purpose GPUs to custom ASICs \cite{Barclays2024}, the total burden of inference is likely to continue to grow. 
Such efficiency gains may ultimately drive greater inference demand, fueled both by enhanced capabilities and by broader sociotechnical dynamics \cite{Luccioni2025Jevons}. This mirrors the Jevons Paradox, the 19th-century observation that greater efficiency in coal use led to increased overall consumption \cite{Jevons1866}. With usage now scaling to billions of queries daily \cite{Verge2024chatgpt} and accelerating \cite{Verge2025chatgpt, DemandSage2025chatgpt}, the cumulative cost of inference is poised to grow substantially \cite{DeVries2023growingenergy}. This is further amplified by the inference scaling paradigm, where costs grow nonlinearly as task lengths increase. Improved model capabilities, multimodal models, and agentic workflows invite their use in increasingly demanding, multi-step tasks. Usage patterns, deployment environments, algorithmic structures, and hardware architectures all contribute to highly-variable energy profiles. Nonetheless, compared to training, inference has historically attracted less attention with respect to energetic costs \cite{Verdecchia2023greenAI}.

Compute scaling during both training and inference shapes model capabilities and forms a deeply interdependent system. Energy costs, architectural decisions, and performance tradeoffs all arise from these coupled dynamics. A unified perspective is essential to guide resource allocation, research priorities, and policy frameworks toward sustainable AI grounded in the co-evolution of training and inference.

\subsection{Games}
The domain of games offers an early window into how inference and training became increasingly interdependent in AI systems. Many early expert AI systems concentrated most of their compute on inference‑time search. For example, during play, IBM’s Deep Blue evaluated over one hundred million chess positions per second using brute-force tree search \cite{Campbell2002}. 
Exhaustive enumeration and handcrafted pruning heuristics stood in for generator and verifier models, both of which were not yet feasible with learning methods at the time. In contrast, DeepMind’s AlphaGo (2016) introduced a clear division of labor: a policy network acted as a generator, proposing promising actions, while a value network served as a verifier, scoring leaf positions \cite{Silver2016AlphaGo}. Trained through supervised and reinforcement learning, the networks were paired at inference time via Monte Carlo tree search. This allowed AlphaGo to achieve superhuman performance in Go---a game more combinatorially complex than chess---while evaluating orders of magnitude fewer positions than Deep Blue.

One year later, AlphaGo Zero collapsed the distinction between generator and verifier into a single neural network, trained entirely through reinforcement learning via self-play \cite{Silver2017AlphaGoZero}. Despite substantial base-level competence, it revealed a stark gap between the raw network and its performance when paired with search, underscoring the impact of inference-time compute. Moreover, performance scaled with thinking time and with stronger move generation enabled by training; notably, the scaling of performance with thinking time was stronger than for conventional engines \cite{Silver2017ChessShogiGo}. The ability to scale performance through both training and inference has since been demonstrated in other perfect \cite{Jones2021} and imperfect information games \cite{Brown2017}, including the multi-agent, natural language game Diplomacy \cite{Meta2022}.

Notably, AlphaGo Zero also exhibited emergent training dynamics: small increases in training time led to large improvements in performance, followed by plateaus \cite{Silver2017AlphaGoZero}.

\subsection{LLM Training Scaling}
\begin{figure}[htbp]
    \centering    \includegraphics[width=0.8\textwidth]{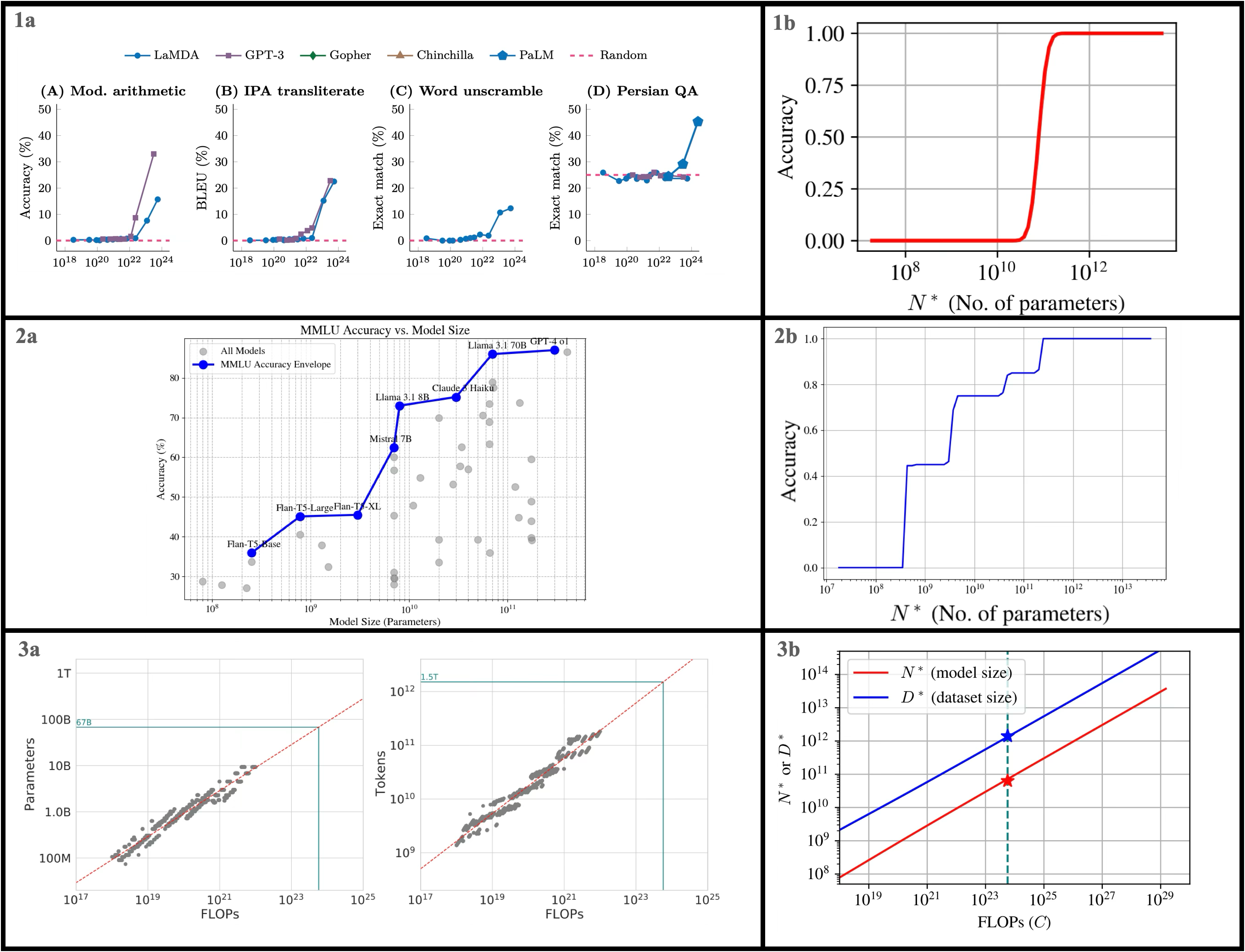}
    \caption{
    1.a. Empirical accuracy scaling on various tasks under increasing model size, showing sharp capability transitions for various models \cite{Wei2022Emergence}.
    1.b. Theoretical prediction of emergent behavior using the hierarchical skill–text tripartite graph framework introduced in \cite{Nayak2025} and adapted in this paper, with success rate transitioning sharply as model size crosses the task-specific threshold.
    2.a. Empirical accuracy on MMLU as a function of model size, with envelope highlighting best-reported performance \cite{MMLULeaderboard}.
    2.b. Theoretical plateauing as a function of model size due to a hierarchical distribution of tasks. Same framework as in 1.b \cite{Nayak2025}.
    3.a. Empirical Chinchilla scaling of model size (left) and dataset size (right) with training FLOPs, showing trends with consistent proportionality constants \cite{Hoffmann2022}.
    3.b. Theoretical training-time-compute-optimal model size and data sizes. Same framework as in 1.b \cite{Nayak2025}.
    }
    \label{fig:ChinEmergPlat}
\end{figure}

Emergence and plateauing are not unique to games; these phenomena have also been documented in LLMs \cite{Wei2022Emergence}. Abrupt performance jumps across tasks with increasing model size, a hallmark of emergence, are illustrated in Fig.~\ref{fig:ChinEmergPlat}.2a. Separately, plateauing, alongside emergence, is visible in the frontier performance trends \cite{MMLULeaderboard} on the Multi-task Language Understanding benchmark \cite{Hendrycks2021MMLU}, as shown in Fig.~\ref{fig:ChinEmergPlat}.3a. While the origins of such emergent behavior remain debated \cite{Schaeffer2023mirage}, a growing body of theoretical work attributes emergence to structured learning dynamics.

The quantization hypothesis posits that knowledge is acquired in discrete units called \textit{quanta}, and the number of quanta a model can learn is proportional to its size (i.e., number of parameters) \cite{Michaud2023}. Work in mechanistic interpretability supports this view: circuit-level analyses suggest that models acquire discrete knowledge features during training \cite{Ameisen2025Circuit}. Similarly, curriculum learning studies show that training on simpler examples (e.g., frequent words) before more complex ones (e.g., rare words or long sequences) enhances convergence and generalization—consistent with staged, sequential concept learning \cite{Bengio2009curriculum}. Prior works on skill-text graph representations for LLMs \cite{Arora2023, Michaud2023, Liao2025semantic, Nayak2025} and experimental results in the Skill-Mix evaluation framework \cite{Yu2023skillmix} demonstrate the emergence of compositional abilities in multi-skill tasks as model size increases. These findings parallel insights from cognitive science, where plateaus and emergent leaps in learning have long been observed \cite{Gallistel2004plateaus, Gray2017plateausandleaps}. Recent theoretical work explicitly characterizes these transitions using discrete task difficulty regimes, in which learners remain at a plateau until they cross a difficulty threshold, triggering sudden performance gains \cite{Lu2025plateausandleaps}, supporting a view of skill structure as hierarchical.

Similarly, a graphical framework in prior work by two of the present authors parameterizes the learning process via discrete task difficulties, treating language understanding as a hierarchical system of skills composed from more foundational prerequisite skills \cite{Nayak2025}. This framework explains a range of phenomena observed in LLMs---including emergent capabilities, performance plateauing, and training-compute-optimal scaling as demonstrated in Fig.~\ref{fig:ChinEmergPlat}.1-3b---by framing concept acquisition as an iterative peeling process, akin to the decoding of low-density parity-check codes over erasure channels \cite{Sourlas1989spin, Newell1994unified}, and by applying tools from random graph theory \cite{Posfai2016network} to characterize skill emergence as a function of model scale.

Building on this foundation  allows us to explore from first principles the intricate interaction between training and inference scaling paradigms. However, our framework is also compatible with existing scaling laws for pretraining loss, including their power-law structure and interpretation. While our focus will ultimately be on downstream task performance, we demonstrate how these results can be connected back to the well-characterized scaling of pretraining loss with model size and data.

\subsubsection{Pretraining Loss vs. Task Accuracy}
Many scaling laws focus on predicting pretraining loss and not task accuracy. Early work demonstrated a power-law decay of loss with scale \cite{Cortes1993scalinglaws, Hestness2017scalinglaws, Rosenfeld2019scalinglaws, Kaplan2020, Brown2020fewshotlearners, Hoffmann2022}. A widely used formulation expresses pretraining loss, $\mathcal{L}$, as the sum of an irreducible error, $E$ \cite{Hestness2017scalinglaws}---often attributed to the intrinsic entropy of natural text \cite{Shannon1951LanguageEntropy}---and diminishing return terms associated with model size and dataset size:
\begin{equation}\label{eqn:pretrainingloss}
\mathcal{L}=E+A/N^{\alpha}+B/D_{\text{tr}}^{\beta},
\end{equation}
where $N$ is the number of non-embedding model parameters, $D_{\text{tr}}$ is the number of training tokens, and $A,\ B,\ \alpha,\ \text{and }\beta$ are empirical fitting parameters.

These formulations require empirical fitting and do not offer an explanation for why these behaviors emerge. Recent theoretical work has sought to explain the origins of scaling laws \cite{Bahri2024explainingscaling} including through the aforementioned quantization hypothesis \cite{Michaud2023}.

Inference scaling is generally applied on top of pretrained models with the goal of improving downstream task performance, something only indirectly reflected in the loss itself. The connection between the two is not trivial (e.g., emergent accuracy jumps have been attributed to quantization artifacts in pretraining loss \cite{Schaeffer2023mirage}). As inference compute is increasingly scaled in practice, pretraining loss alone may offer limited guidance for system design, motivating the need to study task-level accuracy directly\footnote{One recent approach extends power-law scaling with a hypothetical logit-based adjustment to account for inference behavior \cite{Wu2024towards}. The formulation is illustrative without empirical fitting and may not readily generalize across inference strategies.}.

Two-step forecasting methods have emerged as a practical and effective strategy for characterizing downstream performance directly. These methods first fit a scaling law for pretraining loss and then apply a secondary mapping, typically a sigmoid function\cite{Meta2024llama3, Xiao2024densinglawllms, Owen2024downstreamtasksigmoid, Ruan2024observationalscalinglaws}:
\begin{equation}\label{eq:sig}
    \text{Accuracy} = \frac{a}{1+\exp(-b(\mathcal{L}_0-\mathcal{L}))}+d,
\end{equation}
where $a,\, b,\, d,\, \mathcal{L}_0$ are fitting parameters, or decaying exponential \cite{Gadre2024}---to translate loss into task performance. We demonstrate how this two-step structure can be utilized in our own framework, integrating it with a theoretical framework of task difficulty and skill composition. Other recent work has introduced flexible functional forms to capture non-monotonic behavior---such as phase transitions and double descent---across a wide range of metrics, including both loss and downstream tasks \cite{Caballero2023brokenneuralscalinglaws}.

While improvements in pretraining are generally associated with downstream accuracy gains \cite{Isik2025scalinglawsdownstreamtask}, downstream performance can plateau or saturate, even as upstream loss continues to improve---an effect observed in both vision models \cite{Abnar2021upstreamtodownstreamsaturation} and LLMs \cite{Diaz2024scalinglawsevaluations, Liu2025notjustscalinglaws, Liu2022betterdownstream}. When loss-based predictions falter, alternative indicators---such as evaluation-set size \cite{Diaz2024scalinglawsevaluations}, data-alignment features \cite{Liu2025notjustscalinglaws}, and model flatness that measures the sharpness of the loss landscape around the learned parameters \cite{Liu2022betterdownstream}---have been shown to correlate more strongly with downstream task accuracy. Moreover, overtraining during finetuning may even harm downstream performance while smoothly improving loss \cite{Springer2025overtrainedfinetuning, Chen2025rethinkingfinetuning}. Certain benchmarks display inverse or U-shaped trends, where larger models underperform smaller ones \cite{Lin2022inversescalingtruth, Parrish2022biggermorebias, Wei2023inversescaling}.

Chain-of-thought (CoT) reasoning---where models sequentially generate intermediate reasoning steps before arriving at a final answer---can be induced either explicitly through training \cite{DeepSeekAI2025, Muennighoff2025, Wang2025RLoneexample} or simply by prompting the model to ``think step by step'' \cite{Kojima2023}. CoT has been found to induce emergent capabilities on tasks that otherwise exhibit flat saturated downstream scaling and to shift inverse scaling trends toward positive or flat scaling \cite{Suzgun2022bigbenchCoT, Wei2023inversescaling}. Our theory provides an explanation for such shifts in the scaling curve, as discussed in Section~\ref{subsec:reasoning-emergence}. Recent work shows that increasing inference compute can degrade task accuracy despite improved cross-entropy loss, due to a mismatch: high-entropy outputs may aid sampling-based strategies but are penalized during training. Directly optimizing for the pass@$k$ metric yields better alignment and scaling \cite{Chen2025rethinkingfinetuning}. In fact, it has been demonstrated that treating reasoning as a random walk over clustered CoT states---a latent reasoning graph---can improve model performance via data augmentation, compared to direct supervised fine-tuning. This lends support to the view that graph-based structure plays a meaningful role in the reasoning process itself \cite{Wang2024reasoninggraph}. This further underscores the interdependence between training and inference scaling paradigms, and the need for their joint optimization.

\subsection{LLM Inference Scaling}
Recently, many in the AI community contend that simply enlarging LLMs now yields diminishing or plateauing gains \cite{Sutskever2024NeurIPSTalk, Byrnes2023, Ritter2024}, and that such limitations would persist without inference scaling \cite{Brown2024TEDTalk}. This is due in part to the limited availability of high-quality, unique data \cite{Muennighoff2023, Goyal2024datafiltering}, as well as the growing cost of achieving order-of-magnitude improvements \cite{Cottier2024}. In contrast, scaling inference offers a more sample-efficient axis of improvement \cite{Muennighoff2025, Wang2025RLoneexample}. Still, while scaling inference emerged naturally in domains like games, training has historically remained the dominant paradigm for improving LLM performance.

A useful distinction in this context is between sequential  and parallel inference scaling. Sequential inference scaling involves deepening the model’s reasoning within a single inference trace (e.g., CoT or iterative refinement), often involving longer or more structured reasoning traces. In contrast, parallel inference scaling involves generating multiple independent samples (e.g., best-of-$N$ (Bo$N$), which necessitates selecting among outputs, or majority voting). Hybrid approaches, such as tree-of-thought (ToT), combine elements of both---replicating and refining multiple reasoning paths. Their tradeoffs and implementation barriers are distinct, and the most effective balance between them depends on the base model capabilities, task difficulty, and computational constraints.

The historical focus on training stems from both technical and economic factors. Applying conventional inference scaling techniques to LLMs poses several challenges. Earlier base models had limited generative quality, making it more effective to invest compute in producing better outputs than in generating a greater number of weak ones. In addition, a range of challenges persist for sequential reasoning more broadly---including context length limitation, difficulties with attention mechanisms over extended contexts, and susceptibility to bias \cite{Turpin2023, Saparov2023, Xiao2024efficientstreaming, Yan2025attentionCoT, Zheng2025cursecot, Wu2025moreisless}. Meanwhile, for many past LLM applications, it was not significantly harder to generate a correct answer than to verify one. Additionally, the search space in games is far more constrained than in open-ended token generation \cite{DeepSeekAI2025}. While training explicit verifiers has shown promise under both outcome and process supervision \cite{Lightman2023}, these approaches often suffer from overfitting and high computational costs \cite{DeepSeekAI2025, Cobbe2021, Singhi2025solveverifycomputeoptimal}. On the economic side, inference costs are substantial and recurrent, whereas training is a capital expenditure---often making the latter more compatible with the incentive structures of industry leaders. Moreover, in most early LLM applications, users expected near-instantaneous responses, further discouraging deeper or longer inference strategies. While technically simple methods such as majority voting (MV) \cite{Wang2022self, Lewkowycz2022} or iterative refinement \cite{Madaan2023} have been technically available, the marginal accuracy gains they offer have often failed to justify their computational costs \cite{Erol2025economicLLM}.

Yet, recent developments have reshaped the viability of inference scaling. Advances in base model quality, long-context handling, and reasoning stability have made generated outputs more coherent and reliable, increasing the effectiveness of inference-based strategies. In parallel, the complexity of target tasks---such as scientific reasoning, coding, and long-horizon planning---has grown, widening the gap between generation and verification. These tasks also inherently benefit from multi-step reasoning, and users are often more willing to tolerate slower inference in exchange for higher-quality responses. Modern LLMs now exhibit more structured generation, enabling more focused search over output spaces, akin to policy-value decompositions seen in AlphaGo \cite{Silver2016AlphaGo}. Reinforcement learning techniques have further enabled unified architectures---reminiscent of AlphaGo Zero \cite{Silver2017AlphaGoZero}---that support both generation and evaluation within the same model, eliminating the need to train a separate verifier network. Collectively, these shifts have opened the door to inference scaling as a practical and increasingly central tool for improving model performance. These developments in inference scaling require a reevaluation of what it means for a system to be compute-optimal.

\subsection{Compute-Optimality}
Compute-optimal scaling laws describe how model components---such as parameters, training dataset size, and inference tokens---should be allocated under a fixed compute budget to maximize performance. These laws\footnote{Alternative scaling laws (e.g., transfer learning \cite{Hernandez2021TransferScaling}, contrastive learning \cite{Cherti2023constrastivescaling}, data-curation \cite{Goyal2024datafiltering}, distillation \cite{Busbridge2025}, mixture of experts \cite{Krajewski2024scalinglawsMOE}) suggest that scaling strategies extend well beyond dataset and parameter size, with benefits to tailoring for a specific task and learning framework.} constitute forms of allometric scaling, analogous to the systematic scaling relationships observed in biological systems \cite{Thompson1917, Haldane1926, Manger2013}. In biology, evolutionary pressures produced divergent scaling laws for brain-to-body mass relationships among hominids, non-human primates, and other mammals \cite{Manger2013}. Analogously, scaling laws for LLMs are not static: they evolve \cite{Sutskever2024NeurIPSTalk} due to a variety of factors (e.g., differing incentive structures and applications, hardware efficiency \cite{Sardana2024}, data efficiency \cite{DeepSeekAI2024llmscaling} and algorithmic variations \cite{Chen2025}). In particular, the rise of inference scaling exerts novel pressures, causing deviations from previously established compute-optimal trajectories.

\subsubsection{Training Compute}
Prior to the rise of inference scaling, training compute dominated discussions of performance optimization. Foundational studies proposed scaling laws that minimize an empirically-fitted pretraining loss \eqref{eqn:pretrainingloss}, subject to a fixed training compute budget \cite{Kaplan2020, Hoffmann2022}. For large transformer architectures, the cost\footnote{This assumes a fully connected network. For mixture of expert models, the forward pass is only over a subset of the parameters. We do not explicitly consider these models here, leaving such consideration for future work.} of a single forward pass is:
\begin{align}
C_{\text{forward}}^{\text{full}} &= C_{\text{forward}}^{\text{param}} + C_{\text{forward}}^{\text{attn}} \\
              &\approx 2 N + 2 n_{\text{layers}} d_{\text{attn}} n_{\text{ctx}}\label{eq:forwardcost}
\end{align}
where $n_{\text{layers}}$ is the number of layers, $d_{\text{attn}}$ is the dimension of the attention output\footnote{We assume single head attention throughout.}, and $n_{ctx}$ is the context length. The backward pass used during training is roughly twice the cost of the forward pass \cite{Kaplan2020}. Assuming training proceeds over fixed-length sequences of size $\tau_{\text{seq}}$ using autoregressive masking, this yields the training compute cost measured, e.g.\ in floating point operations (FLOPs):
\begin{align}
C_{\text{tr}} &= C_{\text{tr}}^{\text{param}} + C_{\text{tr}}^{\text{attn}} \\
              &\approx6 N D_{\text{tr}} + 3n_{\text{layers}} d_{\text{attn}}\tau_{\text{seq}}D_{\text{tr}}\approx6 N D_{\text{tr}}.
\end{align}
The parameter training cost is generally considered to dominate the attention cost, as short contexts are used for most of training and, in typical regimes, $N \gg n_{\text{layer}} \cdot d_{\text{attn}} \cdot \tau_{\text{seq}}$ \cite{Chen2025, Kaplan2020}.

This constraint became canonical in the Chinchilla framework, which advocates training smaller models on more data---than was previously common practice---to achieve superior performance \cite{Hoffmann2022}. Empirical findings, as in Fig.~\ref{fig:ChinEmergPlat}.1a, and shown by theoretical work given in Fig.~\ref{fig:ChinEmergPlat}.1b \cite{Nayak2025}, suggest that the Chinchilla-optimal choices of parameter size, $N_{\text{chin}}$, and number of training tokens, $D_{\text{chin}}$, scale proportionally according to a fixed ratio $\kappa$ \cite{Hoffmann2022, Anil2023, Besiroglu2024}\footnote{
From empirical fits, $\alpha\approx\beta\approx\frac{1}{3}$ and thus $\frac{\alpha}{\alpha+\beta}\approx\frac{\beta}{\alpha+\beta}\approx\frac{1}{2}$. Eq.~\eqref{eq:kappachinchilla} is the approximate solution to the constrained minimization of \eqref{eqn:pretrainingloss}, which yields:
\begin{equation}\label{eq:chinchilla}
    \kappa \coloneqq \left(\frac{\beta B}{\alpha A}\right)^{1/\alpha}, \ 
    N_{\text{chin}} = \kappa^{-\frac{\alpha}{\alpha+\beta}}\Bigl(\frac{C_{\text{tr}}}{6}\Bigr)^{\frac{\beta}{\alpha+\beta}}, \ D_{\text{chin}}=\Bigl(\kappa \frac{C_{\text{tr}}}{6}\Bigr)^{\frac{\alpha}{\alpha+\beta}}\mbox{.}
\end{equation}
}, i.e.,
\begin{equation} \label{eq:kappachinchilla}
    D_{\text{chin}}=\kappa N_{\text{chin}}.
\end{equation}

The Chinchilla-optimal ratio between training tokens and parameters was estimated to be approximately $20$ \cite{Hoffmann2022}, but subsequent scrutiny of the fitting methodology has suggested values closer to $25$ \cite{Besiroglu2024}. Enhancing data quality may shift the optimal scaling dynamics, favoring a steeper increase in parameter count relative to dataset size \cite{DeepSeekAI2024llmscaling}. In fact, when the unique dataset size is constrained below $D_{\text{chin}}$, repeating data across multiple epochs---up to a point of diminishing returns---yields better performance than significantly overparameterizing the model \cite{Muennighoff2023, Goyal2024datafiltering}.

\subsubsection{Inference Compute}
While these results offer valuable insights into training efficiency, they do not consider inference costs. However, as pretrained models are increasingly deployed across real-world applications, attention has shifted toward inference efficiency as a standalone objective. Given $I$ total tasks, for each $i\in \{1,...,I\}$ the computational cost of a single sequential inference is a forward pass for all $\Omega^{(i)}$ output tokens and the attention cost, and the upfront attention cost on the prompt of size $\Omega_{\text{prompt}}$ tokens\footnote{For a single task $i$, the full inference cost is given by the original prompts non-causal attention cost and the autoregressive attention cost over all forward passes during the output: $\sum_{j=1}^{\Omega}C_{\text{forward}}^{\text{total}} = 2N\Omega+2n_{\text{layers}}d_{\text{attn}}\left(\Omega_{\text{prompt}}^2+ \sum_{j=1}^{\Omega}(\Omega_\text{prompt}+j)\right)\approx 2N\Omega+2n_{\text{layers}}d_{\text{attn}}\left(\Omega_{\text{prompt}}^2+\Omega_\text{prompt}\, \Omega +\tfrac{1}{2}\Omega^2\right).$}:
\begin{align}\label{eq:totalinferencecost}
    C_{\text{inf}}^{\text{full}} &= C_{\text{inf}}^{\text{param}} + C_{\text{inf}}^{\text{attn-prompt}} +C_{\text{inf}}^{\text{attn-output}}\\
        &\approx 2 N \sum_{i=1}^I\Omega^{(i)} + 2n_{\text{layers}} d_{\text{attn}}\sum_{i=1}^I\left[(\Omega_{\text{prompt}}^{(i)})^2+\Omega_{\text{prompt}}^{(i)}\Omega^{(i)}+\frac{1}{2}(\Omega^{(i)})^2\right]\label{eq:FullInferenceCost}\mbox{.}
\end{align}
The attention cost is often disregarded for sufficiently short context lengths and output sequences \cite{Snell2024, Sardana2024}. While we adopt the same simplification in many of our illustrations where it does not affect the qualitative conclusions, this cost becomes important in certain settings. In particular, we later demonstrate its significance when comparing CoT and ToT inference strategies (Section~\ref{subsec:cotvstot}) and discuss its dependence on algorithmic and hardware choices in more detail in Section~\ref{subsec:algorithmsandhardware}.

Recent studies have shown that, under a fixed inference-compute budget, performance can be improved by using smaller pretrained models with more inference tokens, rather than larger models with fewer \cite{Snell2024, Wu2025}. In response, scaling laws have been proposed to jointly optimize inference strategy and model size \cite{Wu2025}. In one specific case---mathematics tasks using a hybrid ToT inference strategy with Chinchilla-pretrained models---the optimal tradeoff between model size and inference budget was empirically found to follow a power law \cite{Wu2025}. However, these studies have focused on inference scaling given pretrained models that are training-compute-optimal. These solutions are suboptimal---in particular, for inference-only users---since smaller, overtrained models can often achieve comparable performance or better performance while incurring lower inference costs as data is not considered \cite{Meta2024llama3, Devries2023, Gadre2024, Sardana2024}. Performance gains have been observed at training regimes beyond 10,000 tokens per parameter \cite{Sardana2024}. Furthermore, models trained in this overtrained regime often generalize predictably to downstream tasks \cite{Gadre2024}, though this behavior is not universally guaranteed \cite{Springer2025overtrainedfinetuning}.

\subsubsection{Total Compute}
Recent work has emphasized the importance of characterizing compute-optimality not only for training but for the full lifecycle of LLMs \cite{Devries2023, Sardana2024}. When accounting for total compute---where model parameters contribute to both training and inference costs---it is more efficient to shift toward smaller models trained on more data \cite{Sardana2024}. This preference for inference-efficient models leads to scaling strategies that deviate from the Chinchilla-optimal ratio, improving both total compute efficiency and ease of deployment \cite{Meta2024llama3}.  Incorporating the inference cost into the objective given a fixed total compute budget $C_{\text{total}}$ and the expected number of inference tokens processed over the model’s lifetime $D_{\text{inf}}=I\cdot\Omega$ gives an approximation of the total compute without attention costs:
\begin{equation}\label{eq:totalcompute}
C_{\text{total}}^{\text{param}} = 6 N D_{\text{tr}} + 2 N I \Omega =6ND_{\text{tr}}+2ND_{\text{inf}} \mbox{.}
\end{equation}
Minimizing the pretraining loss under this constraint has previously been solved numerically \cite{Sardana2024}. Total-compute-optimal allocation studies to date have focused on minimizing pretraining loss, rather than optimizing task performance, and omit explicit inference scaling. In practice, algorithmic choices and hardware characteristics further shape the cost structure, altering the resulting optimal allocations \cite{Snell2024, Sardana2024, Wu2025, Chen2025}. The framework introduced in this work naturally supports joint optimization of model size, training dataset size, and inference strategy to improve performance over a distribution of tasks, under flexibly defined cost constraints.

\subsection{From Empirical Findings to Theoretical Foundations}
Empirical studies have clarified both the promise and limits of inference scaling strategies. Notably, linear accuracy improvements have been observed in reasoning models with logarithmic increases in training compute, inference compute, and inference tokens \cite{OpenAI2024reasoning, Anthropic2025ExtendedThinking, DeepSeekAI2025}. A logarithmic tradeoff between training and inference compute has been observed in games such as Hex, where performance iso-contours indicate that increased inference can compensate for reduced training and vice versa \cite{Jones2021}. Similar tradeoffs have been considered more broadly including in LLMs, where meta-analyses show that increasing inference compute by orders of magnitude can offset training cost---though the optimal ratio varies with model scale and task difficulty \cite{EpochAI2023, Snell2024, Wu2025}. CoT prompting has been shown to induce emergent capabilities in models that otherwise exhibit flattened scaling curves. ToT strategies benefit harder tasks and weaker models more significantly, whereas purely sequential techniques tend to perform better on easier tasks \cite{Snell2024, Wu2025}. While task coverage, measured by pass@k with an oracle verifier, exhibits a power-law decay in failure rate as the number of samples increases, coverage gains from strategies such as MV and Bo$N$ with a reward model tend to saturate \cite{Brown2024repeatedsamplingsscaling, Wu2025}. Separate works have characterized repeated sampling performance in terms of population-level heterogeneity in pass or error rates \cite{Polo2025slothscalinglawsllm}, including models that assume beta-distributed task difficulty \cite{Levi2024simple}. Saturation in MV has also been explained through per-instance convergence dynamics \cite{Wu2025}.

The theory of inference presented in this work recovers the key scaling phenomena for both sequential and parallel inference strategies. It offers a computationally grounded and interpretable framework that aligns with prior empirical and theoretical results while situating such notions of inference scaling into a broader context.

\subsection{Contributions and Outline}
The remainder of the paper explains and analyzes a unified framework of training and inference compute scaling in LLMs. Section~\ref{sec:ds3} introduces \emph{directed stochastic skill search (DS3)}, a formalism that treats inference as structured traversal over a latent skill graph, and examines a concrete instantiation under simplifying assumptions. This yields closed-form expressions for expected task accuracy and inference cost under both sequential and parallel strategies, enabling principled analysis of compute–performance tradeoffs. While DS3 is initially introduced as a theoretical framework, its structure is later validated through empirical alignment with observed trends. We then compare CoT and ToT inference strategies, examining how their relative effectiveness varies with task difficulty and demonstrating how context length constraints influence their comparative efficiency.
Section~\ref{sec:unification} presents the results on unifying inference with training. Section~\ref{subsec:traininggraph} extends a hierarchical skill–text tripartite formulation to represent the model’s pretrained skill graph, allowing for joint analysis of training and inference from first principles. This construction recovers empirically observed phenomena under CoT prompting, including linear accuracy gains with logarithmic increases in training compute, inference compute, and inference tokens. Section~\ref{subsec:twostep} shows how the framework interfaces with empirical two-step forecasting methods that map pretraining loss to downstream task accuracy. The suboptimality of limiting design to Chinchilla-style pretraining is illustrated using a hypothetical example, why inference can elicit emergent capabilities is explained, and inference scaling in the unbounded data regime is briefly considered.
Section~\ref{subsec:beta} assumes a distribution directly over task accuracies to analyze repeated sampling strategies such as pass@N and MV. These results match empirical trends and offer an interpretation of population-level inference behavior. A preliminary discussion of how one may back out a model’s intrinsic inference capabilities from such task-level empirical fits is also included.
Section~\ref{sec:implications} discusses broader implications of the framework. Section~\ref{subsec:energy} examines inference-specific energy considerations and conjectures possible strategies to improve sustainable design. Section~\ref{subsec:algorithmsandhardware} outlines future algorithmic and hardware considerations, including unconventional inference scaling methods. Section~\ref{subsec:forecasting} explores implications for capability forecasting and Section~\ref{subsec:safety} reflects briefly on a few security considerations. Section~\ref{sec:conclusion} concludes with open directions and final remarks.

\section{Directed Stochastic Skill Search}\label{sec:ds3}
We refer to the following general inference formalism, along with its concrete instantiations under various representational assumptions, as directed stochastic skill search (DS3). At its core, DS3 treats inference as a dynamic ensemble of sequential traces, where each trace $i$ represents a single walk through a latent skill graph governed by stochastic transitions and outcomes. Formally, each trace $i$ unfolds on a directed graph $\mathcal{G}^{(i)} = (\mathcal{V}^{(i)}, \mathcal{E}^{(i)})$, which may differ across traces. That is, the graph structure itself---its set of vertices and edges---is indexed by $i$, allowing heterogeneous inference topologies even within a shared ensemble. The vertex set $\mathcal{V}^{(i)} = \mathcal{S}^{(i)} \cup \{\textsc{branch}^{(i)}\} \cup \{\textsc{idle}^{(i)}\} \cup \{\textsc{stop}^{(i)}\}$ consists of a (finite or countably infinite) set of latent skills $\mathcal{S}^{(i)}$ learned during training, along with distinguished control nodes: $\textsc{stop}^{(i)}$, an absorbing vertex that terminates the trace; $\textsc{idle}^{(i)}$, which allows the trace to persist without traversal; and $\textsc{branch}^{(i)}$, which initiates the creation of one or more new traces. The edge set $\mathcal{E}^{(i)} \subseteq \mathcal{V}^{(i)} \times \mathcal{V}^{(i)}$ defines the allowable transitions between vertices, including any self-loops and transitions into the distinguished nodes.

The process begins from an initialization $\Theta_0^{(i)}$, which encodes prior information available before inference begins---such as the initial prompting, interaction history, or previously-generated context. At each discrete step $t \in \{1, 2, \dots\}$, a trace is located at node $X_t^{(i)}\in \mathcal{V}^{(i)}$ and emits an outcome $O_t^{(i)}$ from an outcome space $\mathcal{O}^{(i)}$, which may correspond to generated tokens, latent updates, or decisions. The corresponding sequences of visited nodes and emitted outcomes are denoted by $\mathbf{X}_t^{(i)} = (X_1^{(i)}, \dots, X_t^{(i)})$ and $\mathbf{O}_t^{(i)} = (O_1^{(i)}, \dots, O_t^{(i)})$, respectively. We define the history of trace $i$ at time $t$ as $\mathbf{H}_t^{(i)} = (\Theta_0^{(i)}, \mathbf{X}_t^{(i)}, \mathbf{O}_t^{(i)})$, which captures all information observed along that trace up to time $t$. The tuple of all such history across the ensemble of all $N_{t}$ traces instantiated up to time $t$ is denoted by $\mathcal{H}_t = (\mathbf{H}_t^{(i)})_{i=1}^{N_{t}}$. At any time $t$ when a trace reaches $\textsc{branch}^{(i)}$, it may fork into new traces that inherit the parent’s history, $\mathbf{H}_t^{(i)}$, at the moment of branching. In particular, for each child trace $i'$, the history is initialized as $\mathbf{H}_t^{(i')}=\mathbf{H}_t^{(i)}$, and the graph structure is cloned: $\mathcal{G}^{(i')}=\mathcal{G}^{(i)}$. Any trace that does not arise through branching is explicitly defined in the framework, with an associated initialization $\Theta_0^{(i)}$ and graph $\mathcal{G}^{(i)}$, even if it begins in $\textsc{idle}^{(i)}$. The tuple of all graphs used in an inference strategy is $\mathfrak{G} = \left( \mathcal{G}^{(i)} \right)_{i=1}^{N_t}$.

We define a dynamic external environment state $\mathfrak{E}_t$, which may influence the inference process in real time, adapting to factors such as user interventions or resource availability (e.g., compute, energy, memory). Inference proceeds with each trace evolving independently or according to a collective policy. The walk is governed by two stochastic components. First, a transition kernel \(\iota_t^{(i)}\) defines the probability of moving to the next node, conditioned on the history tuple and the environment:
\begin{equation}\label{eq:transition_full}
\Pr[X_{t}^{(i)} = v \mid \mathcal{H}_{t-1}, \mathfrak{E}_{t-1}] = \iota_t^{(i)}(v; \mathcal{H}_{t-1}, \mathfrak{E}_{t-1}), \quad v \in \mathcal{V}^{(i)}.
\end{equation}
Second, a conditional distribution \(p_t^{(i)}\) governs outcomes at each step:
\begin{equation}\label{eq:outcome_full}
O_t^{(i)} \sim p_t^{(i)}(\cdot \mid \mathcal{H}_{t}, \mathfrak{E}_{t}).
\end{equation}
Next, we explicitly define the dynamic evaluation and cost sets:
\begin{equation}\label{eq:costandeval_full}
\mathcal{Q}_t = \left\{ q_t^{(j')}(\mathcal{O}_t, \mathfrak{E}_t) \right\}_{j' \in \mathcal{J}'}, \quad \mathcal{C}_t = \left\{ c_t^{(j)}(\mathcal{H}_t, \mathfrak{E}_t) \right\}_{j \in \mathcal{J}}.
\end{equation}
Each index $j' \in \mathcal{J}'$ denotes an evaluation function $q_t^{(j')}$, applied to the tuple of outcomes $\mathcal{O}_t = \left( \mathbf{O}_t^{(i)} \right)_{i=1}^{N_t}$ across all traces. These functions provide semantic or task-level assessments of inference quality---such as correctness, coherence, or goal satisfaction---and may also vary with the environment $\mathfrak{E}_t$. This set can include verifiers, pass@k metrics, or other evaluative procedures. Each index $j \in \mathcal{J}$ corresponds to a cost function $c_t^{(j)}$, which may depend on individual trace histories or aggregate ensemble behavior via $\mathcal{H}_t$, as well as on the external environment $\mathfrak{E}_t$. These costs capture both local metrics, such as per-step compute or latency, and global ones like inter-trace contention or coordination, modulated by real-time constraints such as energy pricing or hardware efficiency.

The termination time $\mathcal{T} \in \mathbb{N}$ is the first time step at which all traces have reached their respective stop nodes:
\begin{equation}\label{eq:termination_full}
\mathcal{T} = \min \left\{ t \in \mathbb{N} \,\middle|\, X_t^{(i)} = \textsc{stop}^{(i)} \text{ for all } i \in \{1, \dots, N_t\} \right\}.
\end{equation}
Inference proceeds over steps $t \in \{1, \dots, \mathcal{T}\}$, after which no further transitions or outcomes occur.

\subsection{Idealized Setup}
\begin{figure}
    \centering
    \includegraphics[width=0.9\textwidth]{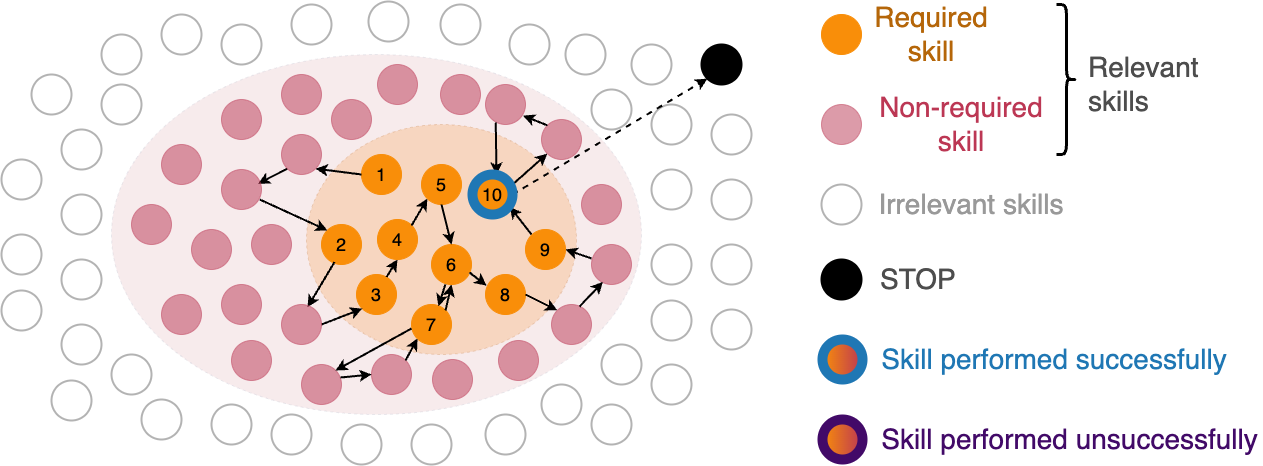}
    \caption{A directed stochastic skill search for a CoT policy, terminating upon completion of the final required skill. The path takes 22 skill steps, each producing \(\omega\) tokens, including detours through non-required and already visited skills. Required skills are shown in orange, with successful completions in blue. Non-required relevant skills are shown in pink and irrelevant skills in white.
    }
    \label{fig:SkillTraversal}
\end{figure}

We analyze four idealized inference strategies: CoT, an instance of ToT we refer to as ToT(1), Bo$N$, and MV. These idealized settings allow us to isolate the core dynamics of inference under a mathematically simple framework which can then be extended to a vast family of inference strategies. Figure~\ref{fig:SkillTraversal} illustrates a representative non-branching trajectory through the skill graph. To facilitate tractable analysis, we impose several structural simplifications on the general DS3 framework. All traces evolve over the same fully connected graph $\mathcal{G} = (\mathcal{V}, \mathcal{E})$ and act on a shared skill set $\mathcal{S}$, partitioned into three disjoint subsets: required skills $\mathcal{S}_{\mathcal{R}^+}$, relevant-but-not-required skills $\mathcal{S}_{\mathcal{R}^-}$, and irrelevant skills $\mathcal{S}_{\mathcal{I}}$. Task completion requires sequential execution of $m$ skills (possibly with repetitions) drawn from $\mathcal{S}_{\mathcal{R}^+}$\footnote{We assume the prompt explicitly elicits the required skills and no skills can be completed out of order. In practice---especially for complex tasks or weaker reasoning models---this may not occur reliably: the model might overlook essential steps, and creativity may benefit from engaging unconventional skills~\cite{Zhou2024architectural,Varshney2013creativity}. In our setup, expanding the relevant set degrades performance, and we assign no cost or benefit to invoking irrelevant skills. Some skills may be strictly learned or unlearned (e.g., internal knowledge retrieval), or inherently stochastic (e.g., generating randomness). Although randomness degrades performance here, stochasticity has been linked to improved creative exploration~\cite{Vempaty2017couponcollector}. Exploring the limits of reasoning and refining these assumptions is a direction for future work.\label{fn:randomness}}. The complete relevant set is denoted by $\mathcal{S}_{\mathcal{R}} = \mathcal{S}_{\mathcal{R}^+} \cup \mathcal{S}_{\mathcal{R}^-}$, with cardinality $M = |\mathcal{S}_{\mathcal{R}}|$.

We introduce a directionality coefficient $\delta \in [0, 1]$, where $\delta = 0$ corresponds to a fully undirected random walk and $\delta = 1$ corresponds to a perfectly directed walk toward the next required skill. Let $s^*$ denote the next required skill in the task sequence. We define $\mathbbm{1}_{\textsc{stop}}(v_{t-1}) = 1$ if one of the following conditions holds at step $t-1$: (i) all $m$ required skills have been completed in the correct order; (ii) the inference budget is exhausted\footnote{We do not account for early termination due to incorrect answers.}; or (iii) the current trace was not selected following a $\textsc{branch}$ and the outcome of the subsequent step. Otherwise, $\mathbbm{1}_{\textsc{stop}}(v_{t-1}) = 0$. We also define $\mathbbm{1}_{\textsc{branch}}(v_{t-1}) = 1$ if the trace at step $t-1$ initiates branching (i.e., under ToT(1)), and 0 otherwise. Transitions occur between vertices $v_{t-1},\, v_t \in \mathcal{S}_{\mathcal{R}} \cup {\{\textsc{branch}\}}\cup {\{\textsc{stop}\}}$, and we define the corresponding transition kernel as:
\begin{equation}\label{eq:transition}
    \iota(v_t \mid v_{t-1}) =
\begin{cases}
1, & \text{if } v_t = \textsc{stop},\, \mathbbm{1}_{\textsc{stop}}(v_{t-1}) = 1, \\
1, & \text{if } v_t = \textsc{branch},\, \mathbbm{1}_{\textsc{branch}}(v_{t-1}) = 1, \\
\delta + (1 - \delta)\frac{1}{M}, & \text{if } v_t = s^*,\, \mathbbm{1}_{\textsc{branch}}(v_{t-1}) = 0,\, \mathbbm{1}_{\textsc{stop}}(v_{t-1}) = 0, \\
(1 - \delta)\frac{1}{M}, & \text{if } v_t \in \mathcal{S}_{\mathcal{R}} \setminus \{s^*\},\, \mathbbm{1}_{\textsc{branch}}(v_{t-1}) = 0,\, \mathbbm{1}_{\textsc{stop}}(v_{t-1}) = 0, \\
0, & \text{otherwise.}
\end{cases}
\end{equation}
Then, $\hat{\iota} = \delta + (1 - \delta)\frac{1}{M} \in \left[\frac{1}{M}, 1\right]$, hereafter referred to as the directionality, quantifies the model’s ability to traverse its latent skill space effectively. A higher value of $\hat{\iota}$ indicates a stronger bias toward the correct next step in the skill sequence.

We define a simplified outcome space $\mathcal{O} = \{(0, \omega), (1, \omega), \varnothing\}$, where each skill application emits a constant $\omega$ tokens and returns a binary outcome: success $(1, \omega)$ or failure $(0, \omega)$ in applying the next required skill $s^*$. The symbol $\varnothing$ denotes a null outcome emitted at control nodes such as $\textsc{branch}$ or $\textsc{stop}$. The corresponding observation kernel is defined as:
\begin{equation}\label{eq:outcome}
    p(O_t \mid v_t) =
    \begin{cases}
        p, & \text{if } O_t = (1, \omega),\, v_t =s^*, \\
        1 - p, & \text{if } O_t = (0, \omega),\, v_t=s^*, \\
        1, & \text{if } O_t = (0, \omega),\, v_t \in \mathcal{S}_{\mathcal{R}}, \\
        1, & \text{if } O_t = \varnothing , \, v_t \in \{\textsc{stop}, \textsc{branch}\}, \\
        0, & \text{otherwise}.
    \end{cases}
\end{equation}
Here, $p \in [0,1]$ denotes the fixed probability of successfully applying any visited skill. We assume $p$ is uniform across all skills in $\mathcal{S}_{\mathcal{R}}$\footref{fn:randomness} and independent of the trace history. The perfect memory assumption benefits inference strategies with longer sequential reasoning chains by ensuring that intermediate failures or detours do not erase prior progress. In contrast, when incorporating penalties for longer chains (e.g., through context length constraints), our assumption of strictly sequential completion can disproportionately hinder sequential strategies: a required skill contributes to task completion only if it is executed at the correct position in the sequence.
However, we will revisit the impacts of memory and context length when comparing sequential and parallel strategies. For clarity of analysis, we also do not impose any maximum context length in this idealized framework.

We consider compute as the sole cost metric, defined as the cumulative sum of per-trace compute costs. We omit batching effects, memory usage, and hardware variability for simplicity. We further assume that verification is handled by oracle verifiers, which always select the best available trace. This removes the need to represent verification effort explicitly: evaluation reduces to scanning all traces and selecting the one with the highest cumulative outcome score. Let $q_t(\mathbf{O}_t)$ denote the number of successful outcomes by time $t$, defined as: $q_t(\mathbf{O}_{t}) = \sum_{t'=1}^{t} \mathbbm{1}\{O_t=(1,\omega)\}$. We denote the prompt token budget by $\Omega_\text{prompt}$ and take it as the initialization $\Theta_0^{(i)} = \Omega_\text{prompt}$ for all $N_1$ active traces at $t = 1$. The context length at time $t$ for trace $i$ is $n_{\text{ctx}}^{(i)}(\Omega_0,\mathbf{O}_{t-1}^{(i)})=\Omega_0+\sum_{t'=1}^{t-1}\omega\cdot\mathbbm{1}\{O_t^{(i)} \in \{(0,\omega),(1,\omega)\}$, which accumulates over all emitted tokens in the trace history. As final components of the setup, we define the evaluation set and the total inference cost:
\begin{equation}\label{eq:costandeval}
    \mathcal{Q}_t = \left\{ q_t^{(i)} := q_t(\mathbf{O}_t^{(i)}) \right\}_{i=1}^{N_t}, \quad C_{\text{inf}}=\sum_{t=1}^{\mathcal{T}}\sum_{i=1}^{N_t}\sum_{u=1}^{\omega}C_{\text{forward}}\left( n_{\text{ctx}}^{(i)}(\Omega_\text{prompt},\mathbf{O}_{u-1}^{(i)}) \right).
\end{equation}
Here, each $q_t^{(i)}$ denotes the cumulative number of required skills correctly applied by trace $i$ up to time $t$, and $C_{\text{inf}}$ is the total inference cost, aggregated over all time steps, active traces, and token-level forward passes. The per-step context length $n_{\text{ctx}}^{(i)}$ is passed as input to the forward cost function $C_{\text{forward}}$ (e.g., \eqref{eq:forwardcost}).

Table~\ref{tab:inferencestrategies} summarizes the accuracy and compute costs for the inference strategies examined in this work.

\begin{table}[ht]
\centering
\begin{threeparttable}\label{tab:inferencestrategies}
\caption{Comparison of Various Inference Strategies\\Chain‐of‐Thought (CoT), Tree‐of‐Thought(1) (ToT(1)), Best-of-$N$ (Bo$N$), and Majority Voting (MV)}
\begin{tabular}{lccccc}
\toprule
 & Task Success Probability ($\psi$)
 & $C_{\mathrm{inf}}^{\mathrm{param}}$ 
 & $C_{\mathrm{inf}}^{\mathrm{attn}}$ 
 & $C_{\mathrm{inf}}^{\mathrm{eval}}$ 
 & $\mathbb{E}[\Omega'\,|\,\Omega]$\\
\midrule

CoT
& $\displaystyle I_{\hat{\iota}p}\bigl(m,\;\tfrac{\Omega}{\omega}-m+1\bigr)$
& $\displaystyle 2N\,\Omega$
& $\displaystyle 2\,n_{\mathrm{layer}}\,d_{\mathrm{attn}}
   \Bigl(\Omega_{\mathrm{prompt}}^2 + \Omega_{\mathrm{prompt}}\,\Omega
   + \tfrac{1}{2}\Omega^2\Bigr)$
& No
& $\min \{\omega\tfrac{m}{\hat{\iota}p}, \Omega\}$\\

ToT(1)
& $\displaystyle I_{\,1-(1-\hat{\iota}p)^b}\bigl(m,\;\tfrac{\Omega}{b\omega}-m+1\bigr)$
& $\displaystyle 2N\,\Omega$
& $\displaystyle 2\,n_{\mathrm{layer}}\,d_{\mathrm{attn}}
   \Bigl(\Omega_{\mathrm{prompt}}^2 + \,\Omega_{\mathrm{prompt}}\,\Omega
   + \tfrac{1}{2b}\Omega^2\Bigr)$
& Yes
& $\min \{\omega\tfrac{bm}{1-(1-\hat{\iota}p)^b}, \Omega\}$ \\

Bo$N$
& $\displaystyle 1-\left(1-I_{\hat{\iota}p}\bigl(m,\;\tfrac{\Omega}{k\omega}-m+1\bigr)\right)^k $
& $\displaystyle 2N\,\Omega$
& $\displaystyle 2\,n_{\mathrm{layer}}\,d_{\mathrm{attn}}
   \Bigl(k\Omega_{\mathrm{prompt}}^2 + \,\Omega_{\mathrm{prompt}}\,\Omega
   + \tfrac{1}{2k}\Omega^2\Bigr)$
& Yes
& $\min \{\omega\tfrac{km}{\hat{\iota}p}, \Omega\}$ \\

MV
& $\displaystyle \mathbb{E}\bigl[ \tfrac{\mathbbm{1}\{Y_0=\max_{j}Y_j\}}{1+\sum_{j=1}^{\mathcal{Y}}\mathbbm{1}\{Y_0=Y_j\}} \bigr] $
& $\displaystyle 2N\,\Omega$
& $\displaystyle 2\,n_{\mathrm{layer}}\,d_{\mathrm{attn}}
   \Bigl(k\Omega_{\mathrm{0}}^2 + \Omega_{\mathrm{0}}\,\Omega
   + \tfrac{1}{2k}\Omega^2\Bigr)$
& Yes
& $\min \{\omega\tfrac{km}{\hat{\iota}p}, \Omega\}$ \\

\bottomrule
\end{tabular}
\begin{tablenotes}
\small
\item[a.] We report the pass@k for Bo$N$ and MV such that the $k$ trials each of length $\Omega/k$ independently process the prompt and proceed through their traces. We use a CoT base policy.
\item[b.] MV is given over a distribution which includes incorrect answers such that $Y \sim \mathrm{Multinomial}\left(k;\, \Pr[Y_0],\, \Pr[Y_1], \ldots, \Pr[Y_{|\mathcal{Y}|}] \right)$ where the first entry is the probability of the correct answer $\Pr[Y_0]=I_{\hat{\iota}p}(m,\frac{\Omega}{k\omega}-m+1)$ and the remaining $\Pr[Y_j]$ describe the mass assigned to incorrect outputs in the answer alphabet.
\item[c.] While we do not explicitly study the costs of evaluation in the idealized setups, methods using parallel inference have an evaluation cost. For ToT(1) and Bo$N$, this could be done by accumulating the logits of the tokens and comparing traces before branching ($C_{\text{inf}}^{\text{eval}}\approx\displaystyle \Omega + \Omega/\omega$) or selecting the final output ($C_{\text{inf}}^{\text{eval}}\approx\displaystyle \Omega + k$), respectively. In practice, a separate verifier network is often used which can incur a more significant cost. For MV, this is a comparison across the branches ($C_{\text{inf}}^{\text{eval}}\approx\displaystyle k$), which is negligible.
\item[d.] For a tighter approximation of the expected tokens used given a max token budget, $\mathbb{E}[\Omega'|\Omega]$, refer to the piecewise linear-Gaussian approximation, \eqref{eq:lineargaussianapprox}.

\end{tablenotes}
\end{threeparttable}
\end{table}

\subsection{CoT}\label{subsec:CoT}

The CoT inference strategy proceeds as a single sequential trace, thereby forming the conceptual basis for all other inference strategies. At each step, the probability of transitioning to the correct next skill is denoted by $\hat{\iota}$, and the probability of correctly executing that skill---conditioned on reaching the correct node---is $p$. The effective per-step probability of successfully selecting and executing the required skill is thus $\hat{\iota} p$. Let $X$ denote the number of steps required to first succeed in this combined action. Then $X$ is a geometric random variable with success parameter $\hat{\iota} p$,
\begin{equation} \label{eq:geometric_hit}
    \Pr(X = t) = \hat{\iota} p (1 - \hat{\iota} p)^{t - 1}, \quad t \geq 1.
\end{equation}

To complete a task consisting of $m$ sequential required skills, each skill must be independently located and successfully executed, requiring $m$ independent geometric successes. The total number of steps $T$ needed to achieve all $m$ successes thus follows a negative binomial distribution $T \sim \mathrm{NB}(m, \hat{\iota} p)$. The probability of completing the task within a given step budget, $T_{\text{max}}=\omega\, \Omega_{\text{budget}}$, where $\Omega_{\text{budget}}$ is the total token budget and $\omega$ is the number of tokens consumed per step, is given by the cumulative distribution function (CDF) of the negative binomial:
\begin{equation}\label{eq:geometric_cdf}
    \psi^{\text{CoT}}=\Pr(T \leq T_{\text{max}}) = I_{\hat{\iota} p}(m, T_{\text{max}} - m + 1),
\end{equation}
where $I_x(a,b)$ denotes the regularized incomplete beta function. The CDF in \eqref{eq:geometric_hit} corresponds to the special case \( m = 1 \):
\begin{equation}\label{eq:oneskillprob}
\Pr(T < T_{\max} \mid m=1)
= \sum_{t=1}^{T_{\max}} \hat{\iota} p \, (1 - \hat{\iota} p)^{t - 1}
= 1 - (1 - \hat{\iota} p)^{T_{\max}}.
\end{equation}
Another special case is when \( T_{\text{max}} = m \), the task succeeds only if all \( m \) steps are correct, yielding:
\begin{equation}\label{eq:budgetstepequality}
    \Pr(T \leq m) = (\hat{\iota} p)^{m}.
\end{equation}

The expected number of steps to complete the task under a capped step budget $T_{\text{max}}$ is:
\begin{equation}\label{eq:sequentialcostfull}
    \mathbb{E}[T\,|\,T_{\text{max}}] = \sum_{t=m}^{T_{\text{max}}} t \left[ I_{\hat{\iota} p}(m,t - m + 1) - I_{\hat{\iota} p}(m,t - m) \right] + T_{\text{max}} \left[1 - I_{\hat{\iota} p}(m,T_{\text{max}} - m + 1)\right].
\end{equation}
In the asymptotic limit $T_{\text{max}} \rightarrow \infty$, the expected number of steps approaches the uncapped mean of \( m \) independent geometric trials:
\begin{equation}
    \lim_{T_{\text{max}} \rightarrow \infty} \mathbb{E}[T\,|\,T_{\text{max}}] = \frac{m}{\hat{\iota} p}.
\end{equation}
When the budget is small relative to the uncapped mean, the task is unlikely to complete, and the inference process consumes the entire step budget:
\begin{equation}
    \mathbb{E}[T \,|\, T_{\text{max}}] = T_{\text{max}} \quad \text{whenever } T_{\text{max}} \ll \frac{m}{\hat{\iota} p}.
\end{equation}

\begin{figure}
  \centering
  \includegraphics[width=1\linewidth]{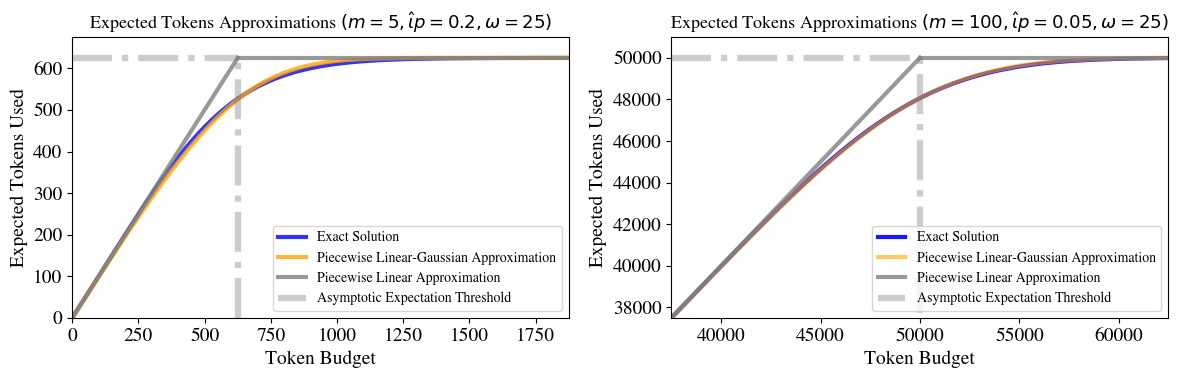}
  \caption{CoT expected token usage under capped inference budgets. Comparison of the exact solution (blue), a piecewise linear-Gaussian approximation (orange), and a simple piecewise linear approximation (gray), for two task difficulty regimes showing approximations preserve key trends across scale. Left: \(m = 5\), \(\hat{\iota} p = 0.2\), \(\omega = 25\); Right: \(m = 100\), \(\hat{\iota} p = 0.05\), \(\omega = 25\). The asymptotic expectation threshold \(\mu = \omega m / (\hat{\iota} p)\) is marked by a gray dashed line.
}
  \label{fig:expected_steps_comparison}
\end{figure}

Due to the computational challenge of evaluating the full closed-form summation in \eqref{eq:sequentialcostfull} at large inference budgets, it is useful to introduce tractable approximations. The expected cost exhibits simple limiting behavior in both the low- and high-budget regimes, and the distribution over required steps is well-approximated by linear behavior in the tails as illustrated in Fig.~\ref{fig:expected_steps_comparison}. In the central region---corresponding to moderate deviations from the mean under a standardized \( z \)-scoring---a Gaussian approximation to the negative binomial provides a smooth interpolation. Substituting \( T_{\text{max}} = \Omega_{\text{budget}} / \omega \) and defining the expected token output as \( \Omega_{\text{CoT}} := \omega \, \mathbb{E}[T \mid T_{\text{max}}] \) yields the following expression:
\begin{equation}\label{eq:lineargaussianapprox}
    \Omega_{\text{CoT}} \approx
\begin{cases}
\Omega_{\text{budget}}, & z < -z^*,\\[6pt]
\omega\frac{m}{\hat{\iota} p}\,\Phi(z) + \Omega_{\text{budget}}\left[1 - \Phi(z)\right] - \omega \frac{\sqrt{m(1 - \hat{\iota} p)}}{\hat{\iota} p}\varphi(z), & -z^* \le z \le z^*,\\[6pt]
\omega\frac{m}{\hat{\iota} p}, & z > z^*,
\end{cases}
\quad\text{where}\quad z = \frac{\Omega_{\text{budget}} - \omega\, m/(\hat{\iota} p)}{\omega\sqrt{m(1 - \hat{\iota} p)}/(\hat{\iota} p)}.
\end{equation}
Here, \( \Phi(\cdot) \) and \( \varphi(\cdot) \) denote the standard normal CDF and PDF, respectively. In Fig.~\ref{fig:expected_steps_comparison}, we set \( z^* = 2 \) to define the central region in which the Gaussian approximation is applied; outside this interval, the tails are handled explicitly by their limiting behavior.

For analytical convenience, we adopt the following piecewise-linear form:
\begin{equation}
    \Omega_{\text{CoT}} \approx \min\{\omega\frac{m}{\hat{\iota}p},\Omega_{\text{budget}}\}.
\end{equation}
which captures the same qualitative behavior---linear growth followed by saturation---while remaining continuous, monotonic, and tractable. This surrogate is used in all main results.

Once the form of \( \Omega_{\text{CoT}} \) is specified---whether exact or approximate---it can be substituted directly into the cost function. The total inference cost, as defined in \eqref{eq:costandeval}, is:
\begin{equation}\label{eq:CoTTotalCost}
    C_{\text{inf}}^{\text{CoT}} = 2 N \Omega_{\text{CoT}} + 2\,n_{\mathrm{layer}}\,d_{\mathrm{attn}}
   \Bigl(\Omega_{\mathrm{prompt}}^2 + \Omega_{\mathrm{prompt}}\,\Omega_{\text{CoT}}
   + \tfrac{1}{2}\Omega_{\text{CoT}}^2\Bigr).
\end{equation}

\subsection{ToT(1)}
To consider a hybrid sequential-parallel strategy, we define a simplified version of ToT---ToT(1). In this setting, branching occurs immediately after a skill node is visited, as specified in \eqref{eq:transition}. The oracle evaluator in \eqref{eq:costandeval} determines which trace, if any, transitions to the \textsc{branch} node, while all remaining traces transition to the \textsc{stop} node. If exactly one trace reaches a successful outcome, that trace is selected; otherwise, all but one are terminated arbitrarily.

The effective success probability improves under branching due to parallel exploration. Given \( b \) concurrent branches, the probability of successfully locating and executing a required skill in a single step becomes \( 1 - (1 - \hat{\iota} p)^b \). Consequently, the probability of completing a task that requires \( m \) sequential skills within a total skill step budget of \( T_{\text{max}} \) across all nodes is:
\begin{equation}\label{eq:totprob}
\psi^{\text{ToT(1)}}=P(T \le \lfloor T_{\text{max}}/b \rfloor) = I_{1-(1-\hat{\iota}p)^b}(m, \lfloor T_{\text{max}}/b\rfloor - m + 1).
\end{equation}
This accounts for the fact that each step in ToT(1) involves evaluating $b$ nodes, so the effective number of sequential skill steps available is $\lfloor T_{\text{max}}/b\rfloor$. Using the same piecewise-linear approximation for expected steps as in the CoT analysis, and noting that ToT(1) is effectively incurs the cost of \( b \) concurrent branches each of length $\lfloor T_{\text{max}}/b\rfloor$, the expected number of output tokens is:
\begin{equation}
\Omega_{\text{ToT(1)}} \approx \min \{\omega\tfrac{bm}{1-(1-\hat{\iota}p)^b}, \Omega_{\text{budget}}\},
\end{equation}
and the corresponding expected inference compute is:
\begin{equation}\label{eq:ToTTotalCost}
C_{\text{inf}}^{\text{ToT(1)}} = 2N\,\Omega_{\text{ToT(1)}}+2\,n_{\mathrm{layer}}\,d_{\mathrm{attn}}
   \Bigl(\Omega_{\mathrm{prompt}}^2 + \,\Omega_{\mathrm{prompt}}\,\Omega_{\text{ToT(1)}}
   + \tfrac{1}{2b}\Omega_{\text{ToT(1)}}^2\Bigr).
\end{equation}

\subsection{BoN}
For a Bo$N$ strategy, performance depends on both the base policy and, as with ToT, a verifier. We assume this verifier is an oracle, making the evaluation equivalent to measuring pass@$k$ coverage. To disambiguate from the model parameter size $N$, we denote the number of parallel outputs as $k$. The probability of task success is the probability that at least one of the $k$ independent parallel traces successfully completes the task. Then for a CoT base policy with a total step budget of $T_{\text{max}}$, evenly divided across the $k$ traces, the task success probability is::
\begin{equation}
\psi^{\text{Bo$N$; CoT}}=P(T \le \lfloor T_{\text{max}}/k \rfloor) = 1-(1-I_{\hat{\iota}p}(m, \lfloor T_{\text{max}}/k\rfloor - m+1))^{k}.
\end{equation}
The expected inference cost is \(k\) times the cost of a single CoT trace with budget \(T_{\text{max}}/k\), due to independence. Then, the expected number of output tokens under a total token budget \(\Omega_{\text{budget}}\) is:
\begin{equation}
    \Omega_{\text{Bo$N$;\,CoT}}\approx\min \{\omega\tfrac{km}{\hat{\iota}p}, \Omega_{\text{budget}}\},
\end{equation}
and the inference compute cost is:
\begin{equation}   C_{\text{inf}}^{\text{Bo$N$};\,\text{CoT}}=2N\,\Omega_{\text{Bo$N$;\,CoT}} +2\,n_{\mathrm{layer}}\,d_{\mathrm{attn}}
   \Bigl(k\Omega_{\mathrm{prompt}}^2 + \,\Omega_{\mathrm{prompt}}\,\Omega_{\text{Bo$N$;\,CoT}}
   + \frac{1}{2k}\Omega_{\text{Bo$N$;\,CoT}}^2\Bigr).
\end{equation}

Since Bo$N$ represents a parallel strategy, it can be combined with other base policies or even mixtures\footnote{For $k$ independent base policies $\pi_1,\ldots,\pi_k$ (possibly with repeats), each allocated a token budget $\Omega_{\pi_j}$ inducing success probability $\psi^{\pi_j} = \psi^{\pi_j}(\Omega_{\pi_j})$ and cost $C_{\text{inf}}^{\pi_j}=C_{\text{inf}}^{\pi_j}(\Omega_{\pi_j})$, the Bo$N$ success probability is $\psi^{\text{Bo}N; \pi_1,\ldots,\pi_k} = 1 - \prod_{j=1}^k (1 - \psi^{\pi_j})$, with total cost $C_{\text{inf}}^{\text{Bo}N; \pi_1,\ldots,\pi_k}=\sum_{j=1}^k C_{\text{inf}}^{\pi_j}$.}. For a ToT(1) base policy (omitting floor notation for simplicity), the success probability under total token budget \(\Omega_{\text{budget}}\) is:
\begin{equation}
    \psi^{\text{Bo$N$; ToT(1)}} = 1 - \left( 1 - I_{\,1 - (1 - \hat{\iota} p)^b} \left( m,\, \tfrac{\Omega_{\text{budget}}}{k b \omega} - m + 1 \right) \right)^k,
\end{equation}
with expected token usage approximated as:
\begin{equation}
    \Omega_{\text{Bo$N$;\,ToT(1)}} \approx \min \left\{ \omega \tfrac{k b m}{1 - (1 - \hat{\iota} p)^b},\, \Omega_{\text{budget}} \right\}.
\end{equation}
The expected cost is therefore:
\begin{equation}
    C_{\text{inf}}^{\text{Bo}N;\text{ ToT(1)}} = 2N\,\Omega_{\text{Bo}N;\text{ ToT(1)}}+2\,n_{\mathrm{layer}}\,d_{\mathrm{attn}}
   \Bigl(k\Omega_{\mathrm{prompt}}^2 + \,\Omega_{\mathrm{prompt}}\,\Omega_{\text{Bo}N;\text{ ToT(1)}}
   + \tfrac{1}{2bk}\Omega_{\text{Bo}N;\text{ ToT(1)}}^2\Bigr).
\end{equation}

\subsection{MV}\label{subsec:MV}
In majority voting (MV), also referred to as consensus voting, task success occurs when the correct output receives more votes than any single incorrect alternative. Ties are resolved by randomly selecting among the top-voted options. Let \(\mathcal{Y}\) denote the set of possible outputs, and suppose the output random variable \(Y\) follows a multinomial distribution:
\[
Y \sim \mathrm{Multinomial}\left(k;\, \Pr[Y_0],\, \Pr[Y_1],\, \ldots,\, \Pr[Y_{|\mathcal{Y}|}] \right),
\]
where \(k\) is the number of samples, \(\Pr[Y_0]\) is the probability of generating the correct answer under the base policy, and the remaining \(\Pr[Y_j]\) correspond to incorrect outputs in the answer alphabet. The expected token usage and inference cost are the same as in Bo$N$. The task success probability is given by\footnote{This expectation can be written in closed form for any fixed output distribution, as shown in Section~\ref{subsec:beta}.}:
\begin{equation}
\psi^{\text{MV}} = \mathbb{E}\left[ \frac{\mathbbm{1}\{Y_0 = \max_{j} Y_j\}}{1 + \sum_{j=1}^{|\mathcal{Y}|} \mathbbm{1}\{Y_0 = Y_j\}} \right].
\end{equation}

The preceding analysis establishes a foundation for comparing how structural inference strategies---ranging from sequential to parallel approaches---interface with compute constraints and success probabilities. While we focus on a representative subset of policies, the framework readily extends to a broader space of inference strategies, enabling principled analysis of their computational costs and effectiveness.

\subsection{Balance Between Sequential and Parallel Reasoning}\label{subsec:cotvstot}
\begin{figure}[ht]
    \centering
    \includegraphics[width=0.7\textwidth]{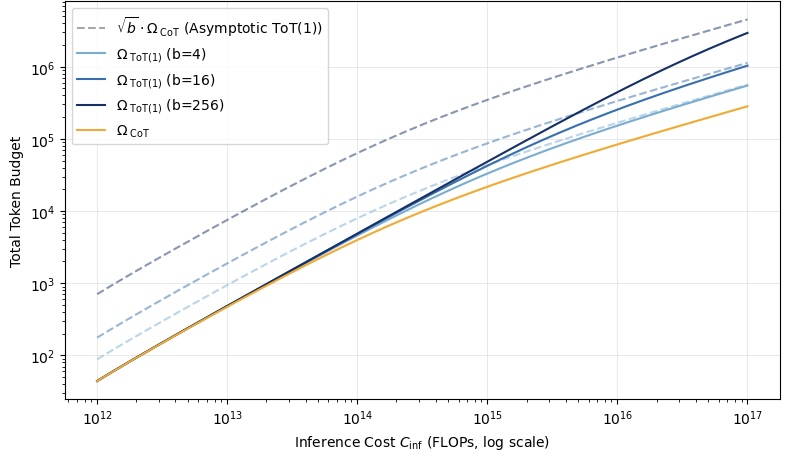}
    \caption{Total token budgets for CoT ($\Omega_{\text{CoT}}$) (orange) and ToT(1) ($\Omega_{\text{ToT(1)}}$) for various branching factors \(b\) (blues). Dashed lines indicate the asymptotic ToT(1) total token count which converges to $\sqrt{b}\cdot \Omega_{\text{ToT(1)}}$. The horizontal axis reflects total compute, which includes parameter, attention, and additional verification (which we take to be logit accumulation) costs. Architectural assumptions are provided in Appendix~\ref{subsec:cottotnumerical}.}
    \label{fig:totaltokenbudget}
\end{figure}

We now examine a case study that illustrates how the structure of inference impacts performance, contrasting CoT with ToT(1) under a fixed compute constraint. Assuming a constant per-step success probability $\hat{\iota}p$ independent of step index $t$, a fully sequential method is preferred when the total number of generated tokens is held fixed. However, the cost structure differs: because attention scales quadratically with context length, parallel strategies can often generate a larger number of tokens for the same total compute\footnote{While not analyzed in detail, real-world parallelism may improve throughput but also introduces overhead from memory reuse, caching, and selection dynamics.}. In the large inference budget regime, the quadratic attention terms dominate the compute-cost expressions (Eqs. \eqref{eq:CoTTotalCost} and \eqref{eq:ToTTotalCost}). Neglecting lower-order terms and equating the two costs therefore yields:
\begin{equation}
\frac{1}{2}\Omega_{\text{CoT}}^2 \approx \frac{1}{2b}\Omega_{\text{ToT(1)}}^2 \quad \Rightarrow \quad \Omega_{\text{ToT(1)}} \approx \sqrt{b} \, \Omega_{\text{CoT}}.
\end{equation}

This scaling relationship demonstrates that ToT(1) can generate more tokens under a fixed compute budget, increasing with the branching factor. We illustrate this tradeoff in Fig.~\ref{fig:totaltokenbudget}.

\begin{figure}[ht]
    \centering
    \includegraphics[width=1\textwidth]{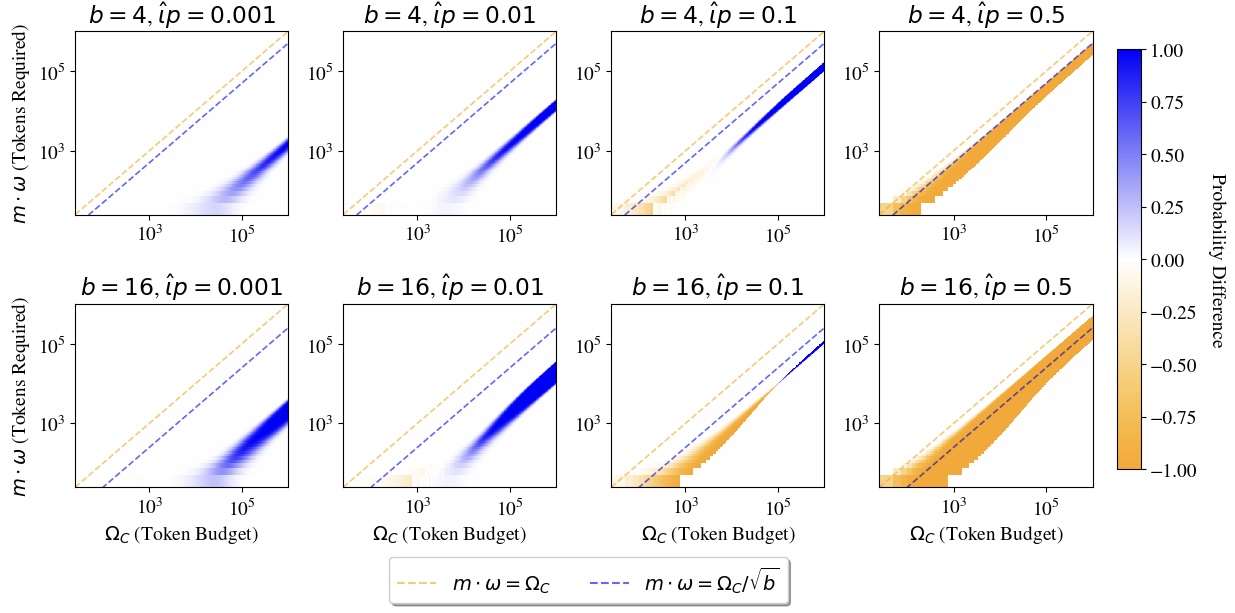}
    \caption{Differences in success probability between CoT and ToT(1) with branching factors 4 and 16. Blue regions indicate ToT(1) advantage; orange regions indicate CoT advantage. Dashed lines represent the token budget required given perfect success rates, which is lower for ToT(1) due to branching. White regions denote near-equal performance (right of the efficiency frontier) or invalid comparisons (left of feasibility bounds). ToT(1) is preferred at low success probabilities \(\hat{\iota}p\), CoT at high probabilities, and the better strategy at intermediate values depends on the number of required skills and branching factor. Architectural assumptions are detailed in Appendix~\ref{subsec:cottotnumerical}.}
    \label{fig:totcotdiff}
\end{figure}

As shown in the forthcoming Thm.~\ref{thm:task-capability}, the preferred inference strategy varies with the task–capability tradeoff, jointly determined by the number of required skills and the difficulty of executing each step. This dependence is visualized in Fig.~\ref{fig:totcotdiff}, where regions of advantage shift with both branching factor and per-step success probability.

\newpage
\noindent Setup:

Let $\psi^{(b)}$ denote the task–success probability under a branching inference policy ToT(1) with branching factor $b$ (\eqref{eq:totprob}). Then for total token budget $\Omega$ and per-step token cost $\omega$:
\[
\psi^{(b)}(\hat{\iota}p,m) = I_{\,1-(1-\hat{\iota}p)^{b}}\!\left(m,\,\tfrac{\Omega}{\sqrt{b}\,\omega}-m+1\right), \qquad b \ge 1,
\]
where $\hat{\iota}p$ is the per-step skill success probability, and $m$ is the number of required sequential skills. The special case $b=1$ corresponds to a purely sequential CoT strategy.

\begin{thm}[Task–Capability Tradeoff between Sequential and Branching Inference]
\label{thm:task-capability}

The preferred inference strategy given by $b$ depends on $\hat{\iota}p$.
\end{thm}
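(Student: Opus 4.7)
The plan is to show that $\arg\max_b \psi^{(b)}$ is non-constant in $\hat{\iota}p$ by comparing $b=1$ (pure CoT) against a fixed feasible branching factor $b>1$ in the two asymptotic regimes $\hat{\iota}p\to 0^{+}$ and $\hat{\iota}p\to 1^{-}$. Throughout I restrict to parameters satisfying $m\sqrt{b}\le\Omega/\omega$, so that both strategies are admissible (i.e., the second argument of the regularized incomplete beta is positive).

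First, in the low-capability limit $\hat{\iota}p\to 0^{+}$, I Taylor-expand $1-(1-\hat{\iota}p)^{b}\approx b\,\hat{\iota}p$ and invoke the small-argument asymptotic $I_{x}(m,n)\sim (nx)^{m}/m!$ of the regularized incomplete beta (for large $n$). With $n_{b}=\Omega/(\sqrt{b}\,\omega)-m+1\approx \Omega/(\sqrt{b}\,\omega)$, this yields $\psi^{(b)}/\psi^{(1)}\to(n_{b}\,b/n_{1})^{m}=b^{m/2}>1$, so branching strictly beats CoT for $\hat{\iota}p$ sufficiently small. Second, in the high-capability limit $\hat{\iota}p\to 1^{-}$, set $\epsilon=1-\hat{\iota}p$ and write the deficit as a binomial tail dominated, for small $\epsilon$, by the $k=m-1$ term:
\[
1-\psi^{(b)}\;=\;\Pr\!\bigl[\mathrm{Bin}(\Omega/(\sqrt{b}\,\omega),\,1-\epsilon^{b})<m\bigr]\;\sim\;C_{b}\,\epsilon^{\,g(b)},\qquad g(b)\coloneqq\sqrt{b}\,\Omega/\omega-b(m-1),
\]
with a polynomial-in-$b$ prefactor $C_{b}$. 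A direct factoring gives $g(1)-g(b)=(\sqrt{b}-1)\!\left[(\sqrt{b}+1)(m-1)-\Omega/\omega\right]$, so $g(1)>g(b)$ iff $\Omega/\omega<(\sqrt{b}+1)(m-1)$; within this range, CoT attains the strictly smaller deficit and strictly wins as $\hat{\iota}p\to 1$.

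Combining the two limits, any parameters in the window $m\sqrt{b}\le\Omega/\omega<(\sqrt{b}+1)(m-1)$ exhibit a flip in the preferred strategy between low and high $\hat{\iota}p$, establishing the claim. The window is non-empty precisely when $b<(m-1)^{2}$, which holds for any $m\ge 3$ and some integer $b>1$ (e.g.\ $m=3$, $b=2$, $\Omega/\omega\in[3\sqrt{2},\,2(\sqrt{2}+1))$). The main obstacle is verifying that the polynomial-in-$b$ prefactors $C_{b}$ and the large-$n$ approximation of the incomplete beta do not overturn the exponential comparison that controls the sign of $\psi^{(b)}-\psi^{(1)}$; standard Poisson or Gaussian approximations to the binomial tail should render this step routine. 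A secondary subtlety is that the window above must be consistent with the feasibility constraint $\Omega\ge m\sqrt{b}\,\omega$ throughout both limits, which bounds the allowable branching factor.
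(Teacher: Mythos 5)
Your proof takes a genuinely different route from the paper's in the high-capability regime and is a more ambitious attempt to exhibit a flip for a single parameter configuration. The paper's high-capability argument is a pure feasibility comparison: as $\hat{\iota}p \to 1^-$, $\psi^{(b)} \to 1$ iff $\Omega \ge \sqrt{b}\,m\omega$, so CoT's lower threshold makes it preferred whenever $m\omega \le \Omega < \sqrt{b}\,m\omega$. By contrast, you examine the \emph{rate} at which $1-\psi^{(b)}$ vanishes when both strategies are feasible, extracting the exponent $g(b)=\sqrt{b}\,\Omega/\omega-b(m-1)$ from the dominant binomial tail term; your factoring $g(1)-g(b)=(\sqrt{b}-1)\bigl[(\sqrt{b}+1)(m-1)-\Omega/\omega\bigr]$ and the resulting non-emptiness condition $b<(m-1)^2$ are both correct, and the polynomial prefactors indeed cannot overturn an exponent gap as $\epsilon\to0^{+}$. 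This is a finer conclusion than the paper's on the high-$\hat{\iota}p$ side. The paper, notably, does not attempt a single-window combination of its two limits; it relegates the intermediate-regime claim to the conjecture that follows the theorem, with numerical support from the figure.

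The gap is in your combination step. The low-capability asymptotic $\psi^{(b)}/\psi^{(1)}\to b^{m/2}$ (which both you and the paper invoke, with the paper's Appendix using $B(m,n)\approx(m-1)!\,n^{-m}$) requires $n_b=\Omega/(\sqrt{b}\,\omega)-m+1\gg m$. But your window forces $n_b \le (m-1)/\sqrt{b} < m$, which is precisely outside that regime. The exact small-$x$ limit is $\psi^{(b)}/\psi^{(1)}\to b^m\,B(m,n_1)/B(m,n_b)$ (valid for any fixed $n_1,n_b>0$), and this ratio is \emph{not} uniformly greater than 1 on your window: at the left endpoint $\Omega/\omega=m\sqrt{b}$ with $m=100$, $b=2$ (so $n_b=1$, $n_1\approx 42.4$), one gets $b^m\,m\,B(m,n_1)=2^{100}\,\Gamma(101)\Gamma(42.4)/\Gamma(142.4)\approx e^{-14}\ll 1$, so CoT wins at low $\hat{\iota}p$ as well and there is no flip. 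Your concrete example ($m=3$, $b=2$) does satisfy the exact ratio condition ($\approx 1.56$ at the left endpoint), so the theorem itself is still established by that instance; but the blanket claim that \emph{any} parameters in $m\sqrt{b}\le\Omega/\omega<(\sqrt{b}+1)(m-1)$ exhibit a flip is false as written. You would need either to verify $b^m B(m,n_1)/B(m,n_b)>1$ directly on a tighter subrange or to anchor the low-capability argument in a regime where $n_b$ is genuinely large and then argue separately that the high-capability deficit comparison extends there.
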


\begin{proof}\label{proof:branchingtaskcapability}
\begin{enumerate}
    \item High-Capability-to-Task Regime ($\hat{\iota}p \to 1^{-}$)
    
    For fixed $m$, we have:
    \[
    \lim_{\hat{\iota}p \to 1^{-}} \psi^{(b)}(\hat{\iota}p,m) = I_{1}\!\left(m,\,\tfrac{\Omega}{\sqrt{b}\,\omega} - m + 1\right) =
    \begin{cases}
        1, & \Omega \ge \sqrt{b}\,m\,\omega, \\
        0, & \text{otherwise}.
    \end{cases}
    \]
    Thus, CoT ($b=1$) minimizes the required number of output tokens and is optimal in the high-capability regime where the model reliably completes each skill step.
    
    \item Low-Capability–to-Task Regime ($\hat{\iota}p \to 0^+$)
    
    Let $\hat{\iota}p = \varepsilon \ll 1$, and assume $\Omega / (\sqrt{b}\,\omega) \gg m$ so that the regularized incomplete beta function lies in the small-$x$, large-$n$ regime:
    \[
    \psi^{(b)}(\varepsilon,m) = I_{\,1 - (1 - \varepsilon)^b}\left(m,\,\tfrac{\Omega}{\sqrt{b}\,\omega} - m + 1\right).
    \]
    
    Hence, the ratio of success probabilities satisfies:
    \[
    \frac{\psi^{(b)}}{\psi^{(1)}} \approx b^{\,m/2} > 1,
    \]
    where the asymptotic derivation is given in Appendix~\ref{ap:proof-low}.
    
    Therefore, under limited model capability (small $\hat{\iota}p$), the branching strategy ToT(1)($b$) provides a multiplicative gain in success probability over CoT, proportional to $b^{m/2}$. This advantage holds whenever the compute budget allows each branch to complete at least $m$ steps.
\end{enumerate}
\end{proof}

\begin{conj}[Task–Capability Tradeoff for Branching Inference: Intermediate Regime and Large-\(m\) Dominance]
\label{conj:intermediate}

Motivated by numerical analysis as demonstrated in Fig.~\ref{fig:totcotdiff}, we conjecture that there exists an optimal branching strategy for ToT(1), including CoT at $b=1$, which jointly depends on the per-step success probability \(\hat{\iota}p\), task length \(m\), branching factor \(b\), and token budget \(\Omega_{\text{budget}}\).

Fix any branching factor \(b\geq 1\) and constant \(c>\sqrt{b}\) (such that the shared budget scales as \(\Omega_{\text{budget}}=c\,m\omega\)).
Then:
\begin{enumerate}
\item For every \(m>1\) the function
      \(\Delta(\varepsilon,m)=\psi^{(b)}-\psi^{(1)}\) is continuous in
      \(\varepsilon\in[0,1]\) and changes sign, guaranteeing a 
      crossover \(\varepsilon_{\!*}(m)\in(0,1)\).
\item Define
      \(
        \varepsilon^{\dagger}=\sup\{\varepsilon:\,
        f_T(\varepsilon)<f_C(\varepsilon)\}\in(0,1).
      \)
      For every fixed \(\varepsilon<\varepsilon^{\dagger}\) there exists
      \(m_0(\varepsilon)\) such that
      \(\psi^{(b)}>\psi^{(1)}\) for all \(m\ge m_0(\varepsilon)\); in
      fact \(\psi^{(b)}-\psi^{(1)}\to1\) as \(m\to\infty\).
\end{enumerate}

CoT is preferred for \(\varepsilon>\varepsilon_{*}(m)\); ToT(1)\((b)\) is preferred for \(\varepsilon<\varepsilon_{*}(m)\), and wins unconditionally once \(m\) is large relative to the fixed \(\varepsilon<\varepsilon^{\dagger}\).
\end{conj}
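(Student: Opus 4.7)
My plan is to reformulate each $\psi^{(b)}$ as a binomial tail using the identity $I_{q}(a,n-a+1)=\Pr(\mathrm{Bin}(n,q)\ge a)$, so that $\psi^{(1)}=\Pr(\mathrm{Bin}(cm,\varepsilon)\ge m)$ and $\psi^{(b)}=\Pr(\mathrm{Bin}(cm/\sqrt{b},\,1-(1-\varepsilon)^b)\ge m)$. Continuity and the fixed-$m$ crossover in Part 1 then follow from tail expansions near $\varepsilon=0$ and $\varepsilon=1$, while the large-$m$ dominance in Part 2 follows from an LLN/CLT analysis of the same binomials. The two parts share ingredients but can be proved separately, and I would take them in that order.

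\textbf{Part 1 (continuity and sign change).} Continuity of $\Delta(\varepsilon,m)=\psi^{(b)}-\psi^{(1)}$ on $[0,1]$ is immediate because $I_x(a,b)$ is jointly continuous in its arguments and $\varepsilon\mapsto 1-(1-\varepsilon)^b$ is smooth. For the sign change I would examine the two boundary expansions. Near $\varepsilon=0$, Theorem~\ref{thm:task-capability}'s low-capability analysis already gives $\psi^{(b)}/\psi^{(1)}\approx b^{m/2}>1$, hence $\Delta>0$ on a right-neighbourhood of $0$. Near $\varepsilon=1$, the binomial form yields $1-\psi^{(1)}\sim C_1(1-\varepsilon)^{\,cm-m+1}$ and $1-\psi^{(b)}\sim C_b(1-\varepsilon)^{\,c\sqrt{b}\,m-bm+b}$. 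Comparing exponents, $\psi^{(1)}$ saturates strictly faster (so $\Delta<0$ just below $\varepsilon=1$) precisely when $c<(\sqrt{b}+1)(1-1/m)$. Under that condition, the IVT supplies the crossover $\varepsilon_{*}(m)\in(0,1)$.

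\textbf{Part 2 (large-$m$ dominance).} The LLN for the binomials gives pointwise limits $\psi^{(1)}(\varepsilon,m)\to\mathbbm{1}\{c\varepsilon>1\}$ and $\psi^{(b)}(\varepsilon,m)\to\mathbbm{1}\{(c/\sqrt{b})(1-(1-\varepsilon)^b)>1\}$ as $m\to\infty$. Writing the two LLN thresholds as $\varepsilon_C^{*}=1/c$ and $\varepsilon_T^{*}=1-(1-\sqrt{b}/c)^{1/b}$, the elementary inequality $(1-x)^{1/b}\ge 1-x/b$ for $b\ge 1$ yields $\varepsilon_T^{*}\le \varepsilon_C^{*}$, strict for $b>1$, so the open interval $(\varepsilon_T^{*},\varepsilon_C^{*})$ is nonempty. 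For any fixed $\varepsilon$ in it, $\psi^{(b)}\to 1$ and $\psi^{(1)}\to 0$, so $\Delta\to 1$. Identifying the conjecture's $f_C,f_T$ with the natural per-skill compute thresholds $f_C(\varepsilon)=1/\varepsilon$ and $f_T(\varepsilon)=\sqrt{b}/(1-(1-\varepsilon)^b)$, this interval is precisely the portion of $\{f_T<f_C\}$ in which CoT has not yet crossed its own LLN threshold, matching the stated conclusion.

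\textbf{Main obstacle.} The delicate step is the \emph{uniform-in-$m$} sign-change claim in Part 1: the near-$1$ expansion forces $\Delta<0$ only when $c<(\sqrt{b}+1)(1-1/m)$, a condition not implied by $c>\sqrt{b}$ alone. Indeed, a direct polynomial computation at $b=4,\,c=3,\,m=2$ gives $\Delta=u^{5}(6-5u-3u^{3}+2u^{7})$ with $u=1-\varepsilon$, and one checks that the bracketed factor is strictly positive on $(0,1)$, so $\Delta>0$ throughout and no crossover exists for those parameters. A fully rigorous Part 1 therefore appears to require either strengthening the hypothesis (e.g.\ $c\in(\sqrt{b},\sqrt{b}+1)$ together with $m$ large enough relative to $c-\sqrt{b}$) or a finer CLT analysis at the two LLN thresholds locating a narrow crossover inside the transition window; in either case, the result then merges cleanly with the large-$m$ regime of Part 2.
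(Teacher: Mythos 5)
This statement is labeled a \emph{conjecture} in the paper and is given no proof; the authors prove only the two boundary regimes in Theorem~\ref{thm:task-capability} and cite the numerics of Fig.~\ref{fig:totcotdiff} as motivation. There is therefore no paper argument to compare against, and your proposal must stand on its own.

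Your binomial-tail reformulation $\psi^{(1)}=\Pr(\mathrm{Bin}(cm,\varepsilon)\ge m)$, $\psi^{(b)}=\Pr(\mathrm{Bin}(cm/\sqrt{b},\,1-(1-\varepsilon)^b)\ge m)$ is correct, the near-$1$ exponent comparison $cm-m+1$ versus $c\sqrt{b}\,m-bm+b$ together with the derived extra condition $c<(\sqrt{b}+1)(1-1/m)$ is correct, and your counterexample at $b=4$, $c=3$, $m=2$ is verified: with $u=1-\varepsilon$ one gets $\Delta=u^{5}h(u)$ with $h(u)=6-5u-3u^{3}+2u^{7}$; since $h(1)=h'(1)=0$, $h''(1)=66>0$, and $h'(u)=-5-9u^{2}+14u^{6}<0$ on $(0,1)$, $h$ is strictly positive on $(0,1)$, so $\Delta>0$ throughout and no crossover exists. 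This genuinely refutes Part~1 of the conjecture under the stated hypothesis $c>\sqrt{b}$ alone, and is the most valuable contribution of your note.

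Part~2 of your proposal, however, contains a real error. You invoke $(1-x)^{1/b}\ge 1-x/b$ for $b\ge1$, but Bernoulli's inequality runs the other way when the exponent lies in $[0,1]$: for $b\ge1$, $(1-x)^{1/b}\le 1-x/b$. As a result your claimed $\varepsilon_T^{*}\le\varepsilon_C^{*}$ does \emph{not} hold for all $c>\sqrt{b}$. At $b=4$, $c=2.1$ (so $c>\sqrt{b}=2$), $\varepsilon_C^{*}=1/c\approx0.476$ while $\varepsilon_T^{*}=1-(1-\sqrt{b}/c)^{1/b}\approx0.533>\varepsilon_C^{*}$; on $(\varepsilon_C^{*},\varepsilon_T^{*})$ the LLN gives $\psi^{(1)}\to1$ and $\psi^{(b)}\to0$, so $\Delta\to-1$ and CoT dominates asymptotically. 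The correct nonemptiness threshold for a ToT-dominant interval is $c>1/\varepsilon^{\dagger}$, where $\varepsilon^{\dagger}$ solves $\varepsilon\sqrt{b}=1-(1-\varepsilon)^{b}$ (the crossover of the very $f_T,f_C$ you identified), and this strictly exceeds $\sqrt{b}$ (for $b=4$ it is about $2.20$). Your choice $c=3$ happens to clear it, but $c>\sqrt{b}$ does not guarantee it. So your broader conclusion is right — the conjecture's hypotheses need strengthening — but the needed strengthening is larger than your draft indicates: for $c$ near $\sqrt{b}$ both Part~1 and Part~2 fail, not merely Part~1.
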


In practice, even under a fixed token budget and differing compute, sequential approaches are not always preferable to parallel ones \cite{Snell2024, Wu2025}. This phenomenon can be understood within our framework by introducing fatigue or forgetting. Consider a pedagogical example with complete fatigue: for any chain that fails to yield a positive outcome at the first step (i.e., with probability $r_1 = \hat{\iota}_1 p_1$), the success probability immediately collapses to zero on all subsequent steps, $r_{t>1} = 0$, due to catastrophic error. In this case, without branching, the probability of success is exactly $r_1$, since no further steps can compensate for failure. For strategies with $b$ parallel traces, however, the probability becomes $1 - (1 - r_1)^b > r_1$ for all $b > 1$ and $0 < r_1 < 1$, so the preferred strategy is to fully parallelize the step budget. The relative effectiveness of sequential versus parallel inference varies across real systems, shaped by temporal dynamics in reasoning capabilities and task-specific dependencies, highlighting the value of adaptive strategies for improving both efficiency and robustness---particularly when task difficulty (as determined by the $(m, \hat{\iota}p)$ pair) interacts with fatigue or degradation.

Indeed, it has been found more broadly that simpler strategies often outperform more complex reasoning---particularly on easier tasks or those susceptible to bias \cite{Wu2025moreisless, Lee2025compresscot, Snell2024, Wu2025}. Similar effects are observed in humans: individuals with higher numeracy may over-reason and reinforce prior beliefs \cite{Kahan2017}. In LLMs, bias can arise from verifiers \cite{DeepSeekAI2025, Cobbe2021} or from internal representations \cite{Blodgett2020, Bender2021, Mehrabi2022, Zhou2024political}, which we conjecture may lead to information cascades \cite{Bikhchandani1992} that converge on incorrect outcomes. CoT reasoning has been shown to be more bias-prone and to underperform shorter chains in various settings \cite{Turpin2023, Wu2025moreisless, Zheng2025cursecot}. Related observations have been made using rejection sampling methods that favor shorter completions \cite{Muennighoff2025}, though this may reflect the tendency for correct reasoning to terminate early, while incorrect paths are longer and more exploratory, ultimately consuming more of the token budget.

Complemented by experiments, Wu et al.\ introduced a stylized formulation to demonstrate that longer reasoning chains can sometimes degrade performance \cite{Wu2025moreisless}. In their formulation, task success is decomposed into two multiplicative terms: the probability of posing correct sub-questions and the probability of answering them correctly. Specifically, the overall success probability over $T$ steps and task complexity $\ell$ is approximated by
$$
(1 - \sigma' \cdot \ell)^T (1 - E' \cdot \ell/T)^T \alpha,
$$
where $\sigma' \cdot l$ reflects sub-question error, $E'\cdot \ell/T$ captures sub-answer error, and $\alpha$ is a shortcut term representing end-to-end task solvability without decomposition. While highly simplified, this formulation captures an essential tradeoff: deeper decompositions mitigate per-step difficulty but compound error across steps. A similar pattern emerges in a limiting case of the idealized DS3 framework when the inference budget is set equal to the number of steps, $T_{\max} = m$, and skill success probability becomes
$$
\hat{\iota} p = (1 - \sigma' \cdot \ell)(1 - E' \cdot \ell/m) \alpha^{1/m},
$$ 
exhibiting the same tension between solution depth and cumulative risk. This motivates a hierarchical view of task completion, where a model may either solve a problem using a small number of complex skills or a longer sequence of simpler ones. These results underscore the importance of evaluating inference strategies not in isolation, but in the context of how models are trained. The DS3 framework offers an analytical foundation for comparing reasoning strategies---not only in terms of their runtime behavior, but also in how structural priors are shaped during training---a perspective we now turn to.

\section{Unifying Training and Inference Scaling in LLMs}\label{sec:unification}
Gaining a deeper understanding of inference strategies and optimal resource allocation begins with examining the structure of the pretrained latent skill graph that DS3 operates on. To this end, we unify DS3 with a first-principles hierarchical tripartite graph framework and, in parallel, incorporate empirically-grounded methods for characterizing LLM training.

Analyzing a single task in isolation is generally insufficient to capture broader performance trends. We therefore define a task distribution $\phi(\ell, m)$, where $m \in \mathfrak{M}$ denotes the number of required skills and $\ell \in \mathfrak{L}$ parameterizes skill difficulty, following the formulation in \cite{Nayak2025}. Given per-task accuracy $\psi_{\ell,m}$, the expected accuracy over the task distribution is:
\begin{equation}
\Psi \coloneqq \mathbb{E}_{\phi}[\psi_{\ell,m}] =\sum_{\ell \in \mathfrak{L}, m \in \mathfrak{M}} \phi(\ell,m) \cdot \psi_{\ell,m}.
\end{equation}
Likewise the expected inference compute is the weighted sum of individual inference costs:
\begin{equation}
C_{\text{inf}} := \mathbb{E}_\phi[C_{inf}] = \sum_{\ell \in \mathfrak{L}, m \in \mathfrak{M}} \phi(\ell,m) \cdot C_{\text{inf}}^{(\ell,m)}.
\end{equation}

\subsection{Hierarchical Skill-Text Tripartite Graph Framework for LLM Training}\label{subsec:traininggraph}

\begin{figure}
    \centering    \includegraphics[width=0.8\textwidth]{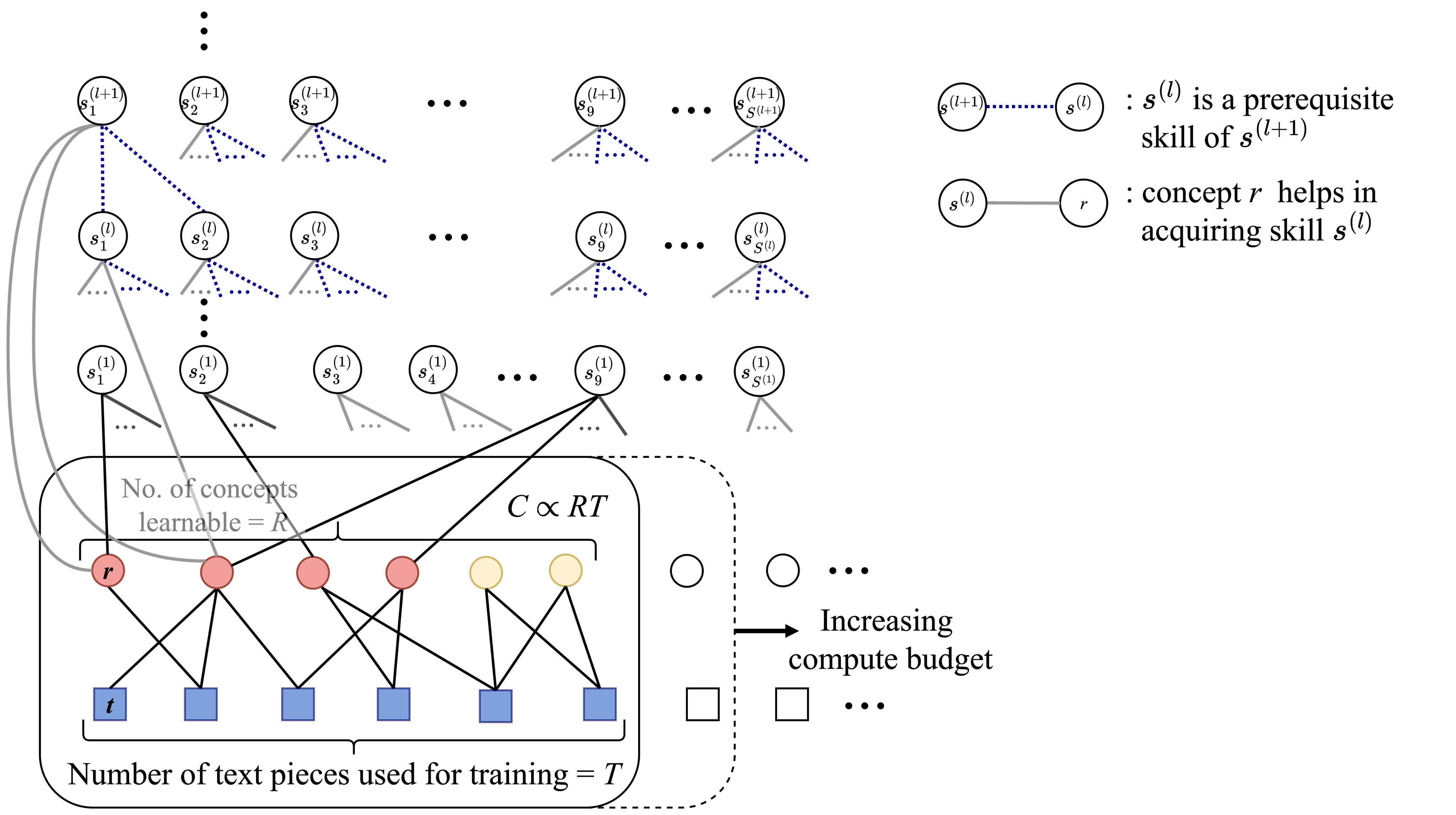}
    \caption{Tripartite graph framework and hierarchical skill graph, adapted from \cite{Nayak2025}. The model is structured as acquiring concepts and skills from text pieces. The lower subgraph depicts a bipartite graph between training text pieces (blue = included, white = unused) and concepts (pink = learned, yellow = learnable, white = unlearnable), with the number of learnable concepts determined by model size. The middle layer connects learned concepts to skills. The upper structure is a hierarchical skill graph, where skills at level \(\ell + 1\) depend on prerequisite skills from level \(\ell\). Training flows bottom-up---from text to concepts, from concepts to skills, and recursively across skill levels---and expands in both depth and width: more text and model parameters allow the model to learn more concepts and acquire a broader set of skills at each level.}
    \label{fig:SkillGraph}
\end{figure}

Consider the hierarchical skill-text tripartite graph framework for LLM training of \cite{Nayak2025} that is shown in Fig.~\ref{fig:SkillGraph}, and that itself builds on \cite{Liao2025semantic,Arora2023, Yu2023skillmix}.  We extend it to support flexible problem-solving strategies, including upward and downward traversal through the skill hierarchy. Moreover, we slightly modify the framework's functional form to better align with empirical trends---specifically, the observation that overtraining a fixed-size model leads to improved but diminishing performance gains. This refinement is especially relevant given inference-time preferences that increasingly favor overtraining. Crucially, these adjustments do not alter the framework's core conclusions, including the Chinchilla-style scaling laws for compute-optimal training, the emergence of capabilities, or the plateauing behavior observed in large-scale models shown in \cite{Nayak2025}.

The prerequisite-based hierarchy introduced in \cite{Nayak2025} models hierarchical skill acquisition by assuming that each skill at level $\ell$ requires $\sigma_\ell$ prerequisite skills from level $\ell-1$. This structure plays a central role in defining how complex capabilities are constructed from simpler components during training. We extend this formulation by allowing tasks to be equivalently solved using skills from higher or lower levels via systematic decomposition (downward traversal) or compression (upward traversal) of skills. For a task that nominally requires $m$ skills at level $\ell$, the number of equivalent skills required at another level $\ell'$ is given by:
\begin{equation}\label{eq:mlprime}
m_{\ell'} =
\begin{cases}
\left\lceil m/\prod_{k=\ell+1}^{\ell'} \sigma_k \right\rceil & \text{if } \ell' > \ell, \\
m & \text{if } \ell' = \ell, \\
\left\lceil m \cdot \prod_{k=l'+1}^{\ell} \sigma_k \right\rceil & \text{if } \ell' < \ell,
\end{cases}
\quad \forall  \ell' \in \mathbb{Z}^{+},
\end{equation}
where we take $\sigma_\ell = \tfrac{1}{2} \ln(\ell)$ to capture an increasing number of prerequisites required at higher layers. This bidirectional traversal capability provides a flexible structural foundation for analyzing the coupling between training and inference strategies. For consistency, we scale the size of the relevant skill set at level $\ell'$ according to the multiplicative rule:
\begin{equation}
    M_{\ell'} = |\mathcal{S}_\mathcal{R}^{(\ell')}| = \left\lceil m_{\ell'}(1+\beta) \right\rceil,
\end{equation}
ensuring that the size of the relevant skill set scales proportionally with the number of skills required at each level. This reflects the intuition that lower layers involve more granular operations and thus require a larger set of candidate skills, while higher layers use more abstract representations and correspondingly smaller sets.

In \cite{Nayak2025}, the composition of skills is determined by their co-occurrence in the training data, mediated by shared concepts and text pieces. However, this formulation fails to capture the empirical trend that performance degrades monotonically as one deviates from the Chinchilla-optimal token-to-parameter ratio $\kappa$. In some cases, the formulation even predicts improved accuracy away from the Chinchilla-optimal $\kappa$, contradicting observed behavior. To address this, we modify the mechanism of skill acquisition to depend directly on the number of learned concepts, without requiring co-occurrence between skills through shared intermediaries. This adjustment ensures that the framework reflects empirical results: for a fixed compute budget, performance declines consistently as one moves away from the Chinchilla-optimal $\kappa$ (see Fig.~\ref{fig:sum_psl_vs_kappa}).

The graph-based framework consists of $R$ concepts, $T$ text pieces, and a hierarchical partitioning of the complete skill set $\mathcal{S}$ into subgraphs $\{\mathcal{S}^{(\ell)}\}_{\ell=1}^L$, where each layer $\ell$ contains $S^{(\ell)}$ skills. Following the knowledge quantization hypothesis \cite{Michaud2023}, the number of learnable concepts $R$ is taken to scale proportionally with the number of parameters $N$, with $N = \zeta R$. Similarly, the number of text pieces $T$ is proportional to the dataset size $D$ (in tokens), with $D = \tau T$. A bipartite graph connects text pieces and concepts, and a second bipartite graph connects concepts to skills, forming a stacked tripartite structure. Given a fixed training compute budget, the goal is to maximize the number of concepts learned from the training data. Drawing from finite blocklength information theory, the probability that a concept is successfully learned from text pieces is given by:
\begin{equation}
    \bar{P_b} = 2 \nu^{*} Q\left(\sqrt{2 R}\frac{(\epsilon^* - \tfrac{1}{2})}{\alpha}\right),
    \label{eqn:ldpc}
\end{equation}
where $\nu^*$, $\epsilon^*$, and $\alpha$ are constants determined by the degree distributions of the text and concept nodes in the bipartite graph, and $Q(\cdot)$ is the complementary Gaussian CDF (see \cite{Nayak2025}). By optimizing over the allocation of parameter count $N$ and dataset size $D$ under a fixed training compute budget $C_{\text{tr}}$, equal scaling of $N$ and $D$ is recovered, and the Chinchilla rule \cite{Hoffmann2022} emerges as training-compute optimal through a finitary analysis \cite{Nayak2025}\footnote{Note that \eqref{eqn:ldpc}, and thus the associated acquisition behavior, does not apply in the overparameterized regime, where the finite blocklength approximation fails. This breakdown is evident in Fig.~\ref{fig:sum_psl_vs_kappa}, which shows a sharp drop in concept learning beyond the Chinchilla-optimal $\kappa$. However, this regime lies outside the scope of current interest.}.

Each concept $r$ is independently connected to a skill $s \in \mathcal{S}^{(\ell)}$ with probability $\xi_\ell = \exp(-\mathfrak{c} \ell / L)$, where $\mathfrak{c} > 0$ is a constant and $L$ is the total number of skill levels. This construction reflects the intuition that advanced skills are associated with fewer concepts than basic ones, making them correspondingly harder to acquire and compose. A skill $s$ at level $\ell$ is considered acquired only if at least $\eta$ of its associated concepts are learned during training.

During pre-training, an LLM acquires concepts from text data, enabling it to build up a hierarchy of skills from basic to advanced. The probability that a skill at level $\ell$ is acquired can thus be lower-bounded as:
\begin{align}
    \Pr\left(s \in \mathcal{S}^{(\ell)} \text{ is acquired}\right)
    &= \Pr\left(\text{at least } \eta \text{ concepts associated with } s \text{ are learned}\right) \nonumber \\
    &\geq \Pr\left(\text{at least } \eta \text{ out of } d^{(\ell)}_\rho \text{ associated concepts are learned}\right),
\end{align}
where $d^{(\ell)}_\rho$ denotes the number of concepts connected to skill $s$, which follows a binomial distribution with mean $\rho R \cdot \xi_\ell$, assuming each concept is learned independently with probability $\rho$.

We upper-bound the number of concepts associated with a skill $s \in \mathcal{S}^{(\ell)}$ using a Chernoff bound. The expected degree of a skill node (i.e., the expected number of connected concepts) is $\xi_\ell R / S^{(\ell)}$, since each of the $R$ concepts connects to each of the $S^{(\ell)}$ skills at level $\ell$ independently with probability $\xi_\ell / S^{(\ell)}$. Applying a Chernoff upper tail bound, we obtain:
\begin{equation}
    d^{(\ell)}_\rho \leq \frac{\xi_\ell R}{S^{(\ell)}} \left( 1 - \sqrt{ \frac{2 S^{(\ell)} \ln(S^{(\ell)} / \rho)}{R \xi_\ell} } \right) := d^{(\ell)}_{\rho, UB},
    \label{eqn:d_rho_UB}
\end{equation}
where $d^{(\ell)}_{\rho, \text{UB}}$ serves as a high-probability upper bound on the number of concepts associated with skill $s$, and $\rho$ is the failure probability. To ensure that the bound is strictly satisfied, we conservatively set $d^{(\ell)}_\rho = 0.99~d^{(\ell)}_{\rho, UB}$. Let $X$ be the number of associated concepts that are successfully learned. Assuming each concept is learned with probability $\bar{P_b}$, the probability that skill $s$ is acquired (i.e., that at least $\eta$ of its associated concepts are learned) satisfies:
\begin{equation}
    \Pr(X \geq \eta)
    \geq  \Pr( \geq\eta \text{ out of } d^{(\ell)}_\rho \text{ concepts are learned})\mbox{.}
\end{equation}

Applying the Chernoff bound from \cite{Nayak2025} to the guaranteed subset of size $d^{(\ell)}_\rho$, we obtain the following lower bound on the probability that a skill is acquired:
\begin{equation}
    \Pr( \geq\eta \text{ out of } d^{(\ell)}_\rho \text{ concepts are learned})
    \geq p_s(\ell) := 
    \begin{cases}
        1-\exp\left(-d^{(\ell)}_\rho D_{KL}\left(\frac{\eta}{d^{(l)}_\rho} \middle\| \bar{P_b}\right)\right) & \text{if } \eta \leq d^{(\ell)}_\rho \bar{P_b}, \\
        \frac{1}{\sqrt{2 d^{(\ell)}_\rho}}\exp\left(-d^{(\ell)}_\rho D_{KL}\left(\frac{\eta}{d^{(\ell)}_\rho} \middle\| \bar{P_b} \right) \right) & \text{if } d^{(\ell)}_\rho \bar{P_b} < \eta < d^{(\ell)}_\rho,\\
        1 & \text{if } \eta \geq d^{(\ell)}_\rho,
    \end{cases}
\end{equation}
which gives a lower bound on the probability that a skill at level $\ell$ is acquired.

Concepts are extracted from text, and skills are acquired hierarchically from these concepts, forming a tripartite structure over text, concepts, and skills. The concept-to-skill mapping is bipartite, with no intra-layer edges. Separately, within each level $\mathcal{S}^{(\ell)}$, we define skill-to-skill connectivity to capture composability. This is distinct from the bipartite structure and applies only within individual levels. Two skills in level $\ell$ can be composed directly if both are acquired and thus all of their respective $\sigma_\ell$ prerequisite skills are also acquired. The corresponding probability of composability is given by:
\begin{equation}
    p_\ell = \Pr(s \in \mathcal{S}^{(\ell)}\text{ is acquired})^2 \Pr(\text{all prerequisites of }s \in\mathcal{S}^{(\ell)}\text{ are acquired})^2 \geq p_s(\ell)^2 \gamma_{\ell-1}^{2 \sigma_\ell}.
\end{equation}
Here, the parameter $\gamma_\ell$ denotes the probability that a skill at level $\ell$ belongs to the giant connected component (GCC) of the corresponding skill layer subgraph. This captures whether skills at that level are globally composable within $\mathcal{S}^{(\ell)}$. Since edges between skills are formed independently, the skill layer subgraph is an Erd\"{o}s-R\'{e}nyi random graph. Then,
\begin{equation}
\gamma_\ell = 1+ \frac{1}{p_\ell S^{(\ell)}}W_0 \left( -p_\ell S^{(\ell)}\exp{\left( -p_\ell S^{(\ell)} \right)} \right),
\end{equation}
where $W_0(\cdot)$ denotes the principal branch of the Lambert $W$ function (see \cite{Nayak2025}). A GCC---and thus the potential for skill composability---emerges in the skill graph at level $\ell$ when the expected degree satisfies $p_\ell S^{(\ell)} > 1$. Below this threshold, the graph consists only of small disconnected clusters; above it, a macroscopic structure forms that enables long-range connectivity across the skill layer. For a given level $\ell$, both $\gamma_\ell$ and $p_\ell$ can be computed recursively, with initial conditions $\gamma_0 = 1$ and $\sigma_1 = 0$. While task-time inference engages only a relevant subset $\mathcal{S}_\mathcal{R}^{(\ell)} \subseteq \mathcal{S}^{(\ell)}$, we assume these are drawn from within the GCC, such that local reasoning inherits global composability. This reflects the premise that training-time exposure enables acquisition of mutually reachable skills, even if only a subset is invoked at inference time.

Then the success probability for a task at level $\ell$ requiring $m$ skills, under a CoT inference strategy with step budget $T_{\text{max}} = \Omega_{\text{budget}}/\omega$, is given by:
\begin{equation}
\psi_{\ell,m}^{\text{CoT}}=\max_{\ell'\in\{1,...,L\}}\gamma_{\ell'}^{m_{\ell'}} \, I_{\hat{\iota}p_s^{(\ell')}}(m_{\ell'},\Omega_{\text{budget}}/\omega-m_{\ell'}+1),
\end{equation}
where $m_{\ell'}$ is defined via the multiple solution strategy rule from the nominal task pair $(\ell, m)$, see \eqref{eq:mlprime}. All $m_\ell'$ skills must be composable within the GCC (with probability $\gamma_{\ell'}^{m_{\ell'}}$) and be successfully located and executed during inference.

Averaging over the task distribution $\phi(\ell, m)$, the expected success probability is:
\begin{equation}
    \Psi^{\text{CoT}}=\sum_{\ell \in \mathfrak{L}, m \in \mathfrak{M}} \phi(\ell,m) \max_{\ell'\in \{1,...,L\}} \gamma_{\ell'}^{m_{\ell'}} \, I_{\hat{\iota}p_s^{(\ell')}}(m_{\ell'},\Omega_{\text{budget}}/\omega-m_{\ell'}+1),
\end{equation}
where $m_{\ell'}$ is defined via the multiple solution strategy rule from the nominal task pair $(\ell, m)$ as in \eqref{eq:mlprime}.

\subsubsection{Accuracy Scaling with Training and Inference Compute}\label{subsec:accuracyscaling}

\begin{figure}[htbp]
    \centering
    \includegraphics[width=0.9\textwidth]{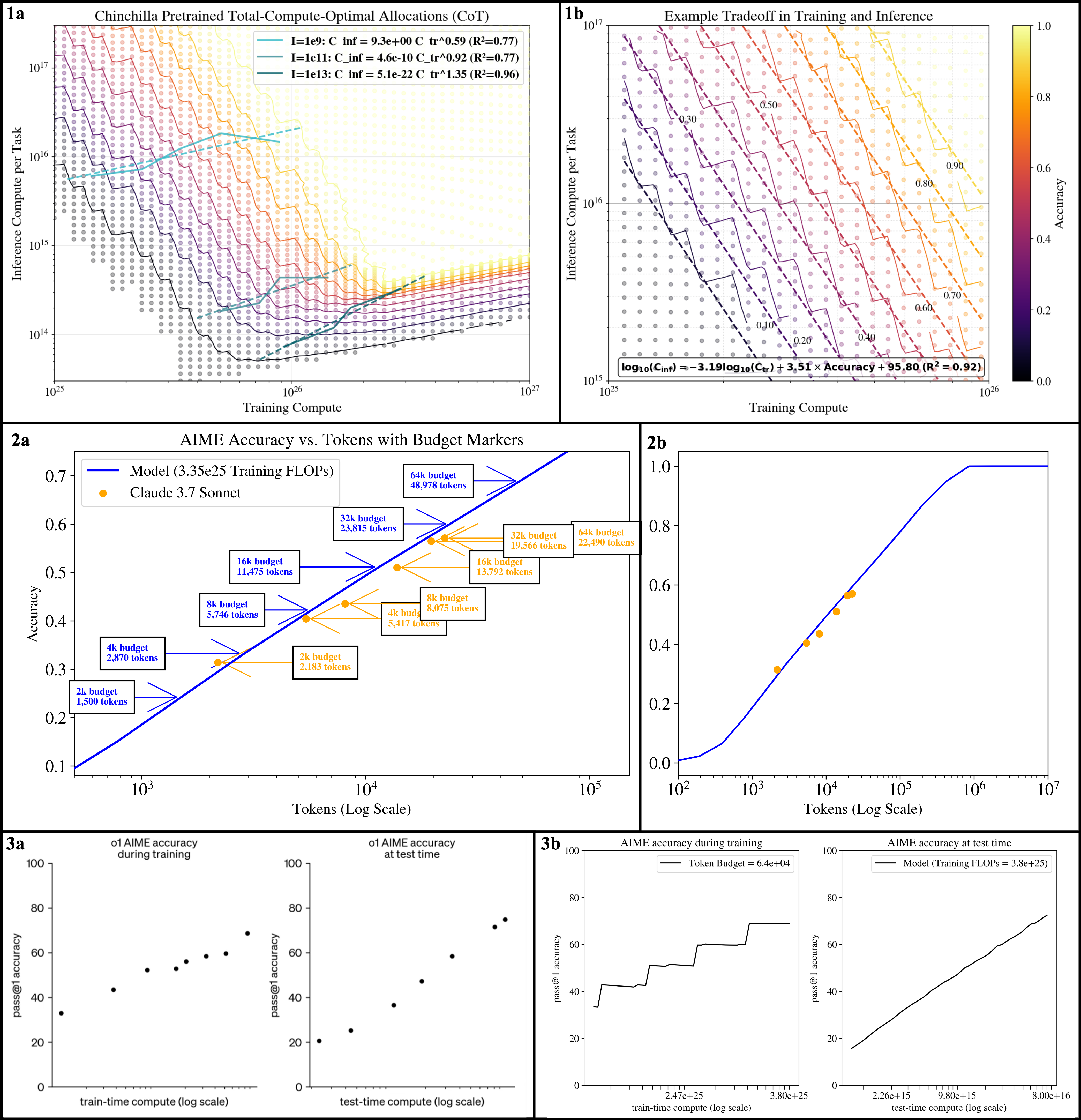}
    \caption{
    1.a. Contour plot of model accuracy under Chinchilla-style pretraining, showing total-compute-optimal allocations for a CoT policy. Parameters are aligned with Claude 3.7 Sonnet \cite{Anthropic2025ExtendedThinking}. Attention costs are not included in inference compute.
    1.b. Approximate log-linear trade-offs between inference and pretraining compute across performance isocontours.  
    2.a. Accuracy scaling of Claude 3.7 on AIME versus log tokens used per task, adapted from \cite{Anthropic2025ExtendedThinking}, using EpochAI-estimated training FLOPs \cite{EpochAI2024Data}.  
    2.b. Same as 2.a, without token budget overlays, extended to broader token regions. 
    3.a. Scaling behavior of OpenAI’s o1 on AIME in both training and inference compute \cite{OpenAI2024o1}.  
    3.b. Reproduction using larger training data estimates \cite{EpochAI2024Data} and 64k token inference budget.
    }\label{fig:AIME}
\end{figure}

To highlight the alignment of our framework with empirical observations, we simulate task performance using a rounded and interpretable set of parameters chosen to reflect plausible values without overfitting. The resulting trends are compared against publicly discussed results\footnote{While Claude 3.7 and o1 may employ mixture-of-experts or non-Chinchilla training regimes, the core log-linear scaling behavior remains. The model class can be adjusted accordingly without altering the qualitative conclusions. Model training FLOPs are estimated \cite{EpochAI2024Data}, and token-to-parameter ratios are assumed to follow Chinchilla-style pretraining.} for Anthropic’s Claude 3.7 Sonnet model \cite{Anthropic2025ExtendedThinking} and OpenAI’s o1 model \cite{OpenAI2024reasoning} on the American Invitational Mathematics Examination (AIME).

Optimal scaling behavior shifts with the number of downstream tasks. As shown in Fig.~\ref{fig:AIME}.1.a, models should allocate proportionally more compute to pretraining when amortized over a larger task load. This shift also alters the total-compute-optimal scaling laws. The plotted regimes reflect realistic, rounded parameters (see Appendix~\ref{ap:claudefit}). Since these are general-purpose LLMs, the precise allocation ultimately depends on the task mix. Fig.~\ref{fig:AIME}.1.b illustrates the fundamental tradeoff: compute can be spent on training or inference, and performance iso-contours are approximately log-linear in this space---consistent with prior findings in the game of Hex \cite{Jones2021} and suggested more broadly for LLMs as well \cite{EpochAI2023}.

Figures~\ref{fig:AIME}.2.a–3.b show that accuracy scales linearly with the logarithm of training compute, inference compute, and tokens used, reinforcing the empirical regularities predicted by our framework.  We show that the first-principles structure of DS3 is not only interpretable and theoretically grounded, but also show in Sections~\ref{subsec:twostep} and~\ref{subsec:beta} that it can unify training and inference scaling through empirical methodologies.
\subsection{Downstream Empirical Fitting Method}\label{subsec:twostep}
We begin with the widely-used empirical model of pretraining loss in \eqref{eqn:pretrainingloss} \cite{Hoffmann2022}, which serves as the foundation for estimating a model’s learned competence. To connect this loss to downstream task performance, we adopt a two-step approach. First, we map pretraining loss to skill-level proficiency via sigmoid transformation \eqref{eq:sig}, following recent work on scaling-to-task transfer \cite{Meta2024llama3, Xiao2024densinglawllms, Owen2024downstreamtasksigmoid, Ruan2024observationalscalinglaws}.

While downstream accuracy over a task distribution need not directly reflect individual task or skill performance, we treat this mapping as a functional ansatz. The qualitative behaviors we investigate rely on two assumptions: (i) that the mapping from pretraining loss to individual skill success probability $\hat{\iota}p$ is monotonic over some regime (i.e., improved training enhances the model’s ability to compose and execute skills) and (ii) that the DS3 formalism meaningfully captures the structure of inference scaling as demonstrated in Section~\ref{subsec:accuracyscaling}\footnote{We focus on the regime where overtraining smaller models continues to improve downstream performance \cite{Meta2024llama3, Devries2023, Gadre2024, Sardana2024}. However, our framework does not preclude analysis of later regimes where overtraining leads to saturation or degradation in transferability \cite{Abnar2021upstreamtodownstreamsaturation, Lin2022inversescalingtruth, Liu2022betterdownstream, Parrish2022biggermorebias, Suzgun2022bigbenchCoT, Wei2023inversescaling, Diaz2024scalinglawsevaluations, Liu2025notjustscalinglaws, Springer2025overtrainedfinetuning, Chen2025rethinkingfinetuning}, which we leave to future work.}.

For simplicity, we omit modeling multiple solution strategies or prerequisite-based structures, as these lack a natural expression in the empirical setting without introducing additional assumptions or empirical justifications. We also exclude attention-related inference costs. We then simulate inference using the DS3 formalism under the skill proficiency profile induced by the pretraining loss.

\subsubsection{Efficiency Benefits of Explicit Token Scaling}
\begin{figure}[!htbp]
    \centering    \includegraphics[width=\textwidth]{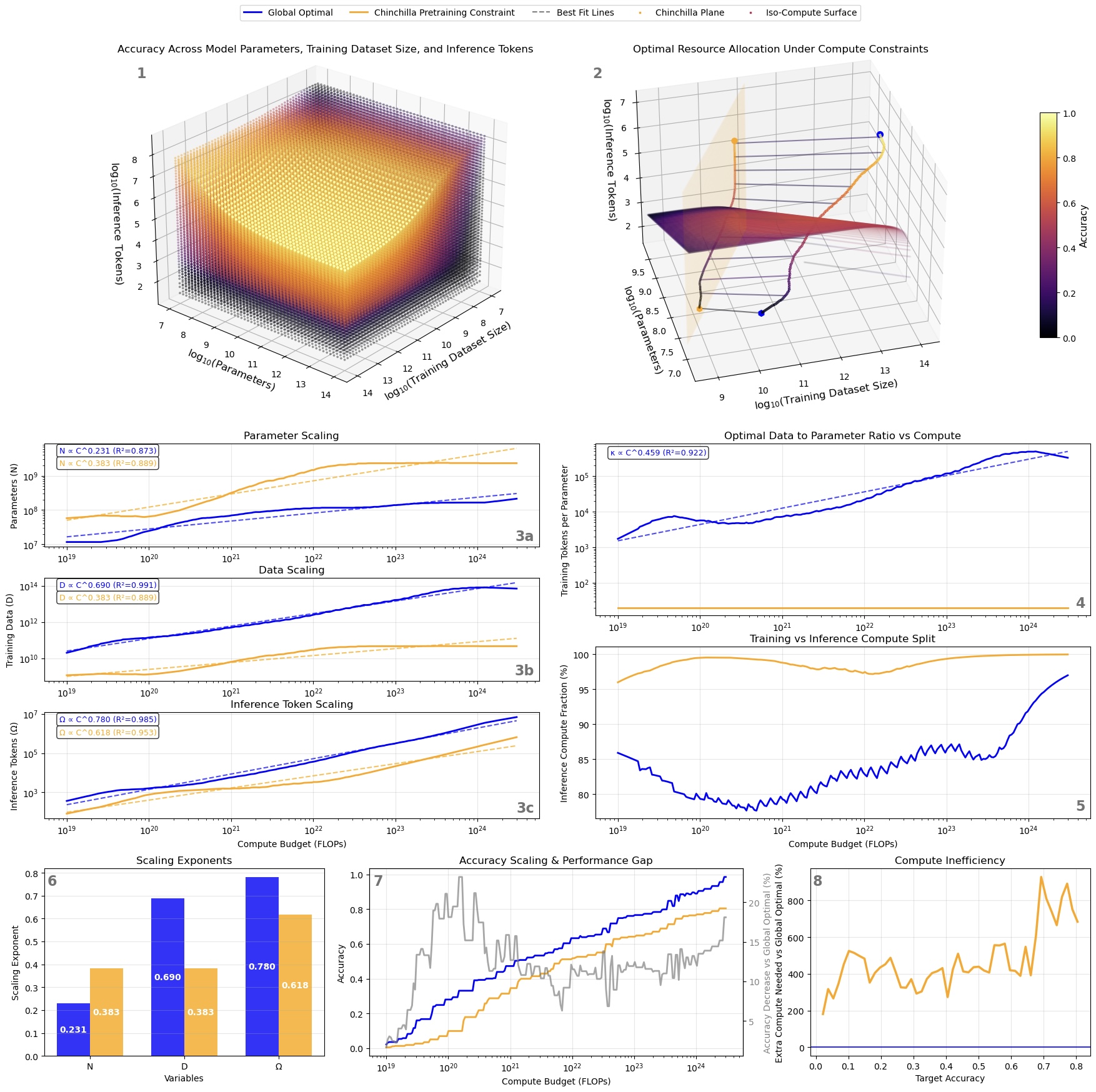}
    \caption{Hypothetical comparison of global vs Chinchilla pretrained using two-step training and DS3 inference theory---see Appendix~\ref{ap:chinchilla_params} for detailed setup and description. (1) Log-space accuracy surface used for optimization. (2) Optimal trajectories on a constant-compute surface: unconstrained global optimum (blue), Chinchilla pretrained and plane (orange, $\kappa=20$). (3a-c) Scaling of parameters $N$, training tokens $D$ and inference tokens $\Omega$ with approximate power-law fits. (4) Optimal data-per-parameter ratio $\kappa$ rises with compute, exceeding the Chinchilla constant. (5) Chinchilla spends a larger share of FLOPs at inference. (6) Fitted power-law exponents show steeper growth in $D$ and $\Omega$. (7) Chinchilla trails the global path in accuracy. (8) Matching accuracy costs significantly more compute, underscoring the inefficiency of the pretraining constraint when inference load and scaling is included.}
    \label{fig:OptimalvsChinchilla}
\end{figure}

Maximizing performance relative to resource expenditure is a central imperative for sustainable AI and efficient system design. This motivates a closer examination of where inefficiencies emerge in standard model scaling practices. As illustrated in the synthetic scenario of Fig.~\ref{fig:OptimalvsChinchilla}, constraining to the two-dimensional plane defined by Chinchilla-style training compute optimality can lead to substantial inefficiency. Our analysis indicates that similar limitations arise in strategies that optimize parameter count and dataset size under a fixed inference token budget, as in \cite{Sardana2024}, since such methods confine scaling to a different two-dimensional slice of the broader design space (see Fig.~\ref{fig:beyondchinchillacomp}). In contrast, allowing joint optimization over model size $N$, dataset size $D$, and inference token budget $\Omega$ enables access to significantly more efficient scaling regimes.

We define the optimal configuration $(N^*(C), D_{\text{tr}}^*(C), \Omega^*(C))$ as the solution to the following constrained maximization problem:
\begin{equation}\label{eq:max_accuracy}
\begin{aligned}
    (N^*(C),\ D_{\text{tr}}^*(C),\ \Omega^*(C)) = \mathop{\arg\max}_{\substack{
        N,\ D_{\text{tr}},\ \Omega \\
        \text{s.t. } 6N D_{\text{tr}} + 2N I \Omega < C \\
        \text{and other constraints}
    }}\ 
    \mathbb{E}_{\phi(\mathcal{L}_0, m)}\left[
        I_{p\bigl(\mathcal{L}(N, D_{\text{tr}}), \mathcal{L}_0\bigr)}\left(m,\ \frac{\Omega}{\omega} - m + 1\right)
    \right]
\end{aligned}
\end{equation}
where $p$ is defined in \eqref{eq:sig}, $\mathcal{L}$ in \eqref{eqn:pretrainingloss}, $I$ is the number of tasks, and $\phi(\mathcal{L}_0,m)$ is the distribution of task difficulties.

\begin{figure}
    \centering    \includegraphics[width=\textwidth]{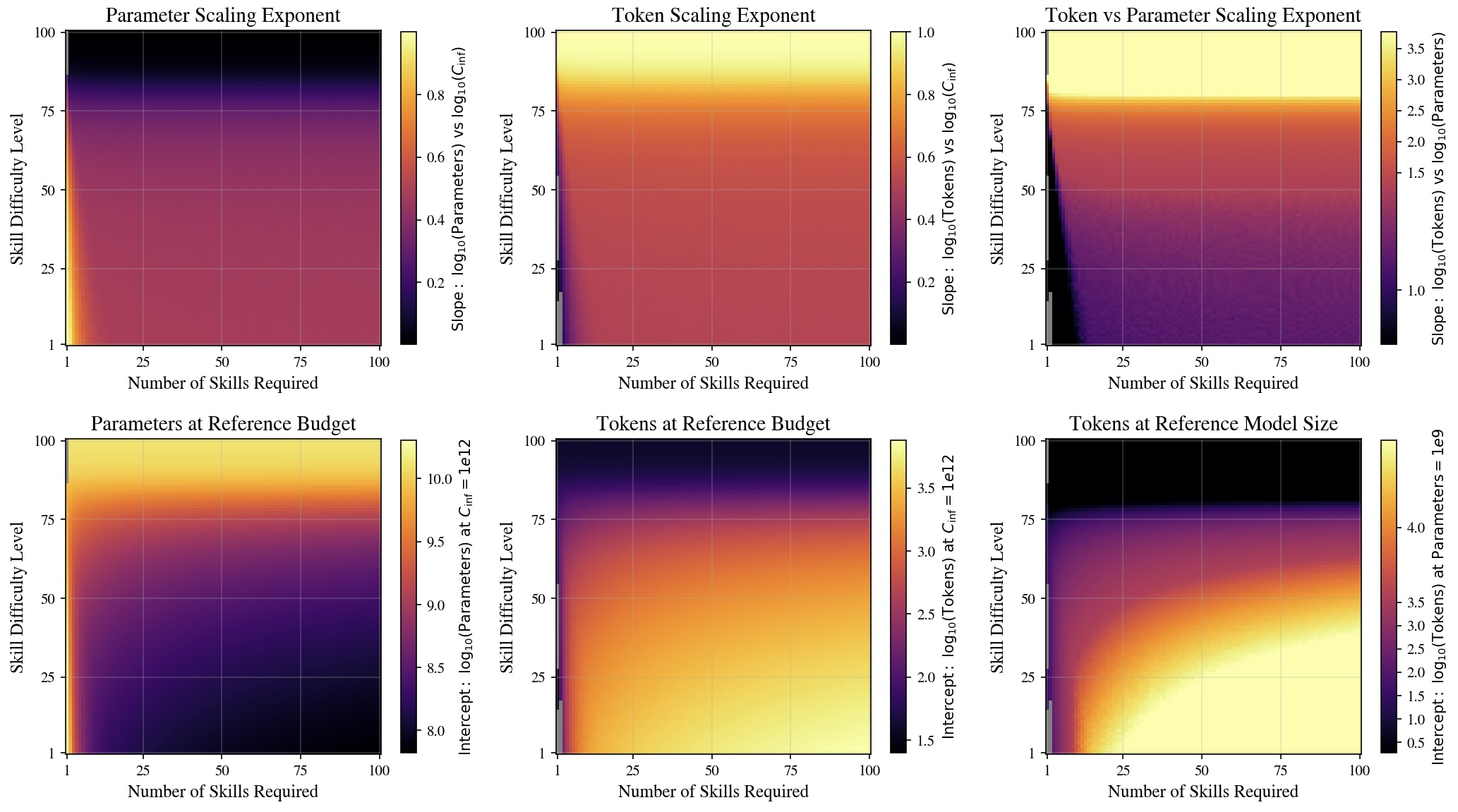}
    \caption{
        Inference-compute-optimal scaling laws across a grid of task difficulties $(\ell(\mathcal{L}_0), m)$ in the infinite training data regime, evaluated under a fixed total inference compute budget $C_{\text{inf}} = 2NI\Omega$ (without attention cost). $\ell(\mathcal{L}_0)$ is a rescaling of the $\mathcal{L}_0$ sweep to provide interpretable task difficulties. Each panel shows a heat map over number of skills and skill difficulty for scaling exponents and reference values. Grayed-out regions are insufficiently valid fits, see Appendix~\ref{ap:infinitedata} for more information.
    } 
    \label{fig:task_difficulty}
\end{figure}

For inference-compute-optimal scaling, dataset size does not affect the cost. Therefore, we consider the case where data is effectively unbounded, to analyze the relationship between model parameters and inference tokens. We analyze this inference-compute-optimal regime across several orders of magnitude, focusing on individual tasks as defined by their difficulty $(\mathcal{L}_0,m)$, as illustrated in Fig.~\ref{fig:task_difficulty} for a range of task difficulties and in Fig.~\ref{fig:inference_scaling_single_task} for a single $(\mathcal{L}_0,m)$ pair. This analysis suggests that tasks requiring more complex skills necessitate high base model capability, and therefore larger model size, but benefit primarily from scaling in inference tokens. In contrast, tasks involving simpler or moderately difficult skills exhibit scaling in both parameters and tokens, with base requirements that vary according to task complexity.

\subsection{Can Reasoning Elicit Emergence?}\label{subsec:reasoning-emergence}
Scaling the depth of reasoning seemingly lowers the model size at which new capabilities emerge. This has been empirically observed through CoT style prompting improving flat or even inverse scaling\footnote{We leave completely flat and inverse scaling for future work but conjecture that this is due to similar properties as discussed in this section. Scaling model size beyond a point may hurt performance by adding relevant but not-required skills without improving reasoning ability.} in model size to positive scaling
\cite{Suzgun2022bigbenchCoT,Wei2023inversescaling}.

\begin{thm}[Reasoning Depth Amplifies Model Scaling Benefits]\label{thm:slope-amplification}

Let $S(N,t) = 1 - (1 - r(N))^t$ denote the probability of successfully completing a skill in $t$ reasoning steps, where $r(N) \in (0,1)$ is a strictly increasing function (i.e., $dr/dN>0$) representing the per-step success probability of a model with $N$ non-embedding parameters. Define the sensitivity of accuracy to model size as the $N$-slope:
\[
\Delta(N,t) \coloneqq \frac{\partial S}{\partial N} = t (1 - r(N))^{t - 1} \frac{dr}{dN}.
\]

Consider how $\Delta(N,t)$ changes with reasoning depth $t$ at a model size $N$, focusing on the regime when $r(N)$ is small and this slope $\Delta(N,1)$ is shallow, corresponding to model sizes where accuracy has plateaued at low reasoning depth. Then increasing $t$ steepens this slope, improving scaling.
\end{thm}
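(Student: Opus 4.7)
The plan is to treat $t$ as a continuous variable and examine $\partial \Delta/\partial t$ directly. Writing $r = r(N)$ for brevity and differentiating the product form of $\Delta$, I obtain
\[
\frac{\partial \Delta}{\partial t} = \frac{dr}{dN}\,(1-r)^{t-1}\bigl[\,1 + t\ln(1-r)\,\bigr].
\]
The sign of this derivative is controlled by the bracket, since $dr/dN>0$ by hypothesis and $(1-r)^{t-1}>0$ because $r\in(0,1)$. The bracket is positive exactly when $t < -1/\ln(1-r)$, giving the precise regime in which $\Delta(N,t)$ strictly increases with reasoning depth.

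Next I would specialize to the small-$r$ regime highlighted in the statement. The Taylor expansion $-\ln(1-r)=r+r^2/2+O(r^3)$ shows that the threshold above is approximately $1/r$, which is large when $r$ is small. Hence throughout this regime $\Delta(N,t)$ is strictly increasing in $t$, independent of how shallow $dr/dN$ itself is. To quantify the amplification I would form the ratio
\[
\frac{\Delta(N,t)}{\Delta(N,1)} = t\,(1-r)^{t-1},
\]
which cancels the $dr/dN$ factor entirely and isolates the purely depth-driven amplification. Expanding $(1-r)^{t-1}=1-(t-1)r+O((tr)^2)$ shows the ratio is approximately $t$ whenever $tr\ll 1$. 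Thus raising reasoning depth from $1$ to $t$ multiplies the $N$-slope by roughly a factor of $t$, turning a plateaued accuracy curve (shallow $\Delta(N,1)$) into one with meaningful gains per added parameter---the desired mechanism by which reasoning elicits emergence.

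I would close by recording the scope of the result: once $t$ exceeds $-1/\ln(1-r)\approx 1/r$ the bracket flips sign and $\Delta(N,t)$ begins to decrease, reflecting saturation of $S(N,t)$ near one. The main obstacle is interpretive rather than technical: making "$\Delta(N,1)$ is shallow" precise and distinguishing it from the smallness of $r$. My reading is that the theorem couples two independent smallness conditions---$r(N)\ll 1$ opens the amplification window in $t$, while a small $dr/dN$ describes the plateau that the amplification is multiplying. Keeping these separate in the exposition, and delimiting $t\lesssim 1/r$ as the regime of validity, is the only non-routine part; the algebra itself follows immediately from a single differentiation and a log expansion.
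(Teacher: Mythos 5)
Your proposal is correct, and it reaches the same conclusion as the paper via a recognizably parallel but distinct route. The paper treats $t$ as a discrete variable and analyzes the ratio $A(t+1,r)/A(t,r) = \frac{t+1}{t}(1-r)$, finding the exact critical depth $t^{\star}(r) = 1/r - 1$ at which $A(t,r) = t(1-r)^{t-1}$ is maximized; it then defines the maximum amplification factor $g(r) = t^{\star}(r)(1-r)^{t^{\star}(r)-1}$ and proves $g$ is strictly decreasing on $(0,\tfrac12]$ with $g(r) \to \infty$ as $r \to 0^+$. You instead treat $t$ as continuous, differentiate to get the bracket $1 + t\ln(1-r)$, and locate the sign change at $t = -1/\ln(1-r)$ (which agrees with the paper's $1/r - 1$ to leading order in small $r$). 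Where the paper isolates the \emph{optimal} amplification $g(r)$, you track the amplification at \emph{generic} $t$ via the ratio $\Delta(N,t)/\Delta(N,1) = t(1-r)^{t-1} \approx t$ for $tr \ll 1$, which makes the linear-in-$t$ boost transparent but leaves the global maximum implicit. Both routes establish the claimed monotonicity in $t$ over the relevant window and its unbounded growth as $r \to 0^+$; the paper's discrete framing gives exact thresholds and a clean closed-form maximum, while yours is more economical and foregrounds the "factor of $t$" heuristic. Your closing remark about decoupling the two smallness conditions ($r$ small opening the window; $dr/dN$ small describing the plateau) is a correct reading of the theorem's hypotheses and matches the intent of the paper's Section~\ref{subsec:reasoning-emergence} discussion.
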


\begin{proof}\label{proof:emergentscaling}
Write $A(t,r)\coloneqq t(1-r)^{t-1}$; then $\Delta(N,t)=A(t,r)\,dr/dN$.
Throughout we keep $r\in(0,1)$ fixed (i.e., not a function of reasoning depth) and treat $A$ as a function of $t$.

Let $r \coloneqq r(N)$ and define the critical reasoning depth
\[
t^*(r) \coloneqq \frac{1}{r} - 1.
\]
Then:
\begin{enumerate}
\item \textbf{Monotonicity in $t$.}\;
      For $r\in(0,\tfrac12)$,
      \[ \Delta(N,t+1)\;\ge\;\Delta(N,t), \qquad t\le t^{\star}(r),\]
      with strict inequality whenever $t<t^{\star}(r)$. Hence $\Delta(N,t)$ increases with $t$ up to $t^{\star}(r)$. Compute the discrete ratio
      \[
      \frac{A(t+1,r)}{A(t,r)}=\frac{t+1}{t}\,\bigl(1-r\bigr).
      \]
      Hence $A(t+1,r)\ge A(t,r)$ if and only if $(1-r)(t+1)/t\ge 1$ and thus $1/r-1\ge t$, i.e., $t^{\star}(r) \ge t$.
      \vspace{.5em}
\item \textbf{Maximum Amplification Factor.}\;
      Define
      \[
        g(r)\;\coloneqq\;\bigl(t^{\star}(r)\bigr)\bigl(1-r\bigr)^{t^{\star}(r)-1}
               \;=\;\Bigl(\frac1r-1\Bigr)(1-r)^{\frac1r-2},
        \qquad r\in(0,\tfrac12),
      \]
      and set $g(r)=1$ for $r\ge\tfrac12$. The maximum of $A(t,r)$ over $t\in \mathbb{N}$ is attained at $t^{\star}(r)$ whenever $r<\tfrac12$; for $r\ge\tfrac12$ it is already maximal at $t=1$. Substituting $t^{\star}(r)$ gives the stated $g(r)$. Then
      \[
        \max_{t\ge1}\Delta(N,t)
        \;=\;g\bigl(r(N)\bigr)\,\frac{dr}{dN},
        \qquad
        \frac{\max_{t\ge1}\Delta(N,t)}{\Delta(N,1)}
        \;=\;{g(r)}.
      \]
\item \textbf{Properties of $g(r)$.}\;
      The map $g:(0,\tfrac12]\to[1,\infty)$ is strictly decreasing and
      \[
        \lim_{r\to0^{+}}g(r)=+\infty,
        \qquad
        g\bigl(\tfrac12\bigr)=1.
      \]
      For $r\in(0,\tfrac12)$,
      \[
      \log g(r)=\log \bigl(\tfrac1r-1\bigr)+\Bigl(\tfrac1r-2\Bigr)\log(1-r).
      \]
      Differentiating,
      \[
      \frac{d \log g(r)}{dr}=-\frac1{r(1-r)}-\frac{\log(1-r)}{r^{2}}-\Bigl(\tfrac1r-2\Bigr)\frac1{1-r}=-\frac{\log(1-r)+r}{r^{2}(1-r)}<0,
      \]
      because $\log(1-r)<-r$ for $r\in(0,1)$.  Thus $\log g(r)$ (and hence $g$) is strictly decreasing. The limit $\lim_{r\to0^{+}}g(r)=+\infty$ is proved in Appendix~\ref{ap:g_limit}, while $g(\tfrac12)=1$ is immediate.
      Thus, for increasingly difficult skills
      (small $r$), reasoning depth can boost the $N$-slope by an arbitrarily large factor, whereas for easier skills ($r\ge\tfrac12$) no boost is possible in the slope (although the accuracy is still increased).
\end{enumerate}
\end{proof}

\subsubsection{Interpretation and Mechanisms}
We identify three conceptual mechanisms through which reasoning depth enhances performance when increasing model size has plateaued, only the first of which is formally established in Thm.~\ref{thm:slope-amplification}.
\begin{itemize}
\item[(i)]
\emph{Model size–to–skill capability mapping:} Thm.~\ref{thm:slope-amplification} demonstrates that, for a model size $N$ where performance has plateaued, increasing reasoning depth $t$ not only increases task success probability $S(N,t)$, but also steepens its derivative with respect to $N$---that is, accuracy improves more sharply with model size under deeper reasoning.
\item[(ii)] \emph{Skill-count bottleneck:} When a task requires $m$ skill applications that are currently available to the model, no number of reasoning steps $t < m$ suffices for task completion. Therefore, the model must scale in size to acquire new skills, yet this may take a significant increase in training. Increasing $t$ beyond $m$ removes this bottleneck, enabling continued scaling through improving current skills required for the task.
\item[(iii)] \emph{Decomposition:} Reasoning steps may also enable the model to decompose complex tasks into sequences of simpler sub-skills, effectively reducing per-step difficulty. This mechanism allows structured inference to compensate for limited base competence, while facilitating continued performance gains with depth, analogous to mechanism (ii) but operating through internal task restructuring.
\end{itemize}

These mechanisms independently reproduce the qualitative pattern in which increased reasoning depth leads to earlier gains in capability. When applied over a distribution of task difficulties, the framework predicts two characteristic behaviors: for sets of tasks with a broad and continuous difficulty distribution, performance improves smoothly as the number of reasoning steps increases; for tasks with discretely spaced or clustered difficulty levels, performance may exhibit sharp, step-like emergence curves. Under a task-centric interpretation of emergence \cite{Nayak2025}, this suggests that reasoning can actively elicit emergent behavior, shifting capability thresholds to smaller scales. In both settings, additional reasoning can enable smaller models to solve tasks that would otherwise remain unsolvable. To push performance further, one can also leverage aggregation techniques that operate over multiple reasoning samples, motivating a closer study of Bo$N$ and MV behavior across heterogeneous task populations.
\subsection{BoN Coverage and MV Saturation}\label{subsec:beta}

In a canonical two-task setting, let \(\phi_1 = \phi_2 = \tfrac12\) denote the prior probabilities of encountering tasks 1 and 2, respectively, and let \(\psi_1 > 0.5\) and \(0<\psi_2 < 0.5\) denote the corresponding probabilities of success on each task. Let the output alphabet $\mathcal{Y}$ have size two (i.e., one correct and one incorrect option). Then the pass@$k$ probability---corresponding to Bo$N$ with an oracle verifier (hereafter simply Bo$N$)---is given by:
\begin{equation}
    \Psi^{\mathrm{Bo}N}(k) =1 - \frac{1}{2}(1-\psi_1)^k - \frac{1}{2}(1-\psi_2)^k.
\end{equation}
This expression reflects the probability that at least one correct answer appears among $k$ independent samples. It increases monotonically with $k$ and saturates at 1 as $k \to \infty$.

By contrast, MV accuracy depends not only on whether the correct answer appears among the outputs, but on whether it constitutes the majority. For a given task $j$ with base success probability $\psi_j$, and assuming a binary output alphabet $\mathcal{Y}$ as before, the number of correct outputs among $k$ independent samples follows a binomial distribution: $Y_j \sim \mathrm{Binomial}(k, \psi_j)$. Under MV with random tie-breaking, the expected success probability is:
\begin{equation}
\psi_j^{\mathrm{MV}}(k) = \sum_{k' = \lceil k/2 \rceil + 1}^{k} \binom{k}{k'} \psi_j^{k'} (1 - \psi_j)^{k - k'}+
\frac{1}{2} \binom{k}{k/2} \psi_j^{k/2} (1 - \psi_j)^{k/2} \cdot \mathbbm{1}_{k \text{ even}}.
\end{equation}
The second term accounts for the case of an exact tie, resolved by random selection. In the large-$k$ limit, the binomial distribution concentrates, and the expected MV accuracy approaches a step function: $\psi_j^{\mathrm{MV}}(k) \to 1$ if $\psi_j > \tfrac{1}{2}$, and $\psi_j^{\mathrm{MV}}(k) \to 0$ if $\psi_j < \tfrac{1}{2}$. Consequently, for a uniform mixture over two tasks, one with $\psi_1 > \tfrac{1}{2}$ and the other with $\psi_2 < \tfrac{1}{2}$, the aggregate MV accuracy saturates at $\Psi^{\mathrm{MV}}(k) \to \tfrac{1}{2}$.

For mixed populations of tasks with varying base success rates, this distinction leads to divergent behavior: Bo$N$ continues to benefit from increased sampling, while MV accuracy can saturate early. The more general MV policy defined in Section~\ref{subsec:MV} accommodates arbitrary output alphabets and distributions over incorrect answers. Suppose task difficulty is parameterized by $\ell$ and $m$, and that the corresponding base success probability, dependent on both model capabilities and inference strategies, is given by $\psi(\ell, m)$. Then the Bo$N$ coverage becomes:
\begin{equation}
\Psi^{\mathrm{Bo}N}(k) = 1 - \int\sum_m (1 - \psi(\ell, m))^k\, \phi(\ell, m)\, d\ell\,
\end{equation}
where $\phi(\ell, m)$ denotes the joint density over task difficulty parameters $\ell$ and $m$.

Equivalently, we may define the induced distribution over base success probabilities, denoted $f(\psi)$, via the pushforward of $\phi(\ell, m)$ under the mapping $\psi = \psi(l, m)$, yielding:
\begin{equation}
\Psi^{\mathrm{Bo}N}(k) = 1 - \int_0^1 (1 - \psi)^k f(\psi)\, d\psi.
\end{equation}

\begin{figure}[ht]
    \centering    \includegraphics[width=\textwidth]{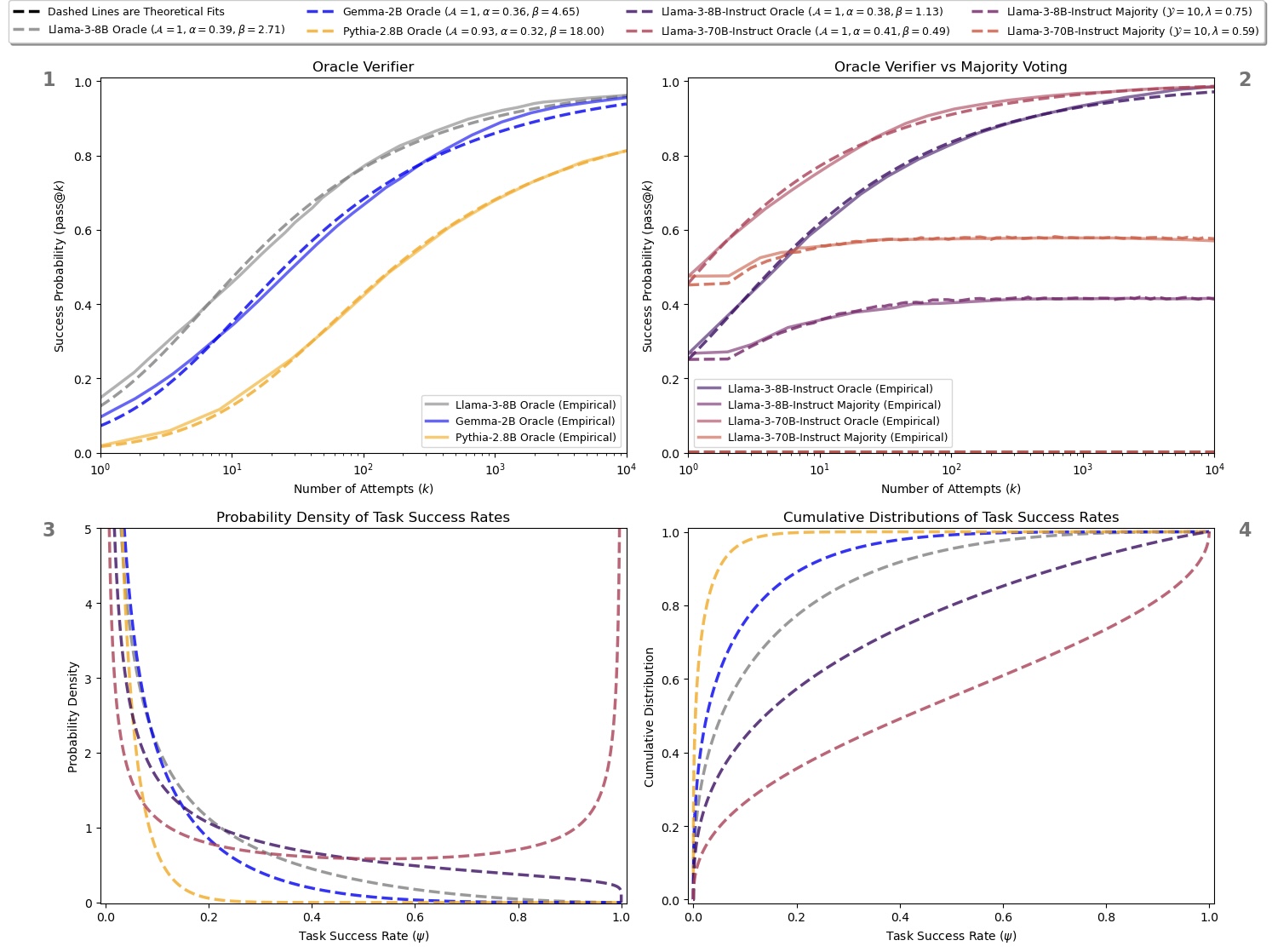}
    \caption{Comparison of oracle verifier and MV accuracy using empirical pass@$k$ and Beta-distributed task success rates. Solid lines are empirical data from \cite{Brown2024repeatedsamplingsscaling} for various models on the MATH benchmark \cite{Hendrycks2021MATH}. (1) Pass@$k$ curves for the Bo$N$ with oracle verifier on three base models; dashed lines the Beta-fit prediction similar to \cite{Levi2024simple}. (2) Same oracle curves (solid) versus majority-vote accuracy (dashed) for the two instruction-tuned Llama-3 models, showing early MV saturation. (3) PDFs of task-level success rates implied by the Beta fits in 1 and 2, highlighting heavy mass near low success rates and, for Llama-3-70B (peach), a secondary peak near high success rates. (4) Corresponding CDFs, illustrating the increasing spread of distribution with model capabilities.}
    \label{fig:passatN}
\end{figure}

A recent theoretical formulation for pass@$k$ proposed by Levi \cite{Levi2024simple} treats the error rate as a Beta-distributed random variable, which asymptotically yields a power-law decay in failure probability. We formally derive this asymptotic behavior in Appendix~\ref{ap:asymptoticBoN}. Motivated by this observation, we represent the induced success probability distribution $f(\psi)$ as a mixture of a Beta distribution and a point mass at $\psi = 0$, representing tasks the model cannot solve\footnote{This can be understood most naturally in our framework as not having the skills learned in the skill graph. Another possible interpretation is that the model does not have enough reasoning steps in the budget to perform the task given the available skills.}. Specifically, let $\mathcal{A} \in [0,1]$ denote the fraction of tasks that are at least partially solvable, and $1 - \mathcal{A}$ the fraction of unsolvable tasks. Then the density becomes:
\begin{equation}
f(\psi) = (1 - \mathcal{A}) \delta(\psi) + \mathcal{A} \frac{\psi^{\alpha - 1}(1 - \psi)^{\beta - 1}}{B(\alpha, \beta)},
\end{equation}
where $\delta(\cdot)$ is the Dirac delta function centered at $\psi = 0$, and $B(\alpha, \beta)$ is the Beta function. Then, the pass@$k$ coverage becomes:
\begin{equation}
\begin{aligned}
\Psi^{\mathrm{Bo}N}(k)
&=1 - \int_{0}^{1}(1-\psi)^k\,f(\psi)\,d\psi,\\
&=\mathcal{A}\left(1 - \frac{B(\alpha,\beta+k)}{B(\alpha,\beta)}\right),
\end{aligned}
\end{equation}
where the second line follows from substituting the Beta form of $f(\psi)$. Swapping the Beta distribution parameters recovers the formulation proposed by Levi \cite{Levi2024simple}, shown to closely match empirical pass@$k$ curves reported by Brown et al. \cite{Brown2024repeatedsamplingsscaling} on the MATH benchmark \cite{Hendrycks2021MATH} for the Pythia-2.8B \cite{Biderman2023pythia}, Llama-3-8B \cite{Meta2024llama3}, and Gemma-2B \cite{Gemmateam2024gemma} models.

We perform our own fit under the same framework\footnote{We report slightly different best-fit parameters likely due to differences in fitting procedure or dataset preprocessing.}, and visualize the resulting curves in Fig.~\ref{fig:passatN}, including the predicted CDF and PDF of task-level success probabilities. Additionally, we extend the analysis to two other models: Llama-3-8B-Instruct and Llama-3-70B-Instruct \cite{Meta2024llama3}. For all models, we observe a pronounced mass of tasks with low success probability. For Llama-3-70B-Instruct, we find a secondary peak near $\psi=1$ indicating a subset of tasks that are likely trivial for the model. This bimodality may reflect the increased base capabilities of the larger model, which solves some tasks with near certainty.

We further ask whether the same framework used for Bo$N$ can also capture MV scaling behavior as a function of $k$. Taking the expectation of the closed-form MV accuracy expression, under the assumption of random tie-breaking and a fixed number of output classes, yields:
\begin{equation}\label{eq:mvfull}
\Psi^{\mathrm{MV}}(k,\mathcal Y)
= \int_0^1 \sum_{\substack{\sum_{j'=0}^{|\mathcal{Y}|}y_{j'}=k}} 
\frac{ \mathbbm{1}\{y_0 = \max_{j\in \{0,\dots,|\mathcal Y|\}} y_j\} }
     { \sum\nolimits_{j=0}^{|\mathcal Y|} \mathbbm{1}[y_j = y_0] }
\cdot \frac{k!}{\prod_{j'=0}^{|\mathcal{Y}|}(y_{j'}!)} \,
\psi^{\,y_0}
\prod_{j=1}^{|\mathcal Y|} q_j(\psi)^{y_j} \,
f(\psi) \, d\psi,
\end{equation}
where $q_j(\psi)$ is the distribution over incorrect outputs.

Assuming that the distribution over incorrect outputs is independent of the base success rate $\psi$, such that $q_j(\psi) = (1 - \psi)c_j$, the resulting expression simplifies when integrating over the Beta form of $f(\psi)$. This leads to a fully specified formula for $\Psi^{\mathrm{MV}}(k, \mathcal{Y})$, parameterized by $(\alpha, \beta, \mathcal{A}, \{c_j\})$:
\begin{equation}
\Psi^{\mathrm{MV}}(k,\mathcal Y)
 =\mathcal{A} \sum_{\substack{\sum_{j'=0}^{|\mathcal Y|} y_{j'} = k}} 
\frac{ \mathbbm{1} \left\{y_0 = \max_{j \in \{0,\dots,|\mathcal Y|\}} y_j \right\} }
     { \sum\nolimits_{j=0}^{|\mathcal Y|} \mathbbm{1}\left\{y_j = y_0 \right\} }
\cdot \frac{k!}{\prod_{j'=0}^{|\mathcal Y|} y_{j'}!} \,
\prod_{j=1}^{|\mathcal Y|} c_j^{y_j} \cdot 
\frac{B(\alpha + y_0, \beta + k - y_0)}{B(\alpha, \beta)}.
\end{equation}
The derivations and supplemental details for \eqref{eq:mvfull}--\eqref{eq:pushforward_map} are provided in Appendix~\ref{ap:mv}.

We apply this formulation to the Llama-3-8B-Instruct and Llama-3-70B-Instruct models \cite{Meta2024llama3}, using pass@$k$ curves from Brown et al. \cite{Brown2024repeatedsamplingsscaling} to first fit the Bo$N$ parameters $(\alpha, \beta, \mathcal{A})$. These fitted parameters are then carried over directly into the MV model without additional tuning. Since the asymptotic MV accuracy is governed by the most frequent incorrect response, we calibrate the  $c_j$ values to match this limit. In particular, we set the dominant incorrect output rate to reproduce the empirically observed MV saturation point. This leads to the constraint:
\begin{equation}\label{eq:cstar}
    \max_{j\in\{1,...,|\mathcal{Y}-1|\}} c_j=\frac{
    I^{-1}\left(1-\frac{P_{\infty}^{\mathrm{MV}}}{\mathcal A};\alpha,\beta\right)
    }{1-I^{-1}\left(1-\frac{P_{\infty}^{\mathrm{MV}}}{\mathcal A};\alpha,\beta\right)},
\end{equation}
where $I^{-1}(y;\alpha,\beta)$, is the inverse of the regularized incomplete beta function, and $P_\infty^{\mathrm{MV}}$ measures MV saturation accuracy.

\begin{figure}[ht]
    \centering
\includegraphics[width=\textwidth]{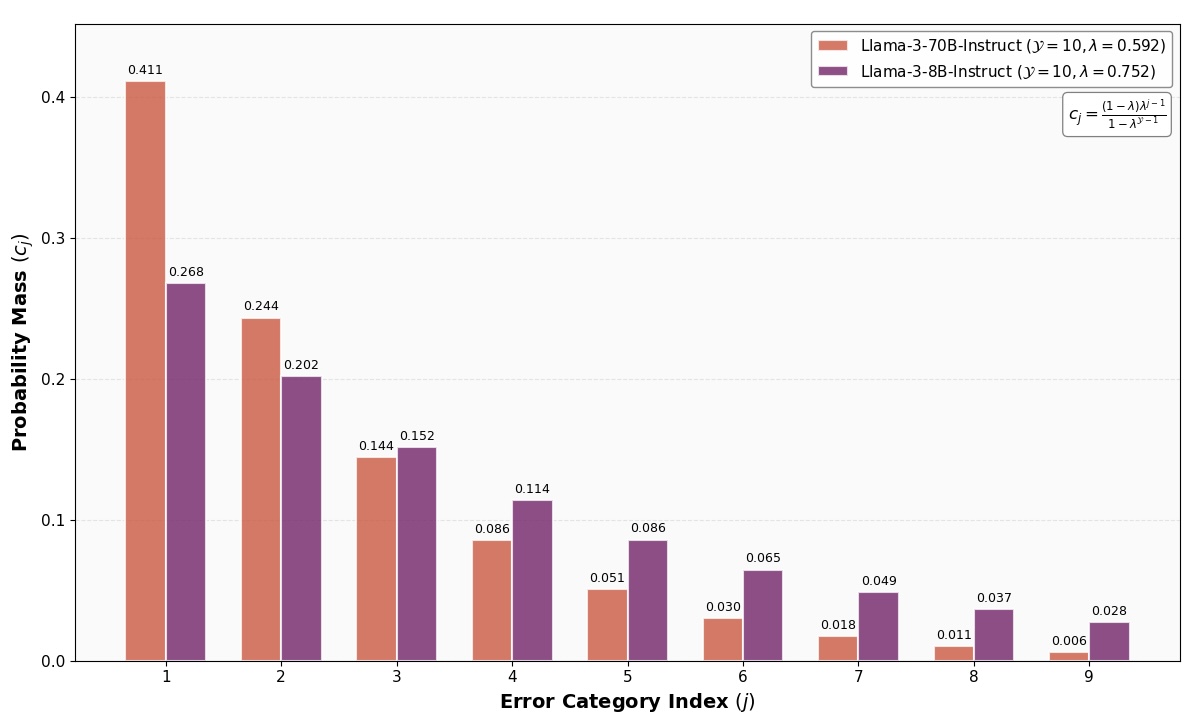}
    \caption{Error distributions used in MV accuracy fits of Fig.~\ref{fig:passatN}, assuming a truncated geometric form over the $|\mathcal{Y}| - 1 = 9$ incorrect response ranks. Each curve shows the categorical error probability $c_j \propto (1 - \lambda)\lambda^{j - 1}$, where $\lambda$ controls how sharply errors concentrate near the top-ranked incorrect responses. For Llama-3-70B-Instruct ($\lambda = 0.592$, peach), most error mass lies in the top-ranked incorrect option; for Llama-3-8B-Instruct ($\lambda = 0.752$, purple), errors are spread more broadly across ranks.
}\label{fig:mv_geometric}
\end{figure}

We then take the distribution over incorrect outputs to be an ordered, truncated geometric distribution:
\begin{equation}
    c_j = \frac{(1-\lambda)\lambda^{j-1}}{1-\lambda^{|\mathcal{Y}|-1}},\quad j\in \{1,...,|\mathcal{Y}|-1\},
\end{equation}
where each $c_j$ is weighted by the task-dependent probability of producing an incorrect answer, ($1-\psi$). This form captures a decaying preference over incorrect outputs, a plausible behavior for language models, which tend to concentrate probability mass on a few high-likelihood completions even when incorrect \cite{Chen2025rethinkingfinetuning}.

Given a fixed alphabet size $|\mathcal{Y}|=10$, we solve for the decay parameter $\lambda$ using the inferred maximum incorrect output probability $c_1$ from \eqref{eq:cstar}. The resulting distribution over incorrect outputs is shown in Fig.~\ref{fig:mv_geometric}. One possible interpretation is that larger models, which tend to achieve lower pretraining loss, not only increase confidence in correct predictions but also exhibit sharper preferences among incorrect ones---an effect reminiscent of overtrained models, as discussed in \cite{Chen2025rethinkingfinetuning}. The final fit to MV accuracy is shown in Fig.~\ref{fig:passatN}.

\begin{figure}[ht]
    \centering
    \includegraphics[width=\textwidth]{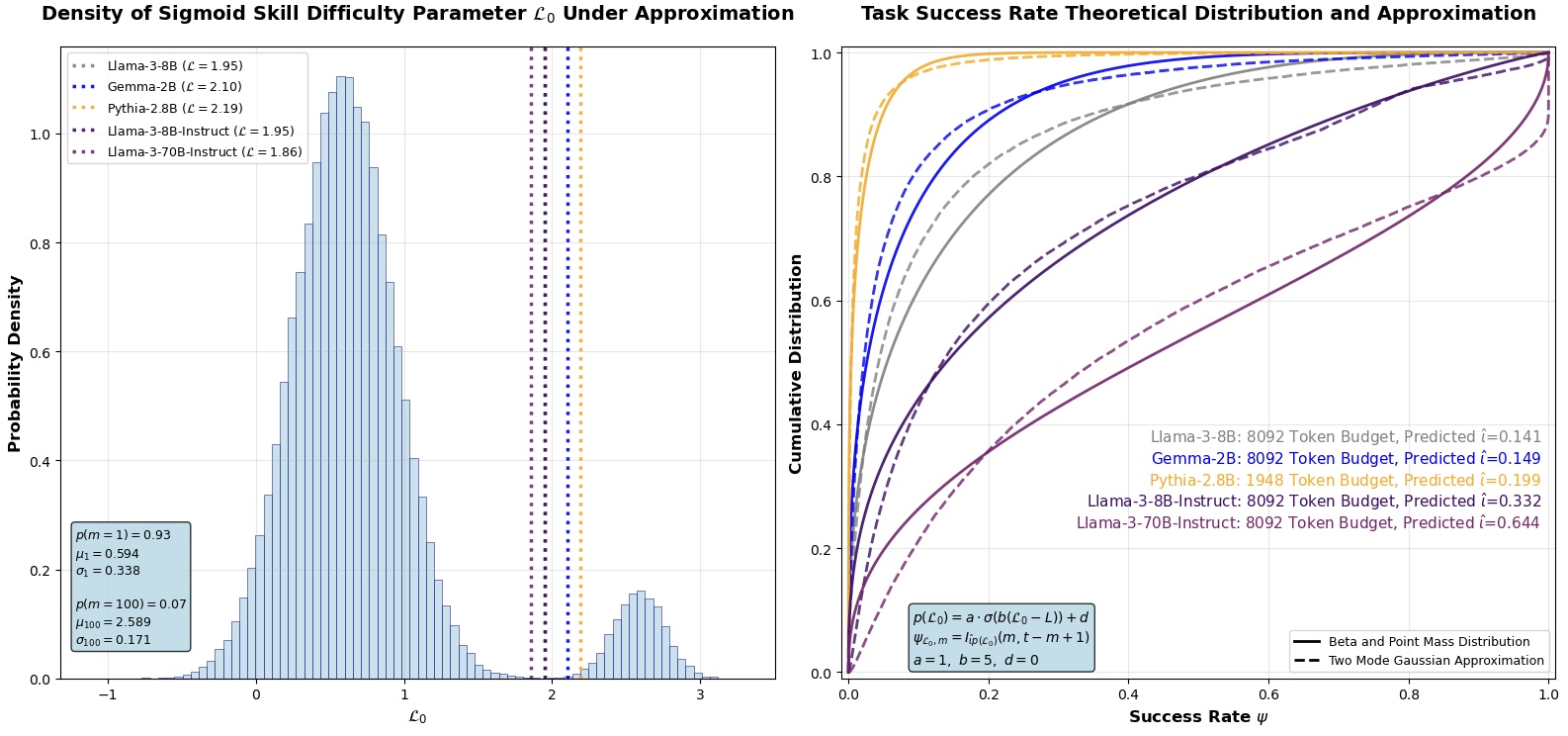}
    \caption{Example of inferred skill distributions and approximate search directionality from pass@$k$ fits. A two-mode Gaussian over task difficulty $\mathcal{L}_0$ (left; histogram with model-specific loss $\mathcal{L}$ markers) is jointly optimized with the search-directionality parameter $\hat{\iota}$. The resulting push-forward CDFs (right, dashed) closely match the Beta+point-mass fits (solid) from Fig.~\ref{fig:passatN}. In-plot labels report the inferred $\hat{\iota}$ values for each model at its token budget.}
    \label{fig:searchdirectionality}
\end{figure}

We further demonstrate, as a proof of concept, that a distribution over task-level parameters can be pushed forward through the two-step sigmoid ansatz to estimate the model-specific search directionality parameter $\hat{\iota}$. Specifically, we assume each task is characterized by a pair $(\mathcal{L}_0, m)$, where $m \in \{1, \dots, 100\}$ denotes the required number of skills, and $\mathcal{L}_0 \sim \mathcal{N}(\mu_m, \sigma_m^2)$ represents the skill difficulty. The resulting success probability $\psi$ is computed via the parametric mapping:
\begin{equation}\label{eq:pushforward_map}
(\mathcal{L}_0, m, \, \hat{\iota}) \mapsto
\psi = I_{\hat{\iota} \cdot \sigma(b(\mathcal{L}_0 - \mathcal{L}))}(m, \Omega_{\max}/\omega - m + 1),
\end{equation}
where $\sigma(z) = 1 / (1 + e^{-z})$ is the logistic sigmoid. This map induces a pushforward distribution $f_\Psi(\psi)$, which we compare against empirical model performance.

The Gaussian mixture parameters $(\mu_m, \sigma_m^2)$ and the model-specific directionality coefficients $\underline{\hat{\iota}} = (\hat{\iota}_1, \hat{\iota}_2, \dots)$ are jointly optimized via maximum likelihood to fit the observed pass@$k$ curves. Despite strong simplifying assumptions, including a two-mode Gaussian task prior and the exclusion of alternative solution strategies, the estimated values of $\hat{\iota}$ are qualitatively consistent with expectations, assigning higher directionality to instruction-tuned models. Results are shown in Fig.~\ref{fig:searchdirectionality}.

As this construction is inherently sensitive to the assumed task distribution and has not yet undergone thorough robustness analysis, we leave a more detailed investigation for future work.

\section{Implications}\label{sec:implications}
\subsection{Distinct Energy Profiles}\label{subsec:energy}
The computational economics of LLMs exhibit fundamental asymmetries between training and inference phases. Contrary to simplified equivalency assumptions as in \eqref{eq:totalcompute}, three inference tokens do not correspond to one training token in terms of computational cost, leading to markedly different resource allocation requirements.

\subsubsection{Usage and Deployment Environments}
Energy prices vary significantly based on time of day, season, generation source, and geographic location, reflecting dynamic demand patterns, resource availability, and regional infrastructure constraints \cite{Borenstein2005, EIAPricing}. Whereas training time can be modulated to be in off-peak energy usage times, inference is generally desired at the time of query for most use cases.

Due to relative predictability of usage, in particular in education \cite{Handa2025}, it may be possible to perform some probabilistic pre-computing and store energy in information as proposed in information batteries \cite{Switzer2022InformationBatteries}. Furthermore, one may choose to tranche intelligence based probabilistically on how much inference-compute scaling is done, governed according to renewable energy availability forming a risk-limited dispatch of (inference-scaled) intelligence \cite{Varshney2014RiskLimitedDispatchKnowledge}, much like energy itself \cite{Varaiya2011RiskLimitedDispatchEnergy}. This can further enable energy savings in lower-priority or deferred inference tiers, where relaxed latency constraints allow for hardware-level energy-saving strategies such as power capping \cite{Zhou2023SustainableSupercomputing}. In line with our findings that optimal compute strategies vary significantly with task difficulty and target accuracy, intelligent routing \cite{Ong2025routellm, Maddireddy2025locoml} to appropriately sized LLMs could yield substantial additional energy savings---a principle reflected in how providers like OpenAI and Anthropic apply differential pricing \cite{OpenAI2024GPT4Pricing} and usage limits \cite{Anthropic2025UsageLimits} for larger models and extended reasoning.

These dynamics are further influenced by the deployment environment---for instance, edge inference on resource-constrained devices and latency requirements, such as drones \cite{Chen2024typeflyflyingdroneslarge}, often involves hardware with limited computational capabilities and energy resources, leading to distinct optimization tradeoffs \cite{Rey2025DroneEdge}.

\subsubsection{Algorithms and Hardware}\label{subsec:algorithmsandhardware}

The discrepancy between training and inference manifests in multiple algorithmic and hardware dimensions. For example, hardware utilization rates during inference often fall significantly below training utilization levels \cite{Pope2023, Korthikanti2023, Sardana2024}, post-training quantization substantially reduces inference computational demands \cite{Frantar2023, Sardana2024}, and processing costs differ notably between input and output token generation \cite{Sardana2024}.

GPU architectures present varying degrees of suitability across the AI computational spectrum. While training workloads prioritize raw computational throughput, inference operations demand a more nuanced balance among several parameters: computational capacity, memory bandwidth, interconnect performance, network throughput, and total memory availability. While memory and interconnect bandwidth have long served as a fundamental bottleneck in conventional computing and AI, the multiplicative factor of these bandwidths are about half that of computational capacity FLOPS (FLOPs per second) forming a compute overhang \cite{Gholami2024}---only exacerbated during inference.

Mixture-of-Experts (MoE) networks reduce feedforward compute by activating only a small subset of expert subnetworks for each token’s layer-wise representation, while leaving attention-related compute growth largely unchanged \cite{Shazeer2017moe, Fedus2022switch, DeepSeekAI2024, DeepSeekAI2025}. This can yield notable inference savings in large models or short contexts, though the gains may diminish on longer sequences where attention and memory costs dominate. During training, expert routing incurs additional coordination overhead, and the overall effect of MoE architectures on higher-level reasoning performance remains nuanced \cite{Jelassi2025mixture, Wang2025two}.

The efficiency profile of inference workloads is substantially influenced by batching strategies. Expanded batch sizes facilitate improved amortization of memory access latencies and enable more efficient computational scheduling, thereby enhancing overall throughput \cite{Pope2023}. However, this theoretical optimization encounters practical limitations in production environments, particularly those requiring low-latency responses or serving online requests, where large batch aggregation proves impractical. This constraint often results in substantial underutilization of available hardware capacity.

Recent advances in model architecture have increasingly targeted memory efficiency, particularly focusing on key-value (KV) cache optimizations. The KV cache represents a critical memory structure that stores the key and value tensors computed for every token in the context during transformer-based inference. The per token cost scales linearly with context length, and thus the sum cost scales quadratically---each additional token requires storing corresponding KV pairs across all attention layers. But algorithmic innovations such as multi-query attention \cite{Shazeer2019} and multi-head latent attention \cite{DeepSeekAI2024} can further reduce this cost. For models with billions of parameters, this creates substantial memory pressure, often hundreds of gigabytes just for the KV cache \cite{DeepSeekAI2024}.

This direct relationship between context length and memory requirements explains why extending context windows places disproportionate demands on memory bandwidth and capacity rather than computational throughput. Efficient KV cache implementations can reduce memory bandwidth consumption by optimizing access patterns, compressing stored tensors, or selectively pruning less relevant entries, enabling longer contexts without proportional cost increases \cite{DeepSeekAI2024}. Specialized hardware approaches like wafer-scale computing architectures further address these bandwidth limitations through massive on-chip memory and optimized KV cache management \cite{He2025}. These developments highlight an emerging paradigm shift toward model architectures specifically designed to minimize memory and bandwidth consumption during deployment---requirements which alters AI policy considerations (e.g., export control frameworks \cite{Somala2025}). While still primarily in research stages of development, spiking neural networks \cite{Xing2025SpikeLLM}, neuromorphic hardware \cite{Merolla2014TrueNorth, Davies2018Loihi}, and even computing with living neurons \cite{Habibollahi2022, Cai2023, EllisMohr2024Directed} provide other pathways to further reducing inference costs.

The choice of inference methodology significantly impacts resource utilization patterns. Sequential reasoning approaches, exemplified by CoT prompting \cite{Wei2023CoT}, increase context length requirements and KV cache demands. Recent work has aimed to help reduce this cost by compressing the reasoning CoT \cite{Cheng2024compressedcot, Shen2025efficientreasoninghiddenthinking, Xia2025cotcompression, Kang2025c3ot, Wu2025moreisless, Lee2025compresscot}. Relatedly, it is unclear how much information is lost in the summarized version of extended thinking which could be useful context in multi-query chats. This raises the question of whether intermediate reasoning steps contain auxiliary signals useful in multi-turn settings. 

Conversely, parallel approaches such as MV and self-consistency sampling \cite{Wang2022self} require multiple simultaneous processing paths, but benefit from batch-level optimizations. Hybrid methodologies like ToT reasoning \cite{Yao2023ToT} attempt to balance these considerations by dynamically pruning reasoning pathways \cite{Wu2025} to maintain manageable memory and computational requirements. Furthermore, inference may include an explicit memory of techniques and heuristics useful for problem solving \cite{Suzgun2025}. In retrieval-augmented generation settings, inference scaling can achieve near-linear performance gains when compute is optimally divided between retrieval and generation components \cite{Yue2024inference}. The choice of inference methodology significantly impacts resource utilization patterns.

As context capabilities extend into the million-token range \cite{DeepMind2024, Meta2025Llama4}, inference costs no longer maintain token-level linearity but instead scale with accumulated KV cache size, as well as associated memory demands. Recent research has proposed revised scaling laws that explicitly incorporate context length as a parameter, revealing significant divergence from Chinchilla optimality. When considering the internal transformer architecture, using fewer query and KV heads while increasing the model size can reduce compute and memory \cite{Chen2025}. These costs become increasingly important as context lengths grow with inference scaling. Altering architectural parameters such as the model width and depth alters latency as well as compute cost. Recent scaling laws have been proposed to consider such architectural parameters, guiding tradeoffs which are of particular importance considering growing context lengths \cite{Bian2025scaling}. Commercial pricing structures reflect these technical realities, with providers implementing context-dependent pricing---OpenAI doubles per-token costs between 8K and 32K context windows \cite{OpenAI2024GPT4Pricing}, while Anthropic explicitly scales usage quotas based on context utilization \cite{Anthropic2025UsageLimits}.

However, this type of autoregressive reasoning over language tokens is not the only form demonstrated in LLMs. It has been suggested that human language is optimized for communication rather than reasoning \cite{Fedorenko2024}. Reasoning tasks across a variety of domains appear to operate independently of classical language-processing regions, according to neuroimaging findings \cite{Monti2007, Monti2009, Fedorenko2011, Monti2012, Amalric2019}. This, among other algorithmic advantages, has prompted researchers to study latent space reasoning \cite{Hao2024, Geiping2025}, where models operate over continuous latent representations rather than projecting onto a single next token. These approaches may reduce redundant layer-wise processing over multiple tokens and alleviate context-length constraints. It remains an open question whether blending autoregressive and latent-space inference enhances reasoning, and what tradeoffs arise as one leans more heavily on one mode over the other.

Likewise, multi-modal models integrating information processing on images, video, or audio can reason through generation and manipulation in these information spaces (e.g., images \cite{Li2025, OpenAI2025ImageThinking}). Concurrently, generation may be improved by inference scaling \cite{Kim2022, Ma2025imageinferencescaling}.

Diffusion language models (dLMs) represent another promising scaling direction. Unlike standard autoregressive decoding, dLMs perform iterative denoising steps, which enables more global control and potential improvements in coherence \cite{Li2022diffusionlm, Han2023dlm, Lou2024discretedlm, Gong2024scalingdlm, Nie2025dLLMs}. While dLMs incur higher computational costs (in terms of FLOPs) than autoregressive models due to repeated forward passes and cache recomputation, they may offer lower latency if the number of parallelizable diffusion steps is smaller than the target output length. The overall efficiency depends heavily on hardware utilization and implementation strategy. Within the autoregressive regime, techniques such as speculative decoding \cite{Leviathan2023} aim to reduce inference latency by decreasing the number of full-model forward passes required during generation.

Inference scaling also can be done by employing multiple agents \cite{Ye2024MultiAgentInfScaling, Jin2025multiagenttesttime, Qian2025scalingmultiagent}. Due to the bi-directional nature of collaboration, non-parallel techniques for multi-agent scaling is dependent on training and inference and thus may need to be understood in a unified manner.

\subsection{Capability Forecasting}\label{subsec:forecasting}
The predictability of loss with training compute has allowed for compute to serve as a reasonable proxy for AI capabilities due to its quantifiability, detectability, excludability, and concentrated supply chain (i.e., chip manufacturing; cf. uranium enrichment for nuclear risk or synthetic nucleic acids for biological risk) \cite{Sastry2024, Shavit2023}. However, there are drawbacks to using compute as a regulatory metric.

There is no universal agreement on how compute should be empirically measured or regulated for a given model. Companies do not necessarily disclose or even track the exact compute used in training their models. Moreover, indirect proxies---such as hardware availability, training duration, or total energy consumption---may be insufficient or misleading. Developers could claim their compute was spread across multiple models or that large portions of compute were used inefficiently or did not contribute meaningfully to the final trained system. This could lead to either a reasonable lack of knowledge when actions are impermissible or an apprehension even when actions are permissible regarding investment, development, or use. Future regulatory advancements may address some of these concerns through a combination of AI governance mechanisms (e.g., hardware enabled governance \cite{Kulp2024}). However, many open questions remain about the feasibility and enforcement of such approaches \cite{Reuel2024}.

More broadly, emerging factors complicate oversight. Decentralized training distributes the computational load across multiple nodes, which is especially relevant given that reinforcement learning---currently the dominant method for scaling inference---requires fewer parameter updates between devices. This approach already has been demonstrated at scale and open-sourced through a fully asynchronous, globally distributed collaborative training across a permissionless network of a 32-billion-parameter model. The training-to-inference compute ratio was reported to be approximately 1:4, and it is anticipated that this ratio will shift even further toward inference with further scaling \cite{PrimeIntellect2025decentralizedreasoning}. Here, we do not specifically consider reasoning training, yet this is another aspect in which energy costs are shifting toward inference.

High inference demands may necessitate access to cloud infrastructure, presenting a potential point of leverage for know-your-customer requirements and other governance mechanisms. However, as models become more efficient and portable through techniques like distillation, there is a growing shift toward edge deployment, where inference can occur on local or privately controlled hardware. This diffusion reduces the visibility and enforceability of centralized oversight, raising challenges for monitoring high-capability model use across diverse contexts. Distillation, which contributes to enhancing the viability of inference scaling \cite{Busbridge2025}, decouples deployment capabilities from the original training run by allowing smaller models to inherit and retain performance through compression. This reduces transparency around where and how performance was acquired. Additionally, a broader supply chain---including cloud compute providers and model weight distributors---complicates attribution and oversight, as contributions to final capabilities may be distributed and difficult to trace, cf.~\cite{VarshneyKS219}. Although open-source models can be incrementally fine-tuned to exceed their initial capabilities, significant performance increases have previously come with increasing training on an order-of-magnitude (OOM) basis---meaning one would need enough data and compute to build their own model regardless. Yet, the additional training done, if any, to introduce inference scaling can be extremely sample- and compute-efficient while still leading to significant performance increases \cite{Muennighoff2025, Wang2025RLoneexample}. Separately, narrow or finetuned models can achieve higher capabilities in specialized domains\footnote{For this reason, among possible contamination and overfitting problems \cite{Zhang2024benchmarkissues, Cohen2025benchmarkissues}, benchmarks commonly used to quantify model performance, while providing useful data, are less fundamental and perhaps less useful in oversight and governance. On a related note, benchmarks are also being redesigned to better consider the improved reasoning capabilities of frontier models (e.g., the BIG-Bench Extra Hard \cite{Kazemi2025bigbenchextrahard} problems are significantly longer and are likely to require much lengthier solutions for correct answers than its predecessor BIG-Bench Hard \cite{Suzgun2022bigbenchCoT}).}. Therefore, regulations have targeted specific types of data (e.g., biological sequence data) \cite{FederalRegister2024Investments}.

Existing regulations, such as certain U.S. outbound investment rules \cite{FederalRegister2024Investments} and the EU AI Act \cite{EUAI2024Article51}, explicitly use training compute thresholds (e.g., \(10^{25}\) FLOPs or integer operations). The assumption is that high FLOPs for training implies high capabilities and risks. However, by not explicitly including inference compute, policymakers risk underestimating a model’s real-world impact. Due to the tradeoff between inference and training compute for performance, it seems prudent to include inference in any measure used to quantify performance. One may leverage this broad-strokes tradeoff in response to a growing desire for an effective compute measure \cite{Heim2024training}. A straightforward, albeit imperfect, way to better approximate capabilities is to take the weighted sum of the logarithms of the training and inference compute:
\[
C_{\text{OOM}} = a\log(C_{\text{tr}}) + b\log(C_{\text{inf}}) \mbox{.}
\]
Adopting a metric that includes inference compute provides a fuller view of a model’s potential. It can help policymakers avoid underestimating capabilities and ensure regulations remain effective as the technology evolves.

\subsection{Security}\label{subsec:safety}

Inference-time compute scaling introduces a range of considerations for both capabilities and security. These advances offer significant benefits, but also may increase the potential for misuse in areas such as cybersecurity or biosecurity \cite{Openai2025o3o4, Anthropic2024claude4}.

Inference-time capabilities, particularly in planning and code synthesis, are likely to accelerate progress in automated research \cite{He2021automl, Zhang2025selfimprovementllm, DeepMind2025alphaevolve}, including through methods such as synthetic data generation \cite{Peng2024, DeepSeekAI2025} and improved prompt engineering \cite{Zhou2022humanlevelprompt}.

OpenAI has reported that ChatGPT-o1’s inference implementation and mitigation techniques contribute to alignment by producing human-readable reasoning traces and increasing opportunities to detect misuse\footnote{This kind of visibility would not be directly available in models that rely primarily on latent-space inference \cite{Hao2024, Geiping2025}.}. However, alignment does not necessarily follow from interpretability or capability. Early signs of deceptive reasoning in ChatGPT-o1 \cite{OpenAI2024o1, Meinke2025} raise questions about how inference scaling may affect security.

In security-relevant deployments, inference efficiency can be critical; latency constraints may favor a single lightweight model, or a set of them, over a more powerful but slower alternative. How to trade off capability, interpretability, and responsiveness remains unresolved.

A structured representation of security-relevant behavior can serve as a basis for analyzing how security outcomes are shaped by, and scale with, inference-time capabilities. We believe that our framework can support analysis of inference-time security scaling by representing such behavior in terms of acquired skills or transitions within a structured reasoning process.

\section{Conclusion}\label{sec:conclusion}
As LLMs continue to advance, understanding and strategically allocating compute across training and inference has become critical for maximizing performance and efficiency. In this work, we have presented a unified framework for compute scaling that integrates both training and inference within a shared mathematical structure. By framing reasoning as a directed stochastic search over a latent skill graph, we demonstrated how key empirical phenomena naturally arise from the interaction between inference dynamics, training structure, and task difficulty. Our analysis considers inference as a foundational axis of model capability---one that increasingly calls for codesign spanning training and inference so as to align algorithms, system architectures, and hardware platforms across the full lifecycle of model development and deployment.

As inference becomes an increasingly dominant use of energy, aligning system performance with dynamic energy availability and usage patterns requires new principles of system-level design. These principles extend beyond energy-aware infrastructure to include algorithmic and hardware innovations that reshape the efficiency profile of inference and open new frontiers in deployment strategy. A clearer understanding of compute dynamics can help guide the design of models, infrastructure, and policies that are more efficient and better aligned with societal needs and constraints. These dynamics give rise to qualitatively distinct optimization regimes, each with its own characteristic scaling behavior.

As in biological systems, increasing scale alters a system’s functional demands \cite{Haldane1926}. When organisms grow larger, they often shift ecological roles and constraints---what was optimal at one scale becomes suboptimal at another. Scaling modifies not only resource tradeoffs, but also the broader integration of artificial intelligence into social, economic, and technical ecosystems, creating a feedback loop in which new capabilities reshape the underlying conditions that govern further scaling.

\section*{Acknowledgment}
We thank Xinbo Wu and Michael Nercessian for discussions on aspects of the DS3 framework. We are grateful to Alex Wadell, Akhil Bhimaraju, and Razan Baltaji for valuable conversations on context length considerations and unconventional inference scaling. We also thank Rainer Engelken for thoughtful feedback.

\bibliographystyle{IEEEtran}
\bibliography{abrv,conf_abrv,arem_lib}

\begin{appendices}
\section{Additional Analysis for CoT vs. ToT(1) Inference Policies}
\label{ap:cotvstot}

This appendix provides a supplemental derivation referenced in Section~\ref{subsec:cotvstot} and used in Proof~\ref{proof:branchingtaskcapability} in Theorem~\ref{thm:task-capability}. It also provides the setup for the numerical analysis in Fig.~\ref{fig:totcotdiff} and shows the individual probabilities---see Fig.~\ref{fig:totcotsuccess}.

\subsection{Asymptotic Derivation for Low-Capability Regime (Prop.~\ref{proof:branchingtaskcapability})}
\label{ap:proof-low}
We begin from the definition of the regularized incomplete beta:
\[
I_x(m,n) = \frac{1}{B(m,n)} \int_0^x u^{m-1}(1-u)^{n-1} du.
\]
In the limit $x \to 0^+$ and $n \gg m$, we expand the integrand:
\[
\int_0^x u^{m-1}(1-u)^{n-1} du \approx \int_0^x u^{m-1}(1 + (n-1)u) du \approx \frac{x^m}{m}.
\]

Next, we use the asymptotic form of the beta function:
\[
B(m,n) = \frac{(m-1)!(n-1)!}{(m+n-1)!}=\frac{(m-1)!}{n\cdot(n+1)...(n+m-1)} \approx (m-1)! \cdot n^{-m}.
\]
Substituting $x = 1 - (1 - \varepsilon)^b \approx b\varepsilon$ and $n = \Omega / (\sqrt{b}\,\omega) - m + 1 \approx \Omega / (\sqrt{b}\,\omega)$, we obtain:
\begin{align*}
\psi^{(b)} 
&\approx \frac{(b\varepsilon)^m}{m!} \left(\frac{\Omega}{\sqrt{b}\,\omega}\right)^m \\
&= \frac{(\varepsilon \,\Omega / \omega)^m}{m!} \cdot b^{m/2}.
\end{align*}

Similarly, for CoT ($b=1$), we have:
\[
\psi^{(1)} \approx \frac{(\varepsilon \,\Omega / \omega)^m}{m!}.
\]

\subsection{Numerical Analysis} \label{subsec:cottotnumerical}
For CoT, the total cost is dominated by attention and is computed using \eqref{eq:CoTTotalCost}. For ToT(1), we equate that with the corresponding cost in \eqref{eq:ToTTotalCost}, plus additional terms reflecting logit accumulation and branching decisions. Specifically, we add a logit term \(\Omega_\text{ToT(1)} + \Omega_\text{ToT(1)} \cdot (b - 1) / (b \cdot \omega)\) to account for the number of tokens contributing to the branch pool, and a comparison term \(\Omega_T / b \cdot \omega / (b - 1)\) to represent the decision cost per branch. The verification cost is assumed negligible, and the verifier is treated as an oracle. We assume a prompt length of $\Omega_{\text{prompt}}=200$ tokens, a model size of 10 billion non-embedding parameters $N$, 96 transformer layers $n_{\text{layer}}$, and an attention dimension of 12,288 $d_{\text{attn}}$. Each branch is allocated $\omega=25$ tokens before re-branching.

\begin{figure}
    \centering
    \includegraphics[width=1\textwidth]{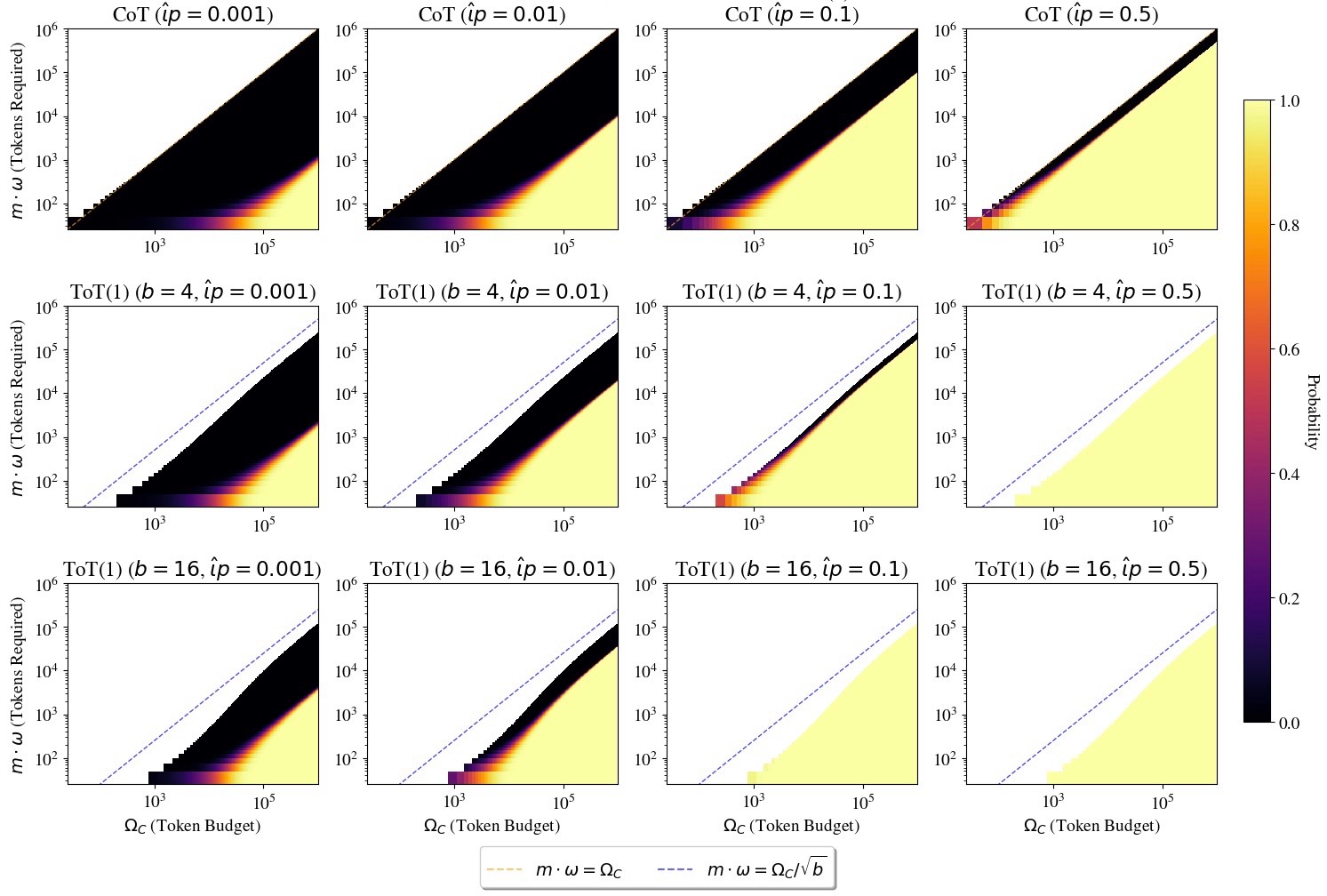}\caption{
    Success probabilities for CoT and ToT(1) across varying token budgets and numbers of required skills. The top row shows CoT; the middle and bottom rows show ToT(1) under increasing branching. On a logarithmic scale, the transition region between near-zero and near-one success for each strategy is narrow yet the differences between strategies are significant.}
    \label{fig:totcotsuccess}
\end{figure}

Fig.~\ref{fig:totcotsuccess} shows the full success probability heatmaps corresponding to the success difference plots in Fig.~\ref{fig:totcotdiff}. The top row displays CoT success rates across varying per-step outcome probabilities \(\hat{\iota}p\), while the middle and bottom rows show the corresponding ToT(1) success rates for branching factors 4 and 16, respectively, under matched total token budgets. These heatmaps provide a direct visual grounding for the advantage regions discussed in the main text. On a logarithmic token scale, the transition region between near-zero and near-one success remains narrow, making comparative gains highly sensitive to both task difficulty and branching strategy.

\section{Skill-Text Framework and Empirical Fit Details}
\begin{figure}[ht]
    \centering    \includegraphics[width=0.5\textwidth]{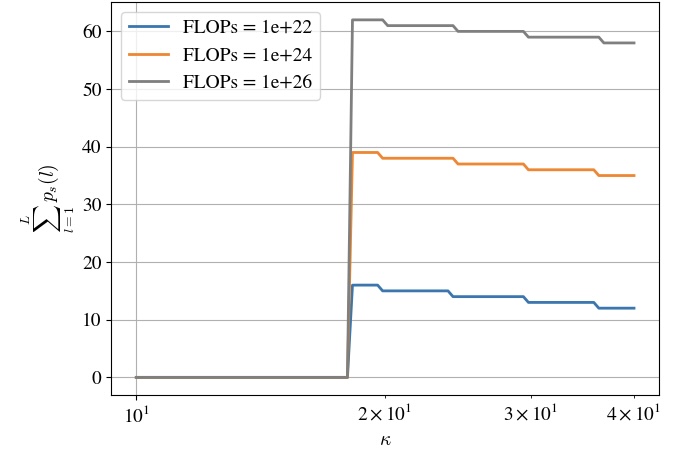}
    \caption{Trained skill competence as a function of the token-to-parameter ratio \(\kappa\), shown across compute budgets. Competence is measured as the sum over all layers of continuous skill learning scores \(p_s^{(\ell)}\), which represent the degree to which each skill is acquired. A sharp transition appears near the Chinchilla-optimal \(\kappa\), beyond which skill competence degrades monotonically due to overtraining. The graph-based framework does not apply in the undertrained regime, which is out of scope for this work.}    \label{fig:sum_psl_vs_kappa}
\end{figure}
The previous skill-text framework \cite{Nayak2025} is adjusted to account for empirically observed overtraining decay of capability beyond the Chinchilla point as demonstrated in Fig.~\ref{fig:sum_psl_vs_kappa}. Undertraining is beyond the scope of this study and mathematical framework.

To generate the total-compute-optimal scaling curves and task-level accuracy predictions shown in Fig.~\ref{fig:AIME}, we adopt a simplified model calibrated to match qualitative features of large-scale pretrained language models, with a focus on alignment with publicly discussed benchmarks for Claude 3.7 Sonnet and OpenAI's o1. While these models may differ internally---e.g., through mixture-of-experts or architectural optimizations---we assume Chinchilla-style dense pretraining and apply rounded parameters to maintain interpretability and analytical tractability.

\subsection*{Claude 3.7 Sonnet Fit} \label{ap:claudefit}
For the Claude 3.7 model (Fig.~\ref{fig:AIME}.2a), we assume a total training FLOP estimate of approximately $3.35 \times 10^{25}$, consistent with EpochAI projections~\cite{EpochAI2024Data}. Task performance is generated by drawing task difficulty levels from the range $l \in [60, 70]$, with a total skill pool capped at $L = 100$. Tasks require between $m = 2$ and $m = 15$ skills, selected from a pool of $S_l = 10^4$ skills per level. Training compute is distributed across $S_l \cdot L$ skills, each consisting of $\tau = 10^4$ tokens. Text pieces are connected to concepts via a bipartite graph with average degree $d_t = 6$. Each concept is associated with $\zeta = 2.5 \times 10^3$ parameters. The resulting parameter count is computed as $N = \zeta \cdot R$, where $R$ is the number of learnable concepts. Each concept is connected to a skill in level $l$ with probability $\exp(-\mathfrak{b}l/L)$, with $\mathfrak{b} = 10$. We adopt a Chinchilla-style training allocation with data-to-parameter ratio $\kappa = 20$. Inference per skill consumes $\omega = 25$ tokens. To reflect real-world task ambiguity and subtask relevance, we assume that only a subset of the skills required are relevant, scaling the required set size by a factor $\beta = 5$, with relevant skill selection directionality coefficient $\delta = 0.5$ such that $\hat{\iota}=\frac{1}{2}+\frac{1}{2}/\lceil m_{l'}(1+\beta) \rceil$ where $m_{l'}$ is given from the interaction of task-dependent probabilities to maximize $I_{\hat{\iota}p_{l'}}(m,T_{\text{max}}-m+1)$.

\subsection*{Solution Strategy Layer Selection}
The preceding analysis assumes reasoning occurs within a single skill layer, but more general strategies---such as mixing layers or adaptive inference---are possible. Practical objectives include maximizing expected accuracy for a fixed step budget, minimizing expected inference cost $\mathbb{E}[C_{\text{inf}}]$, balancing risk-reward across cost and performance, or adaptively searching via least-cost paths. For tractability, we select the layer with highest expected accuracy under a maximum step budget $T_{\max}$. This approach has two main limitations: (1) models may favor minimizing expected steps over maximizing accuracy, as observed in reinforcement learning; and (2) while some tasks are unsolvable in lower layers due to high compositional demand, partial progress in those layers may still be beneficial. Nevertheless, this strategy aligns with inference-constrained deployment scenarios and offers a concise basis for analyzing capabilities. Alternate strategies remain promising directions for future work.

\subsection*{OpenAI o1 Fit}
For the OpenAI o1 model, we use a reported training compute estimate of $3.8 \times 10^{25}$ FLOPs and perform alignment using the AIME dataset benchmarks shown in Fig.~\ref{fig:AIME}.3a–3b. To match observed token-accuracy relationships, we apply the same structural assumptions as above, adjusting only for differences in training and inference token budgets. The DS3 framework applies a per-task inference budget of $\Omega = 64,000$ tokens in Fig.~\ref{fig:AIME}.3b, with model size fixed and test-time accuracy scaling plotted on the right. These align with public reports and highlight similar log-linear patterns to those observed in Claude 3.7. While these empirical fits necessarily abstract from architectural specifics, they allow us to illustrate the generality of the scaling principles under investigation. Importantly, these results are not sensitive to precise values but rather to the monotonicity and structure of the compute allocation space, consistent with the conclusions drawn from our DS3-based modeling framework.

\section{Two-Step Forecasting Methods: Additional Details}\label{ap:chinchilla_params}
For this analysis, we adopt the loss model and fit introduced by Hoffman et al.\cite{Hoffmann2022}, $\mathcal{L}(N,D) = 1.69 + 406.4\cdot  N^{-0.34}+410.7\cdot D^{-0.28}$, and fix the number of downstream tasks to $I=10^9$, optimizing under a total-compute without attention constraint: $C_{\text{total}}=6ND+2NI\Omega$. Skill levels $\mathcal{L}_0$ are defined over the range $[0.1, 4.0]$ with 8 equally sized steps, and each task draws uniformly over $(l, m)$ with $m \in [1, 50]$ skills. We use sigmoid parameters $a = 1$, $b = 5$, and $d = 0$, with the outcome probability per skill depending on the difference between the model-implied loss and task-specific $\mathcal{L}_0$. The tokens per skill step is set to $\omega = 25$.

Chinchilla optimality considers pretraining loss yields approximately equal scaling with compute $D_{\text{tr}}\approx20N$. We maximize accuracy as in \eqref{eq:max_accuracy} on the resulting 2D surface constraint. For stability and analytical tractability, we use a nearest neighbor approach after populating the 3D grid and smooth the log-space curve while preserving the iso-compute constraint through a projection onto the iso-compute surface.

Fig.~\ref{fig:OptimalvsChinchilla} includes: 1. Accuracy surface in logarithmic space. Each point in the $(\log_{10} D,\log_{10}N,\log_{10}\Omega)$ cube is colored by the measured accuracy of the nearest model in the dataset, establishing the response surface on which the subsequent optimization is performed. 2. Optimal resource-allocation trajectories under a fixed total FLOP budget without attention costs \eqref{eq:totalcompute}. The blue curve is the global optimum obtained by unconstrained accuracy maximization; the orange curve follows the Chinchilla rule ($D/N=20$). A translucent iso-compute surface ($5.5\times 10^{21}$ FLOPs mixed color) and the Chinchilla plane (orange sheet) provide geometric context, while connectors (colored by accuracy difference) emphasize the gap at matched budgets. 3a–c. Scaling of parameters $N$, training tokens $D$, and inference tokens $\Omega$ with compute. Solid lines are data. Of note, the curves are not linear in log-log space, but have generally consistent trends marked by dashed lines of the fitted power laws whose exponents and $R^{2}$ are in legends. 4. Optimal training-token-per-parameter ratio $\kappa=D/N$ as a function of compute. Chinchilla is a horizontal line at $\kappa=20$; the global optimum drifts upward at high budgets, favoring more data per parameter. 5. Fraction of total FLOPs spent at inference. The Chinchilla path consistently allocates a larger and dominant share of compute to inference, whereas the global optimum remains more balanced until the largest budgets. 6. Histogram of the fitted power-law exponents from panels 3a–c, highlighting the systematically steeper growth in data and inference tokens. 7. Accuracy curves versus compute. Gray line (right axis) shows the percentage-point accuracy gap demonstrating significant differences. 8. Extra compute required by the Chinchilla model to match the accuracy of the global optimum. A stark penalty is incurred over the entire accuracy range, generally rising with higher accuracy targets underscoring the inefficiency of pretrained constraints when inference costs and scaling is taken into account.

\begin{figure}[htbp]
    \centering    \includegraphics[width=.7\textwidth]{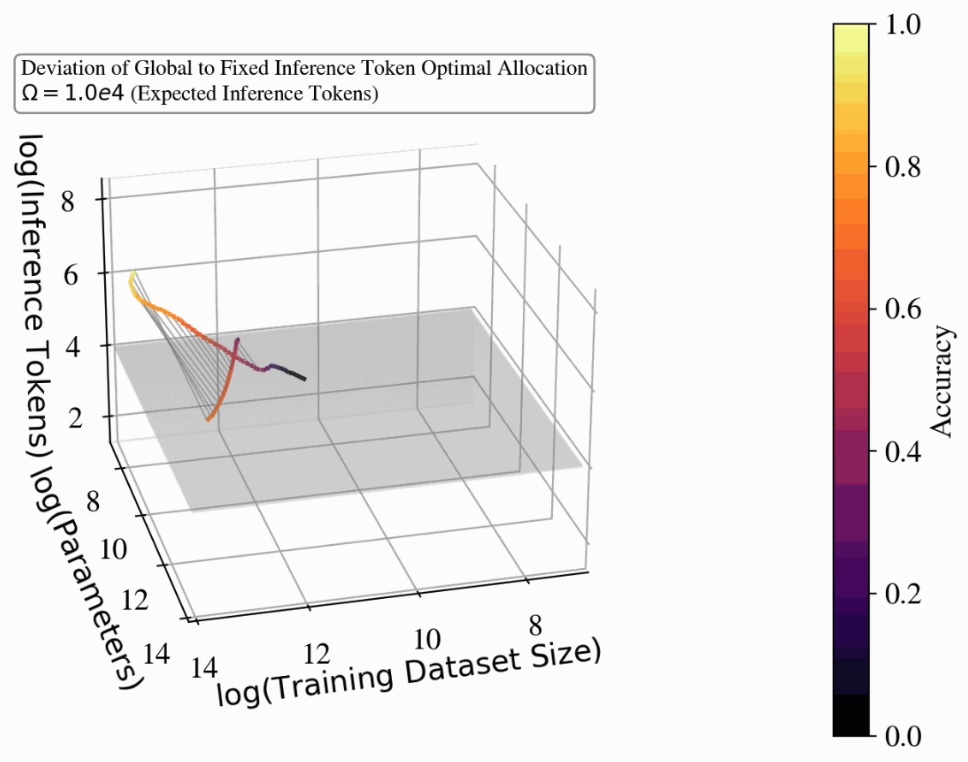}
    \caption{Optimal resource-allocation trajectories under a fixed total FLOP budget without attention costs \eqref{eq:totalcompute}. The unconstrained curve is as described in Fig.~\ref{fig:OptimalvsChinchilla}; the curve on the gray plane follows a fixed token restriction thus optimizing over only parameters and data. Data is colored by accuracy and connecting lines emphasize the gap in optimal resource allocation at matched budgets.}
\label{fig:beyondchinchillacomp}
\end{figure}

Fig.~\ref{fig:beyondchinchillacomp}, illustrates the difference between the global optimal curve and the fixed token constrained optimal which is the result of maximizing accuracy, \eqref{eq:max_accuracy}. Either restriction leads to significant inefficiencies.

\subsection{Infinite Training Data Regime}\label{ap:infinitedata}
The figure in the main text, Fig.~\ref{fig:task_difficulty}, presents inference-compute-optimal scaling across a grid of task difficulties defined by skill level $\mathcal{L}_0$ and number of required skills $m$, under the assumption of infinite training data. Scaling behavior was computed with respect to a total inference compute budget of $C_{\text{inf}} = 2NI\Omega$, neglecting attention cost, with the number of tasks fixed at $I = 10^9$. Pretraining loss was estimated using the empirical model of Hoffman et al.~\cite{Hoffmann2022}, $\mathcal{L}(N,D) = 1.69 + 406.4 \cdot N^{-0.34} + 410.7 \cdot D^{-0.28}$, and mapped to downstream task success probability via a two-step sigmoid function of the form $p = 1 / (1 + \exp[-a(\mathcal{L}_0 - \mathcal{L}(N,D)) + d])$, with parameters $a = 1$, $b = 20$, and $d = 0$. We fixed the number of tokens per skill step to $\omega = 25$, and sampled a grid of 10{,}000 task types from $\mathcal{L}_0 \in [3.0, 1.5]$ (in 100 linear steps such that most difficult 100 represents $\mathcal{L}_0=1.5$ and 1 represents $\mathcal{L}_0=3$) and $m \in {1, \dots, 100}$.
To determine scaling laws, we computed the optimal model size $N$ and inference token count $\Omega$ across 1250 values of total compute, $C \in \log_{10} [10^{8.5}, 10^{17}]$, and performed log-linear regressions of $\log_{10}(N)$ and $\log_{10}(\Omega)$ against $\log_{10}(C_{\text{inf}})$ for each $(\mathcal{L}_0, m)$ pair. We retained only those fits with coefficient of determination $R^2 \geq 0.7$ and at least five valid data points; invalid fits were grayed out in the plots.

The top row of the figure reports the slope of these regressions: parameter scaling exponent (left), token scaling exponent (center), and the token-to-parameter slope ratio (right). The bottom row shows intercepts: model size and token count at a reference compute of $C_{\text{inf}} = 10^{12}$ (left, center), and token count required at a fixed model size (right). Summary statistics over all valid fits are as follows. Parameter slopes ranged from 0.002 to 1.000 with a mean of 0.367, while token slopes ranged from 0.245 to 1.000 with a mean of 0.640. Parameter intercepts ranged from $-1.699$ to $10.092$ (mean: $4.400$), and token intercepts ranged from $-10.422$ to $-0.474$ (mean: $-4.776$). The $R^2$ values for parameter fits had a minimum of 0.724 and a median of 0.998, while token fits had a minimum of 0.700 and a median of 0.999. Out of 10{,}000 fits, 9{,}980 passed the parameter fit threshold and 9{,}871 passed the token fit threshold.

\begin{figure}[htbp]
    \centering    \includegraphics[width=\textwidth]{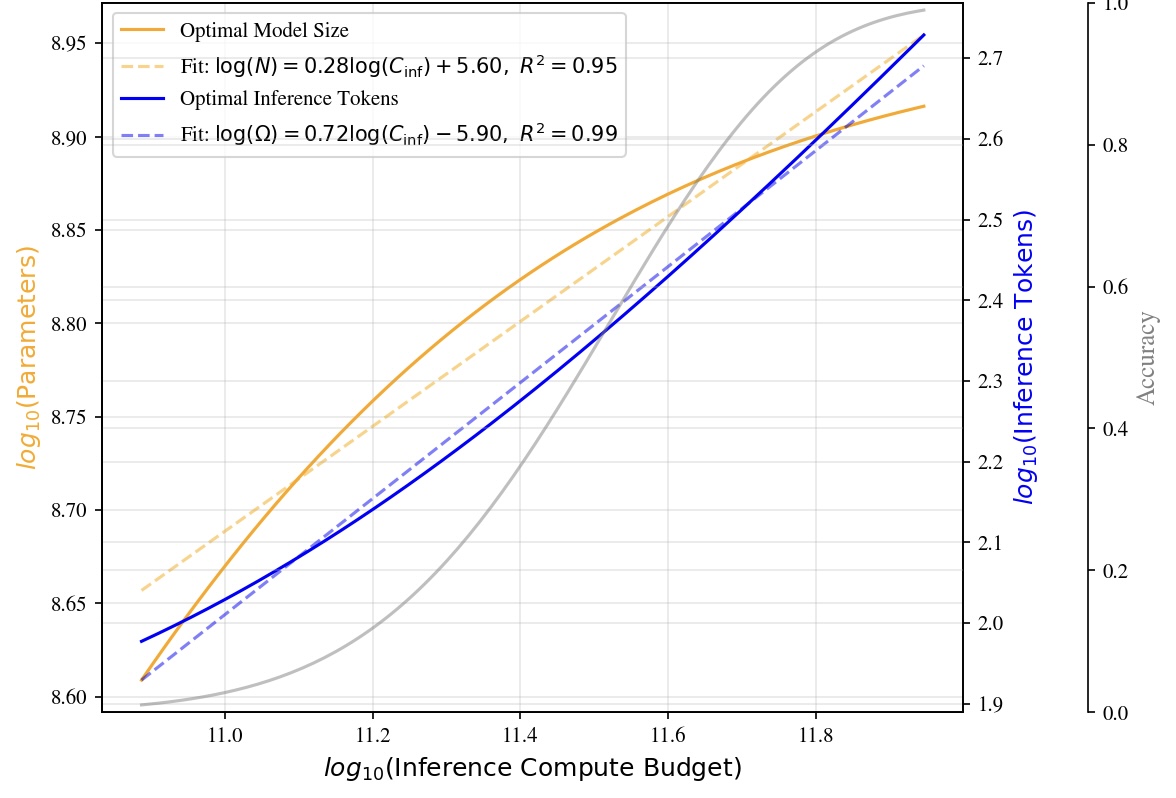}
    \caption{Inference-compute-optimal scaling for a single task in the infinite training data regime, with skill difficulty $\mathcal{L}_0 = 2.0$ and number of skills $m = 5$. The task-level success probability is determined using a sigmoid mapping with parameters $a = 1$ and $b = 20.0$. For numerical stability, skill execution is treated with continuous step sizes rather than discrete token units. The solid curves show optimal model size (orange) and inference tokens (blue) as a function of total inference compute budget $C_{\text{inf}} = 2NI\Omega$, with corresponding accuracy shown in gray (right axis). Dashed lines indicate log-linear fits: $\log_{10}(N) = 0.28 \log_{10}(C_{\text{inf}}) + 5.60$ ($R^2 = 0.95$) and $\log_{10}(\Omega) = 0.72 \log_{10}(C_{\text{inf}}) - 5.90$ ($R^2 = 0.99$). These exponents illustrate the relative contributions of model size and token count to scaling under fixed task complexity.
}
\label{fig:inference_scaling_single_task}
\end{figure}

We demonstrate a single sample fit in Fig.~\ref{fig:inference_scaling_single_task}.

\section{Supplemental Information: Can Reasoning Elicit Emergence?}

\subsection{Proof that \texorpdfstring{$\lim_{x \to 0^+} g(x) = +\infty$}{lim g(x) = infinity}}\label{ap:g_limit}
Here we demonstrate the result of the limit invoked in Proof~\ref{proof:emergentscaling}.

We evaluate the limit
\[
g(x) := \left(\frac{1}{x} - 1\right)(1 - x)^{\frac{1}{x} - 2}, \qquad x \in (0,1),
\]
and show that
\[
\lim_{x \to 0^+} g(x) = +\infty.
\]
We proceed by taking logarithms. Define \( y = g(x) \). Then:
\[
\ln y = \ln\left(\frac{1}{x} - 1\right) + \left(\frac{1}{x} - 2\right)\ln(1 - x).
\]
Noting that
\[
\frac{1}{x} - 1 = \frac{1 - x}{x}
\quad \Rightarrow \quad
\ln\left(\frac{1}{x} - 1\right) = \ln\left(\frac{1 - x}{x}\right) = \ln(1 - x) - \ln x,
\]
we substitute back to obtain:
\[
\ln y = -\ln x + \left(\frac{1}{x} - 1\right)\ln(1 - x).
\]
We now focus on the second term. Let:
\[
L(x) := \left(\frac{1}{x} - 1\right)\ln(1 - x) = \frac{1 - x}{x} \ln(1 - x).
\]
This is an indeterminate form of the type \( \frac{0}{0} \), so we apply L’Hôpital’s Rule:
\begin{align*}
\lim_{x \to 0^+} L(x)
&= \lim_{x \to 0^+} \frac{(1 - x)\ln(1 - x)}{x}
= \lim_{x \to 0^+} \frac{d}{dx}[(1 - x)\ln(1 - x)] \bigg/ \frac{d}{dx}[x] \\
&= \lim_{x \to 0^+} \left( -\ln(1 - x) - 1 \right)
= -\ln(1) - 1 = -1.
\end{align*}
Thus, the logarithm becomes:
\[
\lim_{x \to 0^+} \ln y = \lim_{x \to 0^+} \left( -\ln x - 1 \right) = +\infty.
\]
Exponentiating both sides, we conclude:
\[
\lim_{x \to 0^+} g(x) = \lim_{x \to 0^+} e^{\ln y} = e^{+\infty} = +\infty.
\]

\section{Best of N and Majority Voting}
We provide further details on deriving the asymptotic behavior of Bo$N$ with oracle verifier when assuming the Beta-point mass distribution for task success probability. We discuss model fitting details for both strategies. We also derive the push forward distribution to relate a distribution over skill levels and skills required to the pass@k performance assuming the idealized DS3 model.

\subsection{Asymptotic Behavior of BoN Accuracy}\label{ap:asymptoticBoN}
We analyze the large-$k$ behavior of the pass@$k$ coverage under a Beta-distributed success rate $\psi \sim \mathrm{Beta}(\alpha, \beta)$, given by
$$
\Psi^{\mathrm{Bo}N}(k) = 1 - \int_0^1 (1 - \psi)^k f(\psi) \, d\psi,
$$
where
$$
f(\psi) = (1-\mathcal{A})\delta(\psi)+\mathcal{A} \cdot \frac{\psi^{\alpha - 1}(1 - \psi)^{\beta - 1}}{B(\alpha, \beta)},
\quad \text{and} \quad B(\alpha, \beta) = \frac{\Gamma(\alpha)\Gamma(\beta)}{\Gamma(\alpha + \beta)}.
$$
where $\delta(\psi)$ is the Dirac-delta function. Substituting this into the integral yields:
$$
\Psi^{\mathrm{Bo}N}(k) = \mathcal{A} \left(1 - \frac{B(\alpha, \beta + k)}{B(\alpha, \beta)}\right).
$$
We now expand the Beta ratio using the Gamma representation:
$$
\frac{B(\alpha, \beta + k)}{B(\alpha, \beta)} = 
\frac{\Gamma(\alpha)\Gamma(\beta + k)}{\Gamma(\alpha + \beta + k)} \cdot \frac{\Gamma(\alpha + \beta)}{\Gamma(\alpha)\Gamma(\beta)}
= \frac{\Gamma(\beta + k)}{\Gamma(\alpha + \beta + k)} \cdot \frac{\Gamma(\alpha + \beta)}{\Gamma(\beta)}.
$$
To analyze the large-$k$ behavior, we apply Stirling’s approximation with constants:
$$
\Gamma(z) \sim \sqrt{2\pi}\, z^{z - \frac{1}{2}} e^{-z}, \quad \text{as } z \to \infty.
$$
Apply this to both $\Gamma(\beta + k)$ and $\Gamma(\alpha + \beta + k)$:
$$
\Gamma(\beta + k) \sim \sqrt{2\pi} (\beta + k)^{\beta + k - \frac{1}{2}} e^{-(\beta + k)},
$$
$$
\Gamma(\alpha + \beta + k) \sim \sqrt{2\pi} (\alpha + \beta + k)^{\alpha + \beta + k - \frac{1}{2}} e^{-(\alpha + \beta + k)}.
$$
Take their ratio:
$$
\frac{\Gamma(\beta + k)}{\Gamma(\alpha + \beta + k)}
\sim
\frac{(\beta + k)^{\beta + k - \frac{1}{2}} e^{-(\beta + k)}}
     {(\alpha + \beta + k)^{\alpha + \beta + k - \frac{1}{2}} e^{-(\alpha + \beta + k)}},
$$
since the $\sqrt{2\pi}$ terms cancel. Now simplify the exponentials:
$$
\frac{e^{-(\beta + k)}}{e^{-(\alpha + \beta + k)}} = e^{\alpha},
$$
and simplify the power term using:
$$
\frac{(\beta + k)^{\beta + k - \frac{1}{2}}}{(\alpha + \beta + k)^{\alpha + \beta + k - \frac{1}{2}}}
=
\left( \frac{\beta + k}{\alpha + \beta + k} \right)^{\beta + k - \frac{1}{2}} \cdot
(\alpha + \beta + k)^{-\alpha}.
$$
As $k \to \infty$, we have:
$$
\frac{\beta + k}{\alpha + \beta + k} = 1 - \frac{\alpha}{k} + \mathcal{O}\left(\frac{1}{k^2}\right),
$$
and so:
$$
\left(1 - \frac{\alpha}{k}\right)^{k} \to e^{-\alpha}.
$$
Thus,
$$
\left( \frac{\beta + k}{\alpha + \beta + k} \right)^{\beta + k - \frac{1}{2}} \sim e^{-\alpha},
$$
which cancels the $e^\alpha$ from the exponential ratio earlier. Therefore, the entire expression simplifies to:
$$
\frac{\Gamma(\beta + k)}{\Gamma(\alpha + \beta + k)} \sim (\alpha + \beta + k)^{-\alpha},
\quad \text{as } k \to \infty.
$$
Returning to the Beta ratio:
$$
\frac{B(\alpha, \beta + k)}{B(\alpha, \beta)}
\sim \frac{\Gamma(\alpha + \beta)}{\Gamma(\beta)} \cdot (\alpha + \beta + k)^{-\alpha}.
$$Thus,
$$
\Psi^{\mathrm{Bo}N}(k) \approx \mathcal{A} \left(1 - \frac{\Gamma(\alpha + \beta)}{B(\alpha, \beta)\Gamma(\beta)} \cdot k^{-\alpha} \right)
= \mathcal{A} \left(1 - \frac{\Gamma(\alpha)}{B(\alpha, \beta)} \cdot k^{-\alpha} \right),
$$
since $B(\alpha, \beta) = \Gamma(\alpha)\Gamma(\beta)/\Gamma(\alpha + \beta)$. In the large-$k$ limit, the pass@$k$ coverage exhibits power-law decay:
$$
\Psi^{\mathrm{Bo}N}(k) \sim 1 - C \cdot k^{-\alpha},
\quad \text{with} \quad C = \frac{\Gamma(\alpha)}{B(\alpha, \beta)}.
$$
All exponential and constant terms cancel precisely in the Beta ratio due to the structure of the Gamma function, leaving a clean $k^{-\alpha}$ scaling at leading order.

\subsection{BoN Model Fit}
We fit the theoretical Bo$N$ coverage curve
$$
\Psi^{\mathrm{Bo}N}(k) = \mathcal{A} \left( 1 - \frac{B(\alpha, \beta + k)}{B(\alpha, \beta)} \right)
$$
to empirical pass@$k$ data for each model using nonlinear least squares. Specifically, we minimize the mean squared error (MSE) between predicted and observed values:
$$
\min_{\alpha,\,\beta,\,\mathcal{A}} \sum_{i} \left( \Psi^{\mathrm{Bo}N}(k_i; \alpha, \beta, \mathcal{A}) - \text{pass@}k_i^{\text{(empirical)}} \right)^2,
$$
subject to constraints $\alpha > 0$, $\beta > 0$, and $0 < \mathcal{A} \leq 1$. Initial values are manually selected for each model based on qualitative alignment with the empirical curve, and optimized using L-BFGS-B. The fitting results are visualized by comparing both theoretical predictions and residuals for the initial and optimized parameters. We emphasize that this fitting captures the effective shape of the error distribution over trials. While the Bo$N$ expression arises from an analytically principled model, the fitting procedure here is descriptive and not used for inference or prediction beyond interpolating observed pass@$k$ values.

\subsection{Derivation of the Majority-Vote Saturation Formula and Fitting}\label{ap:mv}
Beginning with the Bo$N$ values for $\mathcal{A},\,\alpha,\,\beta$, we shown the derivation and full model fitting procedure used for majority voting.

With $k$ independent samples, let
$\mathbf y=(y_0,y_1,\dots,y_{|\mathcal Y|})$ be the vote counts for the correct label ($j=0$) and the $|\mathcal Y|-1$ incorrect labels.
Under a per-draw success probability $\psi$ and error spectrum
$q_j(\psi)=(1-\psi)c_j$ (with $\sum_{j=1}^{|\mathcal Y|}c_j=1$), the joint PMF of $\mathbf y$ is the multinomial:
$$
\Pr(\mathbf y\mid\psi)=
\frac{k!}{\prod_{j'=0}^{|\mathcal Y|}y_{j'}!}\,
\psi^{\,y_0}\prod_{j=1}^{|\mathcal Y|}\bigl[(1-\psi)c_j\bigr]^{y_j},
\qquad
\sum_{j'=0}^{|\mathcal Y|}y_{j'}=k .
$$
Majority voting (uniform tie-breaking) declares the answer correct when:
$$
y_0=\max_{j\in\{0,\dots,|\mathcal Y|\}}y_j
\quad\text{and}\quad
\text{a tie if existing among }s
    =\sum_{j=0}^{|\mathcal Y|}\mathbbm 1[y_j=y_0]
\text{ labels is resolved in favor of } y_0.
$$
Hence the conditional success probability is
$$
\Pr(\text{MV correct}\mid\mathbf y)=
\frac{\mathbbm 1 \bigl[y_0=\max_j y_j\bigr]}{
      \sum_{j=0}^{|\mathcal Y|}\mathbbm 1[y_j=y_0]} .
$$
Taking expectation over $\mathbf y$ and over the Beta-point mass prior
$f(\psi)=(1-\mathcal{A})\delta(\psi)+\mathcal A\,\psi^{\alpha-1}(1-\psi)^{\beta-1}/B(\alpha,\beta)$ gives
$$
\Psi^{\mathrm{MV}}(k,\mathcal Y)=
\mathcal A
\int_{0}^{1}
  \sum_{\substack{\sum_{j'=0}^{|\mathcal Y|}y_{j'}=k}}
   \frac{\mathbbm 1[y_0=\max_j y_j]}{
         \sum_{j=0}^{|\mathcal Y|}\mathbbm 1[y_j=y_0]}\;
   \frac{k!}{\prod_{j'}y_{j'}!}\,
   \psi^{\,y_0}\prod_{j=1}^{|\mathcal Y|}[(1-\psi)c_j]^{y_j}\;
   \frac{\psi^{\alpha-1}(1-\psi)^{\beta-1}}{B(\alpha,\beta)}
   \,d\psi
$$
since the delta function expectation is zero. Because $\sum_{j=1}^{|\mathcal Y|}y_j=k-y_0$, the $\psi$-integral is a Beta kernel:
$$
\int_{0}^{1}
  \psi^{\,y_0+\alpha-1}(1-\psi)^{\,k-y_0+\beta-1}\,d\psi
  =B(\alpha+y_0,\beta+k-y_0).
$$
Substituting yields the closed form
$$
\Psi^{\mathrm{MV}}(k,\mathcal Y)=
\mathcal A
\sum_{\substack{\sum_{j'}y_{j'}=k}}
\frac{\mathbbm 1[y_0=\max_j y_j]}
     {\sum_{j=0}^{|\mathcal Y|}\mathbbm 1[y_j=y_0]}
\;
\frac{k!}{\prod_{j'}y_{j'}!}\,
\Bigl(\prod_{j=1}^{|\mathcal Y|}c_j^{\,y_j}\Bigr)
\frac{B(\alpha+y_0,\beta+k-y_0)}{B(\alpha,\beta)}
$$
For fixed $\psi$, the empirical frequencies
$\hat p_0=y_0/k$ and $\hat p_j=y_j/k$ satisfy
$(\hat p_0,\hat p_1,\dots)\xrightarrow{\text{a.s.}}(\psi,(1-\psi)c_1,\dots)$. Majority voting succeeds asymptotically when
$$
\psi > (1-\psi)\max_{1\le j\le|\mathcal Y|-1} c_j
\;\;\Longleftrightarrow\;\;
\psi > r_c,
\qquad
r_c \;=\;\frac{c^{*}}{1+c^{*}},\;
c^{*}:=\max_j c_j .
$$
Therefore the limiting success probability is
$$
P_{\infty}^{\mathrm{MV}}
    \;=\;
    \mathcal A\,
    \Pr_{\psi\sim\mathrm{Beta}(\alpha,\beta)}\bigl[\psi>r_c\bigr]
    \;=\;\mathcal A\,\left(1-
    \Pr_{\psi\sim\mathrm{Beta}(\alpha,\beta)}\bigl[\psi\leq r_c\bigr]\right)\mbox{.}
$$
Since $\psi \sim \mathrm{Beta}(\alpha, \beta)$, the probability $\Pr[\psi \le r_c]$ is exactly the CDF of the Beta distribution at $r_c$, which is $I_{r_c}(\alpha, \beta)$ the regularized incomplete Beta function and defined by:
$$
I_{r_c}(\alpha, \beta) = \frac{1}{B(\alpha, \beta)} \int_0^{r_c} t^{\alpha - 1}(1 - t)^{\beta - 1} dt.
$$
So putting this together:
$$
P_\infty^{\mathrm{MV}} = \mathcal{A} \cdot (1 - I_{r_c}(\alpha, \beta)).
$$
Given an empirical plateau $P_{\infty}^{\mathrm{MV}}$, we set
$$
\Delta := 1-\frac{P_{\infty}^{\mathrm{MV}}}{\mathcal A},
\quad
r_c = I^{-1}_\Delta(\alpha,\beta),
\quad
c^{*}= \dfrac{r_c}{1-r_c}
$$
where for $y=I_{x}(\alpha,\beta)$, the inverse function is $x=I_{y}^{-1}(\alpha,\beta)$. This $c^{*}$ binds the largest single error weight consistent with the observed saturation and the previously fitted Beta prior.

As previously mentioned, any distribution over $c_j$ can be chosen. We assume a truncated geometric spectrum as it demonstrates decaying preference over incorrect outputs. For
$$
c_j = \frac{(1-\lambda)\lambda^{j-1}}{1-\lambda^{|\mathcal Y|-1}},
\qquad
\{j=1,\dots,|\mathcal Y|-1\},
$$
we impose $c^{*}=c_1$ and solve
$$
c^{*}= \frac{1-\lambda}{1-\lambda^{|\mathcal Y|-1}}
\;\;\Longrightarrow\;\;
\lambda = \operatorname{solve}(c^{*},|\mathcal Y|).
$$
We take an alphabet of size $|\mathcal Y|=10$. This fully determines the $c_j$ that enter the finite-$k$ formula $\Psi^{\mathrm{MV}}(k,\mathcal Y)$, enabling Monte Carlo or exact evaluation of $\Psi^{\mathrm{MV}}(k,\mathcal Y)$ for all $k$. For analytical tractability and accuracy, we perform a $10,000$ sample Monte Carlo simulation for each point as for high $k$, the Beta distribution simulation becomes expensive. We do not study or demonstrate extrapolation.

\subsection{Push-Forward Skill Distribution to Estimate Search Directionality}
\label{app:pushforward}
Here, we detail the fitting procedure used to obtain the model–specific search-directionality parameter $\hat{\iota}$ given a simple two-mode Gaussian with parameters
\(\bigl\{\mu_1,\sigma_1,\mu_{100},\sigma_{100}\bigr\}\).

Let \(m\in\{1,100\}\) denote the number of required reasoning steps and let \(\mathcal L_0\) denote the difficulty for a skill as in the two-step sigmoid ansatz. We assume
$$
  f_{m}(m)=0.93\,\delta(m-1)+0.07\,\delta(m-100),
  \qquad
  \mathcal L_0\mid m
    ~\sim~
    \mathcal N\bigl(\mu_{m},\sigma_{m}^{2}\bigr).
$$
where $\delta(\cdot)$ is the Dirac-delta to provide an easy fit to the data since Pythia is step limited and the only model without $\mathcal{A}=1$. This is a simplifying assumption and not necessary. Furthermore, we assume there are not multiple solution strategies such that every task is to be solved with the $\mathcal{L}_0,\,m$ pair given by the distribution. Then, we take a Chinchilla style loss to sigmoid to idealized CoT DS3 mapping to get individual trial success rates:
\begin{align}
  p
    &= a\,\sigma\bigl(b(\mathcal L_0-\mathcal L)\bigr)+d,\quad
     \sigma(z)=\frac{1}{1+e^{-z}}, \quad
      a=1,~b=5,~d=0, \\
  u
    &= \hat{\iota}\,p,\\
  \psi
    &= I_{u}\bigl(m,T_{\max}-m+1\bigr)
\end{align}

Because the map \(\mathcal L_0\mapsto\Psi\) is strictly monotone for each fixed \(m\), the change-of-variables formula gives
\begin{equation}
  f_{\Psi\mid m}(\psi)
  ~=~
  f_{\mathcal L_0\mid m}\bigl(\ell(\psi,m)\bigr)
  \Bigl|
    \tfrac{d\Psi}{d\mathcal L_0}
      \Big|_{\mathcal L_0(\psi,m)}^{-1},
\end{equation}
Then,
\begin{equation}
  \frac{d\Psi}{d\mathcal L_0}
  ~=~
  \hat{\iota}\,a\,b\;
  \sigma\bigl(b(\mathcal L_0-\mathcal L)\bigr)
  \bigl[1-\sigma\bigl(b(\mathcal L_0-\mathcal L)\bigr)\bigr]\,
  \frac{u^{\,m-1}(1-u)^{T_{\max}-m}}{B(m,k)}.
\end{equation}

Combining gives,
\begin{equation}
  f_{\Psi}(\psi)
  =
  (1-\mathcal A)\,\delta(\psi)
  \;+\;
  \mathcal A
  \sum_{m\in\{1,100\}} \Pr[m]\,
  \frac{B(m,k)}{\hat{\iota}\,a\,b}\,
  \frac{
        f_{\mathcal L_0\mid m}\bigl(\ell(\psi,m)\bigr)}
       {
        \sigma\bigl(b(\ell-\mathcal L)\bigr)
        \bigl[1-\sigma\bigl(b(\ell-\mathcal L)\bigr)\bigr]\,
        u^{\,m-1}(1-u)^{k-1}}.
\end{equation}
Here, \(\mathcal A\) is the empirically estimated non-zero spike probability from the Beta and point-mass fit used for Bo$N$. For all models we jointly optimize
\[
  \boldsymbol\theta
  ~=~
  \bigl(\mu_{1},\sigma_{1},
         \mu_{100},\sigma_{100},
         \underline{\hat{\iota}}\bigr)
\]
where $\underline{\hat{\iota}}=(\hat{\iota}_{\mathrm{model\ 1}}, \,\hat{\iota}_{\mathrm{model\ 2}}, \,...)$ by maximum likelihood on the observed pass@$k$ sample. We use a global differential evolution to optimize from `scipy.optimize'. The highly simplified two-mode Gaussian is then demonstrated to capture important features of the distributions and provides a prediction on the search directionality. Extensions could cross reference with increased token budget for individual trials to better estimate the true underlying task distribution.

\subsection*{Tabulated Parameters for Models Considered}
In Table~\ref{tab:modelparams}, we summarize the relevant compute and budget parameters for the models analyzed throughout this study. All models are evaluated assuming a prompt size of 100 tokens. The available output budget is computed as the publicly reported context length minus the prompt size, and the number of allowed reasoning steps \(T_{\text{max}}\) is estimated by dividing the output budget by a constant \(\omega = 25\) tokens per skill step:
\[
T_{\text{max}} = \frac{\text{output budget}}{\omega}.
\]

We report the number of model parameters \(N\), the size of the training dataset (in tokens) $D$, the estimated training loss from the Chinchilla equation $\mathcal{L}(N,D)= 1.69 + \frac{406.4}{N^{0.34}} + \frac{410.7}{D^{0.28}}$, and the number of max reasoning steps permitted under the output token budget $T_{\text{max}}$. This table is used for the empirical fits of pass@($k$) and majority voting curves across models as well as theoretical estimates of search directionality.

\begin{table}[ht]
\centering
\caption{Model Parameters and Effective Step Budgets}
\label{tab:modelparams}
\begin{tabular}{lccccr}
\toprule
\textbf{Model} & \(\boldsymbol{N}\) & \(\boldsymbol{D}\) & \(\boldsymbol{\mathcal{L}}\) & \textbf{Output Budget} & \(\boldsymbol{T_{\text{max}} = \text{output budget}/25}\) \\
\midrule
Llama-3-8B              & \(8 \times 10^9\)  & \(1.5 \times 10^{13}\) & 1.9485 & 8092 & 327.68 \\
Gemma-2B                & \(2 \times 10^9\)  & \(3 \times 10^{12}\)   & 2.1014 & 8092 & 327.68 \\
Pythia-2.8B             & \(2.8 \times 10^9\)& \(3 \times 10^{11}\)   & 2.1906 & 1948 & 81.92  \\
Llama-3-8B-Instruct     & \(8 \times 10^9\)  & \(1.5 \times 10^{13}\) & 1.9485 & 8092 & 327.68 \\
Llama-3-70B-Instruct    & \(70 \times 10^9\) & \(1.5 \times 10^{13}\) & 1.8575 & 8092 & 327.68 \\
\bottomrule
\end{tabular}
\end{table}

\end{appendices}

\end{document}